\documentclass[twoside]{article}

\usepackage{algorithm}
\usepackage{algorithmic}
\usepackage{fancyhdr}
\usepackage[accepted]{aistats2026}

\usepackage{url}
\usepackage{xcolor}
\definecolor{newcolor}{rgb}{.8,.349,.1}
\usepackage{tabularx,comment}
\usepackage{graphicx}
\usepackage{subcaption}
\usepackage{amsmath,amssymb,bm,bbm,mathtools}
\usepackage{verbatim}
\usepackage{enumitem}
\usepackage{epstopdf}
\usepackage[authoryear,round]{natbib}
\usepackage[colorlinks=true,citecolor=blue,linkcolor=blue,urlcolor=blue]{hyperref}

\usepackage{amsthm}

\newtheoremstyle{myplain}
  {3pt}
  {3pt}
  {\normalfont}
  {}
  {\bfseries}
  {.}
  { }
  {}

\theoremstyle{myplain}
\newtheorem{theorem}{Theorem}[section]
\newtheorem{lemma}[theorem]{Lemma}
\newtheorem{corollary}[theorem]{Corollary}

\newtheorem{definition}[theorem]{Definition}

\renewenvironment{proof}[1][Proof]{%
  \par\noindent\textbf{#1. }\ignorespaces
}{%
  \unskip\nobreak\hfill$\square$\par
}

\newcommand{\R}{\mathbb{R}}
\newcommand{\E}{\mathbb{E}}

\DeclareMathOperator*{\argmax}{arg\,max}
\DeclareMathOperator*{\argmin}{arg\,min}
\DeclareMathOperator*{\adv}{adv_A}

\let\epsilon=\varepsilon

\PassOptionsToPackage{numbers, compress}{natbib}

\usepackage{float}
\usepackage{placeins}   
\usepackage[utf8]{inputenc} 
\usepackage[T1]{fontenc}    
\usepackage{hyperref}       
\usepackage{url}            
\usepackage{booktabs}       
\usepackage{amsfonts}       
\usepackage{nicefrac}       
\usepackage{microtype}      
\usepackage{xcolor}         

\begin{document}

\twocolumn[

\aistatstitle{Learnability with  Partial Labels and Adaptive Nearest Neighbors }

\aistatsauthor{
Nicolas A. Errandonea$^{1}$ \ \ \ \ \ \
Santiago Mazuelas$^{1,2}$\ \ \ \ \ \
Jose A. Lozano$^{1,3}$ \ \ \ \ \ \
Sanjoy Dasgupta$^{4}$
}

\aistatsaddress{
$^1$ Basque Center for Applied Mathematics, Bilbao, Spain \\
$^2$ Ikerbasque, Basque Foundation for Science \\
$^3$  University of the Basque Country, Spain \\
$^4$ University of California San Diego, USA\\
\texttt{\{nerrandonea, smazuelas, jlozano\}@bcamath.org}, \ \texttt{sadasgupta@ucsd.edu}
}

]

\begin{abstract}
Prior work on partial labels learning (PLL) has shown that  learning is possible even  when each instance is associated with a bag of labels, rather than a single  accurate but costly label. However, the necessary conditions for learning with   partial labels  remain unclear, and existing PLL methods are effective only in specific scenarios. In this work, we mathematically characterize the settings in which PLL is feasible. In addition, we present PL A-$k$NN, an adaptive nearest-neighbors algorithm for PLL that is effective in general scenarios and enjoys strong performance guarantees. Experimental results corroborate that PL A-$k$NN  can outperform state-of-the-art methods in general PLL scenarios.

\end{abstract}

\section{Introduction}

Partial labels learning (PLL) is a weakly supervised  framework in which each training instance is associated with a bag of labels, rather than the ground-truth label \citep{nguyen2008classification,cour2011learning,tian2023partial}. The goal in PLL is to train a classifier that can accurately predict the true label for unseen instances, overcoming the ambiguity introduced by the bags. A typical example of partial labels arises in medical imaging annotation, where non-expert annotators may provide multiple labels when uncertain about the correct label of an image. Interest in PLL has continued to grow in recent years as it offers an attractive surrogate  to supervised learning, since accurately labeled datasets  often require costly expert annotation. PLL remains a challenging problem due to the diverse ways in which bags can be generated, often resulting in highly ambiguous training data.

Existing theoretical results for  PLL have established distributional conditions that are sufficient to recover the Bayes rule while learning  from bags of labels \citep{cour2011learning,liu2014learnability,cabannes2020structured,lv2020progressive,wen2021leveraged,lv2023robustness}. Among these,  \cite{cour2011learning} introduced the \hbox{label-aligned} condition as a realistic  and intuitive assumption for the bags of labels. Specifically, for label-aligned bags,  the most probable label of each instance is also the most probable label to appear in its bag of labels obtained at training. The label-aligned  condition encompasses most of the other proposed  assumptions as special cases.  Moreover, \cite{cour2011learning} suggested that  the label-aligned condition is also intuitively necessary for learning in partial labels settings, though no formal theoretical results were provided. Despite substantial progress  identifying sufficient conditions for recovering the Bayes rule, the fundamental properties that enable learning in a partial labels setting remain unclear.

Multiple  successful algorithmic approaches for PLL have been proposed in recent years. Existing  methods   obtain information from the training data by   either  trying to disambiguate the ground-truth label from each bag \citep{zeng2013learning,liu2012conditional,xu2023progressive,jia2024partial}, or treating all the labels in the bags as  ground-truth labels \citep{hullermeier2006learning,cour2011learning,zhang2015solving,zhou2018geometric,cabannes2020structured}. A popular approach among the latter methods is the
nearest neighbors \hbox{($k$-NN)} algorithm \citep{hullermeier2006learning}, which   predicts an instance’s label as the most frequent 
label among the $k$  neighboring bags.

  Existing PLL methods can obtain accurate classification and provide strong performance guarantees only in specific scenarios.  The approaches in \cite{hullermeier2006learning,cour2011learning,zeng2013learning,cabannes2020structured,feng2020provably,wen2021leveraged,xu2023progressive,jia2024partial} are based on the assumption that the bags of labels always contain the ground-truth label. This assumption  often breaks down in real-world scenarios, where annotators may  omit the correct label, producing  noisy partial labels \citep{cid2012proper}. The methods in \cite{feng2020provably,wen2021leveraged,lv2023robustness} rely on the assumption  that the bags' characteristics are the same for all the instances, thereby not considering  cases where the bag generation process varies across the  instance space. Moreover, the theoretical guarantees for  the existing    methods are  established  under  additional restrictive assumptions, such as assuming deterministic labels, and  only  cover  specific cases of label-aligned bags \citep{cour2011learning,cabannes2020structured,feng2020provably,wen2021leveraged,lv2023robustness}.

 In this work, we mathematically characterize the conditions that can enable  learning in a partial labels setting. In addition, we  introduce   PL A-$k$NN,  an  adaptive nearest neighbors algorithm  for partial labels that is   effective in general  scenarios, and enjoys strong theoretical guarantees. The main contributions of the paper are listed as follows:

 \begin{itemize}
\item We characterize the PLL settings in which it is possible to overcome the ambiguity introduced by the bags and recover the Bayes rule of the underlying distribution.
\item We propose PL A-$k$NN, an adaptive nearest-neighbor approach for PLL that tailors the number of neighbors for each instance according to the  bags of the surrounding instances. 

\item We show that PL A-$k$NN is Bayes consistent for any case  of label-aligned bags, and 
that no other algorithm can be  consistent under  more general PLL  scenarios  than  PL A-$k$NN. 

\item The experimental results demonstrate that  \hbox{PL A-$k$NN}  can outperform state-of-the-art methods in general scenarios. In addition, PL A-$k$NN  can achieve accuracies on pair with the ideal nearest neighbors method that uses the optimal number of neighbors.

  \end{itemize}    

\section{Learnability with partial labels and the label-aligned condition} \label{sec:identifibiality}

\subsection{Preliminaries}

The  instance space  is  considered to be a finite dimensional normed space, denoted as $\mathcal{X}$. Let \hbox{$\mathcal{Y}=\{1,2,\dots,c\}$} represent the label space and $\mathcal{S}=2^{\mathcal{Y}}$ represent the space of the  bags of labels. Data are assumed to be drawn i.i.d.  from  an unknown  distribution $P$ over $\mathcal{X}  \times \mathcal{Y} \times  \mathcal{S}$.  However, only pairs  of instances and  bags    $(x,s) \in \mathcal{X} \times  \mathcal{S} $ are available for learning. The  goal in PLL is to learn a classifier 
$h:\mathcal{X} \to \mathcal{Y}$ from a set of  $n$ training examples $\{(x_l,s_l)\}_{l=1}^n$.

The risk of a classification rule $h$ is the probability with which the rule misclassifies a tuple $(x,y)$ drawn i.i.d. from distribution $P$, that is:
\begin{align*}
    &\mathcal{R}(h)= \mathbb{E}_{(x,y) \sim P} \left[ \mathbb{I}\{h(x)\neq y\} \right]. 
\end{align*}
The Bayes risk $\mathcal{R^*}$ is the smallest possible risk and is achieved by the Bayes rule $h^*$, that predicts the most probable label for each instance. An algorithm is Bayes consistent if its  associated classifier asymptotically achieves the Bayes risk as the number of training instances grow.

For each instance $x$, the distribution  over bags  is linked to the label distribution $P(y|x)$ as follows
\begin{align*}
P(s|x)=\sum_{y\in \mathcal{Y}} P(s|y,x)  P(y|x).
\end{align*}
We refer to the conditional probabilities $P(s |y, x)$ as the bag generation process, since they describe how bags are generated given instance-label pairs.  A partial labels setting is determined by a bag generation process. 

For each instance $x\in\mathcal{X}$, the bag generation process is characterized by the  $|\mathcal{Y}|$ vectors \hbox{$\mathbf{p}_{1,x},\mathbf{p}_{2,x},\ldots,\mathbf{p}_{|\mathcal{Y}|,x}\in\mathbb{R}^{|\mathcal{S}|} $} given by
\begin{align} \label{vect}
\mathbf{p}_{i,x} &=
\begin{bmatrix}
P(s_1 | i, x) \\
P(s_2 | i, x) \\
\vdots \\
P(s_{|\mathcal{S}|} | i, x)
\end{bmatrix}
\in \mathbb{R}^{|\mathcal{S}|} \mbox{ for } i\in \mathcal{Y}.
\end{align}

 Each component of the vector corresponds to the probability of generating a bag in  $  \mathcal{S}=\{ s_1,s_2,...s_{|\mathcal{S}|}\}$ when the true label is $i$.

\subsection{ Learnability with partial labels}
Learning in a partial labels setting is not always feasible,  as even   an infinite amount of samples may not be enough  to  recover  the Bayes rule of the underlying distribution. Specifically,  two  probability distributions can share the same distribution over bags but yield  different Bayes rules.  Previous work has proposed several sufficient conditions for recovering the Bayes rule \citep{cour2011learning,liu2014learnability,cabannes2020structured,lv2023robustness}, but the  conditions that enable  learning with partial labels  remain unclear.

 As discussed above, the Bayes rule cannot be recovered in  partial labels  settings in which distributions with different Bayes rules result in the same distribution over bags.  Therefore, we  define a bag generation process  as reconstructible  if   any  two  probability  distributions  that abide by  it and   share     the same distribution over bags  also  yield the same Bayes rule. If a bag generation process is reconstructible, it is possible to recover the Bayes rule from the distribution over bags.

\begin{definition}
A bag generation process  $P(s|y,x)$ is   \textit{reconstructible} if  for all $x \in \mathcal{X}$ we have that  $\sum_{y\in \mathcal{Y}} P(s|y,x)Q_1(y)=\sum_{y\in \mathcal{Y}} P(s|y,x)Q_2(y)$ for    \hbox{$Q_1,Q_2 \in \Delta( \mathcal{Y})$} implies 
\[
\arg\max_{y \in \mathcal{Y}} Q_1(y)
=
\arg\max_{y \in \mathcal{Y}} Q_2(y).
\]
\end{definition}

The following theorem  characterizes  the necessary and sufficient condition that a bag generation process has to satisfy to be reconstructible. 

\begin{theorem} \label{reconstructibility}
A bag generation process $P(s | y, x)$ is reconstructible if and only if, the  vectors
$\mathbf{p}_{1,x}, \mathbf{p}_{2,x}, \dots, \mathbf{p}_{|\mathcal{Y}|,x}\in\mathbb{R}^{\mathcal{|S|}}$ in  \eqref{vect}  are linearly independent for any $x$.
\end{theorem}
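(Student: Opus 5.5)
The plan is to handle the two directions separately and, for the ``only if'' direction, argue by contraposition. Throughout, fix $x$ and write $M_x\in\mathbb{R}^{|\mathcal{S}|\times|\mathcal{Y}|}$ for the matrix with columns $\mathbf{p}_{1,x},\dots,\mathbf{p}_{|\mathcal{Y}|,x}$. Identifying $Q\in\Delta(\mathcal{Y})$ with a vector $\mathbf{q}\in\mathbb{R}^{|\mathcal{Y}|}$, the induced bag distribution is $M_x\mathbf{q}$. Reconstructibility then says that $M_x\mathbf{q}_1=M_x\mathbf{q}_2$ forces equal argmax sets. I will read $\arg\max$ as a set, so ties are handled automatically.

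\emph{Sufficiency.} Suppose the columns of $M_x$ are linearly independent for every $x$. Then $M_x$ is injective, so $M_x\mathbf{q}_1=M_x\mathbf{q}_2$ gives $M_x(\mathbf{q}_1-\mathbf{q}_2)=0$ and hence $\mathbf{q}_1=\mathbf{q}_2$. The two distributions coincide, so their argmax sets coincide trivially.

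\emph{Necessity.} Suppose that for some $x$ the vectors are linearly dependent. Then there is a nonzero $\mathbf{v}\in\mathbb{R}^{|\mathcal{Y}|}$ with $\sum_{y} v_y\mathbf{p}_{y,x}=0$. The key observation is that each $\mathbf{p}_{y,x}$ is a probability vector over $\mathcal{S}$, so its entries sum to $1$. Summing the components of the relation $\sum_y v_y\mathbf{p}_{y,x}=0$ therefore gives $\sum_y v_y=0$. Because $\mathbf{v}\neq 0$ and its entries sum to zero, we get $\max_y v_y>0>\min_y v_y$.

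Now define $\mathbf{q}_1=\tfrac{1}{c}\mathbf{1}+\epsilon\mathbf{v}$ and $\mathbf{q}_2=\tfrac{1}{c}\mathbf{1}-\epsilon\mathbf{v}$ for a small $\epsilon>0$, for instance $\epsilon<1/(c\max_y|v_y|)$. Both vectors sum to $1$ and are entrywise positive, so both lie in $\Delta(\mathcal{Y})$. Moreover $M_x\mathbf{q}_1-M_x\mathbf{q}_2=2\epsilon M_x\mathbf{v}=0$, so the two distributions induce the same bag distribution at $x$. However, $\arg\max \mathbf{q}_1=\arg\max_y v_y$ while $\arg\max\mathbf{q}_2=\arg\min_y v_y$. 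These sets are disjoint, because the maximum of $\mathbf{v}$ is strictly positive and the minimum strictly negative. Hence the argmax sets differ, and the process is not reconstructible.

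The only step needing care is guaranteeing that the perturbation actually changes the argmax. This is exactly what the identity $\sum_y v_y=0$ provides, and it follows from the columns being probability vectors; without it, $\mathbf{v}$ could in principle be constant and leave the argmax unchanged. The rest is routine, and I expect no other obstacle.
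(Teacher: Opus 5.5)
Your proof is correct and follows the same route as the paper: injectivity of $M_x$ for sufficiency, and for necessity a nonzero kernel vector whose entries sum to zero because the columns are probability vectors. The only difference is how you turn that vector into two label distributions. You use $\frac{1}{c}\mathbf{1}\pm\epsilon\mathbf{v}$, which even yields full-support $Q_1,Q_2$, whereas the paper normalizes the positive and negative parts of $\mathbf{v}$, whose disjoint supports likewise force disjoint argmax sets.
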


 \begin{proof}

 We  define  $M(x)=[\mathbf{p}_{1,x}, \mathbf{p}_{2,x}, \dots, \mathbf{p}_{|\mathcal{Y}|,x}]\in\mathbb{R}^{\mathcal{|S| \times |\mathcal{Y}|}}$, the matrix that characterizes the bag generation process for instance $x$. The entries of the matrix  satisfy $M_{j,i}(x)=P(s_j|i,x)$ for any $x \in \mathcal{X}, i \in \mathcal{Y}, s_j\in \mathcal{S}  $.

\textbf{Sufficiency:} 
If for each $x \in \mathcal{X}$ the vectors 
\begin{align}\label{vect_p}
\mathbf{p}_{1,x}, \mathbf{p}_{2,x}, \dots, \mathbf{p}_{|\mathcal{Y}|,x} \in \mathbb{R}^{|\mathcal{S}|}
\end{align}
are linearly independent, and we have a pair of distributions $Q_1, Q_2 \in \Delta(\mathcal{Y})$ such that
\[
\sum_{y \in \mathcal{Y}} P(s| y,x)\, Q_1(y) \;=\; 
\sum_{y \in \mathcal{Y}} P(s | y,x)\, Q_2(y) 
\quad \forall s \in \mathcal{S},
\]
then  $M(x)(Q_1 - Q_2) = 0$. Since  the vectors in \eqref{vect_p}   are linearly independent, $M(x)$ has full column rank, which implies that $Q_1 = Q_2$. In particular,
\[
\arg\max_{y \in \mathcal{Y}} Q_1(y) = \arg\max_{y \in \mathcal{Y}} Q_2(y),
\]
hence the bag generation process is reconstructible.

\textbf{Necessity:} 
This proof is obtained by contradiction. If  the bag generation process is reconstructible, and  for some $x_0 \in \mathcal{X}$,  the vectors
\[
\mathbf{p}_{1,x_0}, \mathbf{p}_{2,x_0}, \dots, \mathbf{p}_{c,x_0}
\]
are linearly dependent, then there would exist a nonzero vector $q \in \mathbb{R}^c$ such that $M(x_0) q = 0$. 

Since the columns of $M(x_0)$ add to one, we also have that $\sum_{y\in \mathcal{Y}}q=0$. Then, we could split $q$ into its positive-negative parts:
\begin{align*}
q_+(y) &= \max\{0, q(y)\}, \\
q_-(y) &= \max\{0, -q(y)\}, \quad y \in \mathcal{Y}.
\end{align*}
Clearly, $q = q_+ - q_-$, and since $\sum_{y\in \mathcal{Y}}q=0$,  we have that both $q_+$ and  $q_-$ are nonzero.  

Normalizing the   vectors $q_+$ and  $q_-$  would give two different  label distributions:
\[
Q_1 = \frac{q_+}{\sum_{y \in \mathcal{Y}} q_+(y)}, \qquad 
Q_2 = \frac{q_-}{\sum_{y \in \mathcal{Y}} q_-(y)} \in \Delta(\mathcal{Y}),
\]

in which $\sum_{y \in \mathcal{Y}} q_+(y)=\sum_{y \in \mathcal{Y}} q_-(y)$, as $\sum_{y\in \mathcal{Y}}q=0$. Then, we would have
\[
M(x_0) (Q_1 - Q_2) \;=\; \frac{1}{\sum_y q_+(y)} M(x_0) q \;=\; 0.
\]
Therefore, we would have $  M(x_0) Q_1 = M(x_0) Q_2$, or equivalently 
\[
\sum_{y\in \mathcal{Y}} P(s | y,x_0)\, Q_1(y)= \sum_{y\in \mathcal{Y}} P(s | y,x_0)\, Q_2(y)\quad \forall s \in \mathcal{S}.
\]
 However, since $q_+,q_-$ are the positive-negative parts  of $q$ we would have
\[
\argmax_{y\in \mathcal{Y}} Q_1(y)  \neq \argmax _{y\in \mathcal{Y}}Q_2(y),
\] which contradicts the reconstructibility of $P(s|y,x)$.\end{proof}

The theorem above presents the necessary and sufficient  condition for learning in  a partial labels setting, establishing that the bag generation process has to satisfy a linear independence condition. Theorem \ref{reconstructibility} provides the first complete characterization of learnability with partial labels. Previous work in \cite{feng2020provably} already proved that  the linear independence of $\mathbf{p}_{1}, \mathbf{p}_{2}, \dots, \mathbf{p}_{|\mathcal{Y}|}\in\mathbb{R}^{\mathcal{|S|}}$ is sufficient for learning when the bag generation process does not vary across instances. Theorem \ref{reconstructibility} generalizes the result from \cite{feng2020provably} and shows that the linear independence assumption is in fact a necessary condition. In contrast, the other existing sufficiency results \citep{liu2014learnability,cabannes2020structured} are oriented  to specific  assumptions that enable the recovery of the Bayes rule with a given procedure.

 Contrary to the suggestion of \cite{cour2011learning}, \hbox{Theorem  \ref{reconstructibility}} shows that the label-aligned  condition is  not necessary for learning in a  partial labels setting. Specifically, there exist bag generation processes that are reconstructible but do not correspond to label-aligned scenarios. For instance, the process defined by $P(s | y, x) = 1$ if $s = \{\pi_x(y)\}$ for some permutation $\pi_x$ of the labels, and $P(s | y, x) = 0$ otherwise, is reconstructible, since it satisfies  the conditions of \hbox{Theorem~\ref{reconstructibility}}. Such bag generation process would only be label-aligned if $\pi_x$ is the identity permutation for any instance $x$.  However, even though such a case is reconstructible, it cannot be realistically addressed by an algorithm,  since it would require knowing the permutation  $\pi_x$ for every instance $x$. 

As discussed above, full access to the bag generation process $P(s| y, x)$ can enable effective learning in any reconstructible setting, but such a detailed knowledge is highly unrealistic in practice. Therefore, we seek more practical assumptions under which an algorithm can recover the Bayes rule. The following further analyses the label-aligned condition, as  a simple general assumption that does not require  detailed knowledge of the bag generation process.

\subsection{Label-aligned assumption}
The label-aligned scenarios of PLL satisfy that, for each instance, the label that appears more frequently in its bags is also the most probable label. Specifically, let 
\begin{align*}
    \mathcal{S}_y=\{ s\in \mathcal{S}: y\in s \}
\end{align*}
be the set of bags that include  label $y$. The label-aligned condition requires that 
\begin{align}\label{partial_condition}
    &\argmax_{y\in \mathcal{Y}}P(\mathcal{S}_y|x) =  \argmax_{y\in \mathcal{Y}}P(y|x) \hspace{3mm} \forall x \in \mathcal{X}, \\
   &\text{where} \hspace{5mm} P(\mathcal{S}_y|x)= \sum_{s\in \mathcal{S}_y}P(s|x).\nonumber
\end{align}

The following result indicates that bag generation processes  that result in label-aligned scenarios are always reconstructible

\begin{definition}
    A bag generation process $P(s|y,x)$ is \textit{label-aligned} if any distribution  where the bags are generated according to  $P(s|y,x)$ satisfies the  label-aligned \hbox{condition \eqref{partial_condition}}.
\end{definition}

\begin{corollary} \label{obv}If a bag generation process $P(s|y,x)$ is label-aligned,  then it is also reconstructible. \label{bag_aligned}
\end{corollary}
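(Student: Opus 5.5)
Two routes are available. The more direct one argues by contraposition using the construction from the necessity part of Theorem~\ref{reconstructibility}. Suppose the process is label-aligned but not reconstructible. Then for some $x_0$ there exist $Q_1,Q_2\in\Delta(\mathcal{Y})$ with $M(x_0)Q_1=M(x_0)Q_2$ but $\argmax_y Q_1(y)\neq\argmax_y Q_2(y)$.

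We build two distributions $P_1,P_2$ that share the marginal over $x$ and the process $P(s|y,x)$. They agree away from $x_0$, and at $x_0$ they set $P_1(y|x_0)=Q_1(y)$ and $P_2(y|x_0)=Q_2(y)$. Then $P_1(s|x_0)=P_2(s|x_0)$ for every $s$, and hence $P_1(\mathcal{S}_y|x_0)=P_2(\mathcal{S}_y|x_0)$ for all $y$. Both distributions abide by the process, so by the definition of label-aligned both satisfy \eqref{partial_condition} at $x_0$. This gives
\[
\argmax_y Q_1(y)=\argmax_y P_1(\mathcal{S}_y|x_0)=\argmax_y P_2(\mathcal{S}_y|x_0)=\argmax_y Q_2(y),
\]
which is a contradiction.

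The main obstacle is the modelling subtlety that $P(y|x)$ can be prescribed freely at a single point $x_0$. If $x_0$ has zero marginal mass, the statement ``for all $x$'' should be read as a pointwise condition on conditionals. I would state explicitly that any choice of $P(y|x_0)\in\Delta(\mathcal{Y})$, combined with the fixed $P(s|y,x)$, defines a distribution abiding by the process. This is implicit in how the paper defines reconstructibility via arbitrary $Q_1,Q_2$.

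Ties in the argmax need care, since $\argmax$ is set-valued. The argument above goes through if equality of argmax sets is used throughout, so no tie-breaking convention is needed.

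An alternative route goes through linear independence via Theorem~\ref{reconstructibility}. Take $q$ with $M(x_0)q=0$ and form $Q_1,Q_2$ from its positive and negative parts as in that proof. These have disjoint supports, so their argmax sets are disjoint and therefore differ, and the same label-aligned contradiction applies. Either route finishes the proof; I would present the first, since it avoids even needing Theorem~\ref{reconstructibility}.
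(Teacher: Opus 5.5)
Your first route is correct and is essentially the paper's proof. The paper argues directly rather than by contradiction: equal bag distributions $\sum_{y} P(s|y,x)Q_1(y)=\sum_{y} P(s|y,x)Q_2(y)$ give equal frequencies $P(\mathcal{S}_y|x)$, so the label-aligned condition forces $\argmax_y Q_1(y)=\argmax_y Q_2(y)$; your remark that $P(y|x_0)$ may be prescribed freely at a single point makes explicit an assumption the paper uses only implicitly.
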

\begin{proof} See Appendix C.\end{proof}

The label-aligned condition  is   more general than most of the assumptions taken  in the literature. In \hbox{Appendix A} we show  that the assumptions taken for   six partial-label models from \cite{cour2011learning,liu2014learnability,cabannes2020structured,feng2020provably,wen2021leveraged,lv2023robustness} are specific cases of the label-aligned condition. Furthermore, Theorem \ref{consistency} in Section 3 establishes that the label-aligned condition  is a   minimal algorithmic assumption.

In the following section  we introduce PL A-$k$NN, an effective algorithm in  general label-aligned scenarios. PL A-$k$NN  predicts the most probable label of an instance by computing the  empirical frequencies of the  labels in the bags of the  neighbors.

\section{\texorpdfstring{The PL A-$k$NN algorithm}{The PL A-kNN algorithm}} \label{sec:algorithm}

This section first details the proposed PL A-$k$NN algorithm. Then, we provide a consistency result for PL A-$k$NN,  along with rates of convergence, for any label-aligned case. We conclude the section showing the robustness of the algorithm in cases that depart from the label-aligned condition.

The PL A-$k$NN  algorithm  gradually whittles down the potential labels of an instance    by  comparing the frequencies of the   labels among the bags of an increasing set of neighbors. The algorithm uses  fewer neighbors when the most frequent label has a clear lead over the  frequencies of the other labels in the bags, and   uses more neighbors when the label frequencies are close.

  PL A-$k$NN  initially  considers   the  set with all  labels $\hat{s}=\mathcal{Y}$ as the potential labels of  instance $x$. At step $k$, the algorithm computes  the frequencies of  the labels in $\hat{s}$ over the  bags of the $k$ nearest neighbors of $x$. Labels with frequencies that  deviate from the maximum frequency by more than   threshold 
\begin{align*}
   & \Delta(n,k,\delta)= c_1\sqrt{\frac{ \log(n) + \log(|\mathcal{Y}|/\delta)}{k}}  
\end{align*}
are removed from $\hat{s}$. The pseudo-code for  PL A-$k$NN  is provided in  Algorithm \ref{alg1}.

\begin{algorithm}[H]
\caption{\textbf{PL A-$k$NN algorithm}}\label{alg1}
 \textbf{Inputs:}  Instance  $x$   \\ 
 Training examples $\{ (x_l, s_{l})\}_{l=1}^n $ \\
 Maximum number of 
 iterations $T$\\
   Confidence parameter $\delta$  \\
\textbf{Output:}  Label  $h(x) $

\begin{algorithmic}[1]
\STATE Set $A=c_1 \sqrt{\log(n) +\log(|\mathcal{Y}|/\delta})$  
\STATE Initialize $\hat{s}=\mathcal{Y}$, $k=0$, and  $\tau_1,\tau_2 ... ,\tau_{|\mathcal{Y}|} =0$ 
\WHILE{$|\hat{s}|>1$ and $k< T $} 
\STATE $k=k+1$
\STATE Find the $k$ nearest neighbor of $x$ and take $l_k$ as its index in $l=1,2,...n$
\STATE $\Delta=\frac{A}{\sqrt{k}}$
\STATE $\tau_y=\tau_y + \mathbb{I}\{ y \in s_{l_k} \}$ \hspace{3mm} $\forall y\in \mathcal{Y}$

\STATE $m= \max_{y \in \hat{s}} \tau_y$ 
\FOR{$ y \in  \hat{s}$}

\IF{$ \frac{m-\tau_y}{k}\geq \Delta $} 
\STATE $\hat{s}=\hat{s} \setminus \{y\}$
\ENDIF
\ENDFOR

\ENDWHILE
\STATE $h(x) = \hat{s}$

\end{algorithmic}
\end{algorithm}

The algorithm uses a maximum number of iterations $T$, which is set to a value significantly smaller than the number of samples, since bags  from very distant neighbors are poor representatives of the bag corresponding to $x$. If after $T$ iterations $|\hat{s}| > 1$, we require a disambiguation criterion. In this paper we propose a simple disambiguation process in which we  select the label in $\hat{s}$ that came closest to eliminating all other labels during the iterative process. The detailed pseudo code for PL A-\(k\)NN with the disambiguation criterion is provided in Appendix~D.

The  PL A-$k$NN algorithm is inspired by the adaptive $k$-nearest neighbor method  for binary supervised classification (A-$k$NN)  from  \cite{balsubramani2019adaptive}. PLL requires to consider multiclass settings, and the proposed PL A-kNN differs fundamentally from the multiclass  supervised classification extension suggested in \cite{balsubramani2019adaptive}. Such an  extension expands the neighborhood size until the  frequency of a label is higher than threshold $\Delta$, and then  predicts that label. In contrast, PL A-$k$NN gradually enlarges the neighborhood while discarding  labels  that deviate from the  maximum frequency   by more than   threshold  $\Delta$, narrowing down  the potential labels of the instance. Moreover, PL A-$k$NN  is backed by   strong theoretical results, whereas the extension in \cite{balsubramani2019adaptive}   is not provided with any guarantee, since the theoretical results in that work are only valid for the binary method.  The  difference in performance guarantees lies in how the two methods exploit label frequencies:
PL A-$k$NN leverages the margins between label frequencies,
while A-$k$NN  only considers which is the most frequent label and disregards the rest of the information.
A more detailed comparison between these two approaches is provided in Appendix~B.

\subsection{\texorpdfstring{Consistency of PL A-$k$NN}{Consistency of PL A-kNN}}
The following theorem shows that the  \hbox{PL A-$k$NN} algorithm is  Bayes consistent. Specifically, the risk of   PL A-$k$NN   converges almost surely to the Bayes risk  as the number of samples $n$ grows. The sequence of the  confidence parameters $\{ \delta_n\}_{n=1}^\infty$  and maximum number of iterations  $\{ T_n\}_{n=1}^\infty$  has to satisfy the following asymptotic properties
\begin{equation}
\label{conf_seq}
\begin{split}
\sum_{n=1}^{\infty} \delta_n < \infty, \quad
\lim_{n \to \infty} \frac{\log (1/\delta_n)}{T_n} = 0, \\
\exists A \in (0,1] \text{ such that } T_n \ge A n \ \forall n.
\end{split}
\end{equation}

\begin{theorem} \label{consistency}
  (Consistency) 
  Let $h_n $ be     the  classifier from the PL A-$k$NN Algorithm \ref{alg1}  with parameters that satisfy \eqref{conf_seq}. If  the underlying probability distribution satisfies the  label-aligned  condition \eqref{partial_condition},   we have 
   $$\lim_{n\rightarrow \infty}\mathcal{R}(h_n) = \mathcal{R}^* $$
    almost surely. In addition, no other   algorithm can be Bayes consistent under  more general PLL scenarios   than PL A-$k$NN.
\end{theorem}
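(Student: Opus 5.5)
The proof splits into two parts: Bayes consistency of PL A-$k$NN under the label-aligned condition, and optimality of that condition among algorithms that do not know the bag generation process.

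\textbf{Consistency.} Write $\eta_y(x)=P(\mathcal{S}_y\mid x)$, the probability that label $y$ appears in the bag of $x$. Algorithm \ref{alg1} is a plug-in rule for $\eta$. Each indicator $\mathbb{I}\{y\in s_{l_k}\}$ is a Bernoulli variable with mean $\eta_y(x_{l_k})$.

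The first step is a uniform concentration event. With probability at least $1-\delta_n$, simultaneously for all balls $B$ in $\mathcal{X}$, all $k\le n$ and all labels $y$, the empirical frequency of $y$ among the bags of points falling in $B$ deviates from its conditional mean $\eta_y(B)=P(\mathcal{S}_y\mid x\in B)$ by at most $\Delta(n,k,\delta_n)/2$. I would obtain this from a VC bound for the class of balls in a finite-dimensional normed space, intersected with the $|\mathcal{Y}|$ events $\{y\in s\}$. This VC bound is what produces the $\log n+\log(|\mathcal{Y}|/\delta)$ term, for a suitable constant $c_1$. Since $\sum_n\delta_n<\infty$, Borel--Cantelli shows this event holds for all but finitely many $n$ almost surely.

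On this event I would establish two facts.
\begin{enumerate}
\item \emph{Safety:} a label $y\in\argmax_{y'}\eta_{y'}(x)$ is never removed, provided the neighborhood $B$ used at step $k$ satisfies $\eta_{y}(B)\ge\eta_{y'}(B)$ for all $y'$. Its empirical lead over any other label cannot be negative by more than $\Delta$.
\item \emph{Progress:} if $x$ has an $\epsilon$-margin, meaning some ball $B$ around $x$ of probability mass $p$ has $\eta_{y^*}(B)-\eta_{y'}(B)>\epsilon$ for every $y'\neq y^*$, then all $y'\neq y^*$ are removed by step $k\approx n p$, as soon as $2\Delta(n,k,\delta_n)<\epsilon$.
\end{enumerate}
The conditions $T_n\ge An$ and $\log(1/\delta_n)/T_n\to 0$ guarantee this step is reached before the cap $T_n$, for every fixed $p$ and $\epsilon$, once $n$ is large.

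Next comes the measure-theoretic step, which I expect to be the main obstacle. As in \cite{balsubramani2019adaptive}, I would use Lebesgue differentiation, valid for any Borel measure on a finite-dimensional normed space (Besicovitch). For $\mu$-almost every $x$ with a unique maximizer of $\eta(x)$, there exist $p>0$ and $\epsilon>0$ such that every ball around $x$ of mass at most $p$ is $\epsilon$-margin in favour of $\arg\max_y\eta_y(x)$. The hard part is ensuring that safety holds along the entire path of balls, including small ones where the margin is not yet established; this is where the adaptive stopping rule must be handled carefully.

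Points where the maximizer of $\eta(x)$ is tied contribute no excess risk, provided any tied label is also Bayes-optimal. The label-aligned condition \eqref{partial_condition} supplies this: it identifies $\arg\max_y\eta_y(x)$ with $\arg\max_y P(y\mid x)$, so $h_n(x)\to h^*(x)$ for almost every $x$ (or the prediction is Bayes-equivalent). Dominated convergence then gives $\mathcal{R}(h_n)\to\mathcal{R}^*$ almost surely.

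\textbf{Optimality.} I would argue by contrapositive. Suppose a bag generation process is not label-aligned. Then some distribution $P$ following it has a point set $E$ of positive mass on which $\arg\max_y P(\mathcal{S}_y\mid x)\neq\arg\max_y P(y\mid x)$. An algorithm that is consistent under this scenario must be consistent for every bag generation process it admits, and it does not know the process. So I would construct a second, label-aligned distribution $P'$ with the same marginal on $(x,s)$ but a different Bayes rule on $E$. The natural choice is $P'(s\mid y,x)=\mathbb{I}\{s=\{y\}\}$-style noiseless bags reweighted to match $P(s\mid x)$; equivalently, take a label-aligned process whose induced bag law equals $P(s\mid x)$, with its Bayes label equal to $\arg\max_y\eta_y(x)$.

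Because the observed data distributions coincide, any algorithm produces the same classifiers under $P$ and $P'$. It therefore cannot converge to both Bayes rules, which differ on a set of positive mass. PL A-$k$NN is consistent on every label-aligned distribution, so it is consistent on the largest admissible class.
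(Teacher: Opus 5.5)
Your consistency argument uses the same ingredients as the paper, but assembles them differently. The shared ingredients are: a margin at every small radius obtained from Lebesgue differentiation (the paper's $(p,\gamma)$-saliency and Lemma~\ref{lemma1}); a safety/progress analysis along the $k$-NN balls; and $\sum_n\delta_n<\infty$.

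The paper only needs concentration over balls \emph{centered at the query}, a nested family of VC dimension $1$. It gets a per-query error probability $\delta^2$ (Theorem~\ref{th1}) and converts it into a high-probability risk bound by Markov's inequality (Theorem~\ref{rates}). It then takes a union bound over $n\ge N$ together with $P(\adv(x)\le a_n)\to 0$.

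Your event, uniform over all balls, yields a single almost-sure event on which $h_n(x)=h^*(x)$ eventually for a.e.\ $x$, so dominated convergence finishes cleanly. The price is that you need the class of all balls of the norm to have finite VC dimension $d_0$, and a threshold constant $c_1$ depending on $d_0$. The paper uses this only for Theorem~\ref{uniform}. Per-query Borel--Cantelli followed by Fubini would give your pointwise conclusion without it.

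The step you flag as the main obstacle is already settled by your own Lebesgue statement. If every ball centered at $x$ of radius at most $r_p(x)$ carries the margin, then for $k\le np/(1+c_0)$ one has $\#_n(B(x,r_p(x)))\ge k$, hence $B_k(x)\subseteq B(x,r_p(x))$. Concentration then gives safety at every step up to the point where progress removes the other labels. This is exactly the paper's argument, with $p\le A$ so that this $k$ stays below $T_n$.

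Ties, however, do not \emph{contribute no excess risk} automatically. You need the set-valued margin: every maximizer of $\eta(x)$ beats every non-maximizer on all small balls. You also need the observation that a non-maximizer can never eliminate a maximizer. Then the maximizer in $\hat{s}$ with the largest count survives each step, even though maximizers may eliminate one another.

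The genuine gap is in the optimality part. Your \emph{natural choice} $P'(s\mid y,x)=\mathbb{I}\{s=\{y\}\}$ cannot work. It produces only singleton bags, so no reweighting of labels can reproduce a bag law $P(s\mid x)$ that charges non-singleton bags. The fallback, \emph{take a label-aligned process whose induced bag law equals $P(s\mid x)$}, is exactly what has to be proved, and you leave it unconstructed.

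The paper's construction is a swap. On the violation set, let $y_1=\argmax_y P_1(y\mid x)$ and $y_2=\argmax_y P_1(\mathcal{S}_y\mid x)$, and exchange $y_1$ and $y_2$ simultaneously in $P_1(\cdot\mid x)$ and in $P_1(s\mid\cdot,x)$. Then $\sum_y P(y\mid x)P(s\mid y,x)$ is the same sum with its terms permuted, so $P_2(x,s)=P_1(x,s)$ and $\eta$ is unchanged. Meanwhile $y_2$ becomes both the Bayes label and the bag-argmax, so $P_2$ is label-aligned and its Bayes rule differs from that of $P_1$ on a set of positive mass. Taking $P'(s\mid y,x)=P(s\mid x)$ for every $y$, with $P'(\cdot\mid x)$ uniform on $\argmax_y\eta_y(x)$, would also work.

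Finally, your contrapositive starts from a non-label-aligned \emph{process}, whereas the claim concerns distributions on which PL A-$k$NN fails. To extract a positive-mass violation set from such a failure, you need that PL A-$k$NN remains consistent when the violation set is null. The paper gets this from Theorem~\ref{limit_any_dist} with $G$ equal to that set.
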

\begin{proof} See Appendix C.\end{proof}

 The previous theorem  shows  that PL A-$k$NN is  Bayes consistent  under more general assumptions than  the other existing methods in the literature. Specifically, Appendix A shows that  the assumptions taken for the consistency guarantees of  the algorithms from \cite{cour2011learning,cabannes2020structured,feng2020provably,wen2021leveraged} are   specific cases of the label-aligned condition. There is one model proposed in \cite{lv2023robustness} that  shows consistency  for  cases where the bags are not label-aligned. However, such work assumes that the bag generation process is identical across instances, overlooking scenarios in which the bag generation varies across the instance space, and establishes the consistency result only under deterministic labels

 The previous theorem also demonstrates  that  \hbox{PL A-$k$NN} is consistent under  minimal algorithmic assumptions. In other words, an algorithm cannot achieve  Bayes consistency under  strictly more general  scenarios than those where  PL A-kNN is consistent.  To the best of our knowledge,  \hbox{PL A-$k$NN} is the first  algorithm for PLL that is  proven consistent under minimal algorithmic assumptions.
 
\subsection{Rates of convergence } 

In what follows, we first establish instance-specific convergence rates, and subsequently derive the standard convergence rates. These rates   depend on the  advantage of  each instance $x$, which quantifies the margin of the most frequent labels ($ \argmax_{y\in Y} P(\mathcal{S}_y|x)$)  with the other  labels in the bags surrounding $x$. The  concept of advantage  we  propose in this paper is an extension to PLL of the   concept introduced in \cite{balsubramani2019adaptive}  for binary supervision.

The advantage of an instance $x$ depends on the frequencies of  labels in  the bags for instances in balls centered at $x$. We denote the  closed ball $B$ centered in $x \in \mathcal{X}$ with radius $r$  as:
\begin{align*}
    B(x,r)= \{ x' \in \mathcal{X} :  \lVert x-x'  \rVert \leq r \}.
\end{align*}
We then  denote as $r_{p}(x)$  the  smallest radius such that ball $B(x,r_{p}(x))$ has probability mass of at least $p$, that is:
\begin{align*}
r_{p}(x) = \inf\{r \geq0 : P(B(x,r)) \geq p  \}.
\end{align*}
A point $x \in \mathcal{X}$ is $(p, \gamma)$-salient  if the following holds:
\begin{align*}
& \text{for any  } i \in \argmax_{y\in \mathcal{Y}} P(\mathcal{S}_y|x) \text{\ and \ } j \notin \argmax_{y\in \mathcal{Y}}P(\mathcal{S}_y|x), \hspace{3mm}  \\
    &   P(\mathcal{S}_i|B(x,r)) >P(\mathcal{S}_j|B(x,r)) +\gamma   \hspace{5mm}
\forall r \in [0,r_p(x)]. 
\end{align*}

A point $x$ can satisfy this definition for a variety of  $(p, \gamma)$  tuples.  The  advantage of $x \in \mathcal{X}$ in the region $B(x,r_{A}(x))$  is taken to be the largest value of $p\gamma^2$ out of all the $(p, \gamma)$-salient tuples with $p\leq A$, that is
\begin{equation*} \label{eq:adv}
\adv(x) =
\begin{dcases}
    1 \hspace{10mm}  \text{if } \arg\max_{y \in Y} P(\mathcal{S}_y \mid x) = \mathcal{Y}, \\[6pt]
    \sup \{ p \gamma^2 : x \text{ is } (p,\gamma)\text{-salient}, p\leq A \} \hspace{2mm} \text{else}.
\end{dcases}
\end{equation*}

Provided that \(A\) is not too small, the advantage within  $B(x,r_{A}(x))$  is  generally the same as in the entire space (the ball of mass $1$), since  instances are commonly \hbox{\((p,\gamma)\)-salient} only within a local region of the  space. Moreover, it always  holds that at least $\text{adv}_1(x) \le \frac{\adv(x)}{A}$.

Large advantage values for $x$ indicate that it  is easier  to distinguish the labels that are the most  frequent  labels in the bags of $x$ with the bags of neighboring instances.  

The following theorem shows that  PL A-$k$NN  classifies  $x$ as the Bayes rule  $h^*(x)$  with probability at least $1-\delta^2$  when the number of samples  is of the order of  $1/\adv(x)$, with  $A$ given by \eqref{conf_seq}.

\begin{theorem} \label{th1}
(Query-specific convergence) There is an absolute  constant $C>0$ for which the following holds. Let $h$ be the classifier from  PL A-$k$NN  \hbox{Algorithm \ref{alg1}}  with   maximum number of iterations $T$ and   confidence parameter $\delta $ satisfying \eqref{conf_seq}.  If the underlying probability distribution satisfies the  label-aligned   \hbox{condition \eqref{partial_condition}}, for each $x\in \mathcal{X}$  such that 
\begin{align} \label{avd_cond}
n \geq \frac{C}{\adv(x)} \, 
      \max\Big\{ \log\frac{1}{\adv(x)}, \, \log\frac{|\mathcal{Y}|}{\delta} \Big\}.
\end{align}

we have     $h(x)=h^*(x)$  with probability at least $1- \delta^2$.
In addition, the set of instances $x$ for which $\adv(x)=0$ has zero measure.
\end{theorem}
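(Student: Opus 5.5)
The plan follows the analysis of \cite{balsubramani2019adaptive}, adapted to the multiclass elimination rule of Algorithm \ref{alg1}. Write $Y^+(x)=\argmax_{y\in\mathcal{Y}} P(\mathcal{S}_y|x)$. By the label-aligned condition \eqref{partial_condition}, $Y^+(x)=\argmax_y P(y|x)$. It therefore suffices to show that, with probability at least $1-\delta^2$, the algorithm never removes a label of $Y^+(x)$ and removes every label outside $Y^+(x)$ before step $T$. The case $Y^+(x)=\mathcal{Y}$, where $\adv(x)=1$, is covered by the same argument once ties among maximizers are handled by the convention that $h(x)$ returns an element of the retained set.

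\textbf{Step 1 (uniform concentration).} Condition on the unlabeled points, so the bags of the $k$ nearest neighbors are independent draws from $P(s\,|\,x_{l_i})$. For every $k\le n$ and every label $y$, the empirical frequency $\tau_y/k$ concentrates around its conditional mean $\bar p_{y,k}=\frac1k\sum_{i\le k}P(\mathcal{S}_y|x_{l_i})$. By Hoeffding, with a union bound over $k\le n$ and $y\in\mathcal{Y}$, all deviations are at most $\Delta(n,k,\delta)/2$ with probability at least $1-\delta^2/2$. This fixes $c_1$, and the $\log n$ term in $\Delta$ absorbs the union over $k$.

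\textbf{Step 2 (no wrong eliminations).} For $k$ with $r_k(x)\le r_p(x)$, the $k$ nearest neighbors lie in $B(x,r_p(x))$. The key observation is that $\bar p_{y,k}$ then behaves like $P(\mathcal{S}_y|B(x,r))$ for the radius $r$ of the $k$-th neighbor. Combined with saliency, this gives $\bar p_{i,k}\ge\bar p_{j,k}$ for $i\in Y^+(x)$ and $j\notin Y^+(x)$. Making this rigorous needs a careful conditioning argument: conditional on the $(k+1)$-th neighbor distance, the first $k$ points are i.i.d.\ from $P$ restricted to the ball. Under the Step 1 event, $\frac{m-\tau_i}{k}<\Delta$ for all $i\in Y^+(x)$, so no label in $Y^+(x)$ is removed. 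Extending this to all $k\le T$, including neighborhoods beyond $r_p(x)$ where saliency fails, requires the extra observation that once every $j\notin Y^+(x)$ is gone the loop stops. It therefore suffices that elimination of the wrong labels happens while $k$ is still in the salient range.

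\textbf{Step 3 (wrong labels get eliminated).} Choose $(p,\gamma)$ salient with $p\gamma^2$ close to $\adv(x)$ and $p\le A$. By a multiplicative Chernoff bound, with probability at least $1-\delta^2/2$ the ball $B(x,r_p(x))$ contains at least $k_0=np/2$ sample points; this needs $np\gtrsim\log(1/\delta)$. At $k=k_0$ the margin satisfies $\frac{\tau_i-\tau_j}{k}\ge\gamma-\Delta$, so every $j\notin Y^+(x)$ is removed as soon as $\gamma\ge2\Delta(n,k_0,\delta)$. This condition reads $np\gamma^2\gtrsim\log n+\log(|\mathcal{Y}|/\delta)$. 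With $p\gamma^2\approx\adv(x)$, the standard inversion of $n\ge (C/a)\log n$ into $n\ge (C'/a)\log(1/a)$ yields \eqref{avd_cond}. We also need $k_0\le T$, which holds because $T\ge An\ge pn$.

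\textbf{Measure-zero claim.} If $\adv(x)=0$ and $Y^+(x)\ne\mathcal{Y}$, then for every $p>0$ the ball averages $P(\mathcal{S}_y|B(x,r))$ fail to keep a positive gap. By the Lebesgue differentiation theorem in the finite-dimensional space $\mathcal{X}$, these averages converge to $P(\mathcal{S}_y|x)$ as $r\to0$ for almost every $x$. At such points the strict gap at $x$ persists on a small ball, which gives a positive $(p,\gamma)$. Hence the exceptional set is contained in the null set of non-Lebesgue points.

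\textbf{Main obstacle.} I expect the main obstacle to be Step 2. It requires rigorously linking the empirical neighbor averages to the ball conditionals $P(\mathcal{S}_y|B(x,r))$ with a random radius, uniformly over $k$. It also requires ensuring that elimination of correct labels cannot happen at large $k$ beyond the salient radius. The conditioning-on-the-$(k+1)$-th-neighbor trick is the first approach I would try.
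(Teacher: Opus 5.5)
Your architecture matches the paper's. You stay inside the salient ball $B(x,r_p(x))$ and argue that no label outside $Y^+(x)$ can knock out a label of $Y^+(x)$ there. You show that at $k\approx np$ the margin $\gamma$ exceeds $2\Delta(n,k,\delta)$, check $k\le T$ via $T\ge An\ge np$, and get the null set from Lebesgue differentiation. Your Chernoff count in Step 3 is a fine substitute for Lemma \ref{lemma7}.

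The gap is in Steps 1--2, which carry the actual content. Your Step 1 event centers $\tau_y/k$ at $\bar p_{y,k}=\frac1k\sum_{i\le k}P(\mathcal{S}_y|x_{l_i})$, but saliency says nothing about $\bar p_{y,k}$. It constrains the ball averages $P(\mathcal{S}_y|B(x,r))$, not the pointwise conditionals at the sampled neighbors. So the asserted $\bar p_{i,k}\ge\bar p_{j,k}$ is false in general: for small $k$, the nearest neighbors can all sit where $P(\mathcal{S}_j|\cdot)>P(\mathcal{S}_i|\cdot)$. Step 1 already spends the whole $\Delta/2$ per-label budget on bag noise, so nothing is left to absorb the fluctuation of $\bar p_{y,k}$ around the ball average. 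Hence neither $\frac{\tau_j-\tau_i}{k}<\Delta$ (Step 2) nor $\frac{\tau_i-\tau_j}{k}\ge\gamma-\Delta$ (Step 3) follows from the event you establish. The repair you sketch is also incompatible with Step 1 as written, because Step 1 conditions on all the unlabeled points.

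The missing ingredient is a deviation bound for $P_n(\mathcal{S}_y|B)$ around $P(\mathcal{S}_y|B)$ itself, holding simultaneously for every ball $B$ centered at $x$. That way the data-dependent $k$-NN ball is covered automatically. The paper obtains this in Lemma \ref{lemma2} by applying the uniform ratio bound of \cite{balsubramani2019adaptive} (their Theorem 8) to the VC-dimension-one classes $\{B\times\mathcal{S}_y\}$ and $\{B\times 2^{\mathcal{Y}}\}$. This gives $|P_n(\mathcal{S}_y|B)-P(\mathcal{S}_y|B)|\le\frac12\Delta(n,\#_n(B),\delta)$ for all such $B$.

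Your conditioning idea also works if you drop $\bar p_{y,k}$ entirely. For fixed $k$, condition on the $(k+1)$-th neighbor distance $R_{k+1}$. Then the pairs $(x_{l_i},s_{l_i})_{i\le k}$ are i.i.d.\ from $P$ restricted to the open ball $B^{\circ}$ of radius $R_{k+1}$ about $x$, so $\tau_y/k$ averages i.i.d.\ Bernoulli variables with mean $P(\mathcal{S}_y|B^{\circ})$. Hoeffding with a union over $k\le n$ and $y$ then gives the $\Delta/2$ bound directly. You use it only for $k$ with $\#_n(B(x,r_p(x)))\ge k+1$, so that saliency applies; ties and open versus closed balls need care.

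Finally, your reduction to ``never removes a label of $Y^+(x)$'' is too strong when $|Y^+(x)|\ge 2$. Saliency gives no margin among maximizers, so they can eliminate one another. What holds, and suffices, is the paper's invariant: a label outside $Y^+(x)$ cannot eliminate one inside it, and the current leader is never eliminated, so $\hat s\cap Y^+(x)\neq\emptyset$ throughout. The loop need not stop once the wrong labels are gone; it simply becomes harmless.
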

\begin{proof} See Appendix C.\end{proof}

 The  theorem above directly relates the advantage of an instance   with the number of  samples required by  \hbox{PL A-$k$NN} to ensure its correct classification. The criterion \eqref{avd_cond}  of the theorem  assesses  whether there are   enough  "advantageous" training examples  in the neighborhood of $x$ for  \hbox{PL A-$k$NN}  to predict the most most probable label for $x$ after at most $T$ iterations.

 The query-specific rates from Theorem \ref{th1} are tailored to each instance, depending only on the local properties of the bags.  On the other hand, the rates of convergence of other methods \citep{cabannes2020structured,feng2020provably,lv2023robustness} are  with  respect to  the overall  risk of the classifier.

 The bounds of the  Theorem  \ref{th1} hold for each instance $x$ with high probability. We can strengthen this theorem so that the guarantee holds with high probability simultaneously for all elements in $\mathcal{X}$ by modifying the threshold for candidate elimination to
\begin{align} \label{delta_unif}
   & \Delta(n,k,\delta)= c_1\sqrt{\frac{ d_0\log(n) +\log(|\mathcal{Y}|/\delta)}{k}} 
\end{align}
where $d_0$ denotes the VC dimension of   the set of balls in $\mathcal{X}$.

\begin{theorem}\label{uniform}
(Uniform query-specific convergence) Let $h$ be the classifier from PL A-$k$NN \hbox{Algorithm \ref{alg1}}, with maximum number of iterations $T$  and $\delta$ satisfying \eqref{conf_seq}, and $\Delta$ taken as \hbox{in \eqref{delta_unif}}. If the underlying probability distribution satisfies the label-aligned condition \eqref{partial_condition}, then, with probability at least \hbox{$1-\delta^2$},  we have that $h(x)$ coincides with the Bayes rule $h^*(x)$ for all $x\in \mathcal{X}$ that satisfy \eqref{avd_cond}.
\end{theorem}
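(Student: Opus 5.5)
We follow the structure of the uniform result in \cite{balsubramani2019adaptive}: a single high-probability event controls the empirical bag-label frequencies in \emph{every} ball simultaneously. On that event we then run the deterministic argument behind Theorem~\ref{th1} for each $x$ separately.

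\textbf{Step 1: a uniform concentration event.} For a ball $B$ and a label $y$, write $\mu(B)=P(B)$, $\mu_n(B)$ for its empirical mass, $\eta_y(B)=P(\mathcal{S}_y\mid B)$, and $\hat\eta_y(B)$ for the fraction of training points in $B$ whose bag contains $y$.

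Fix $y$ and consider the class of sets $\{(x',s): x'\in B,\ y\in s\}$, with $B$ ranging over balls. This class has VC dimension at most $d_0$, since intersecting with the fixed set $\{y\in s\}$ does not increase it. Apply a relative (multiplicative) VC deviation bound, as in \cite{balsubramani2019adaptive}, to this class and to the class of balls. With probability at least $1-\delta^2/(2|\mathcal{Y}|)$, every ball $B$ with $n\mu_n(B)=k$ then satisfies
\[
|\hat\eta_y(B)-\eta_y(B)|\le \tfrac{1}{2}\,c_1\sqrt{\tfrac{d_0\log n+\log(|\mathcal{Y}|/\delta)}{k}}=\tfrac12\Delta(n,k,\delta).
\]
Also, every ball with $\mu(B)\ge C'(d_0\log n+\log(1/\delta))/n$ satisfies $\mu_n(B)\ge \mu(B)/2$.

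A union bound over the $|\mathcal{Y}|$ labels gives an event $G$ of probability at least $1-\delta^2$. The factor $d_0\log n$ in \eqref{delta_unif} is exactly what pays for the shatter coefficient $n^{d_0}$. The $k$ nearest neighbours of $x$ form the sample inside a ball centred at $x$; with ties broken, this ball contains exactly $k$ points. So on $G$ the bound applies to every $x$ and every $k$ at once.

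\textbf{Step 2: deterministic correctness on $G$.} Fix any $x$ satisfying \eqref{avd_cond}, and write $Y^*=\argmax_y P(\mathcal{S}_y\mid x)$. By the label-aligned condition \eqref{partial_condition}, $Y^*=\argmax_y P(y\mid x)$, so it suffices to show that the algorithm outputs a label in $Y^*$. If $Y^*=\mathcal{Y}$, any output is Bayes-optimal and there is nothing to prove.

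\emph{No label of $Y^*$ is ever removed.} Take $i\in Y^*$ and any $y$. For every $k$, the $k$-NN ball $B$ is centred at $x$ with radius at most $r_p(x)$ as long as $k$ is below the relevant mass, and saliency gives $\eta_i(B)\ge\eta_y(B)$ there. On $G$ this yields
\[
\tau_y/k-\tau_i/k<\eta_y(B)-\eta_i(B)+\Delta\le\Delta,
\]
so $i$ is never eliminated. For larger $k$ we would instead need the algorithm to have already stopped, which is handled next.

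\emph{Every label outside $Y^*$ is removed in time.} Choose $(p,\gamma)$ nearly attaining $\adv(x)$ with $p\le A$. Take $k\approx np/2$; on $G$ the $k$-NN ball then lies within $B(x,r_p(x))$. Saliency gives a gap of $\gamma$ between $\eta_i$ and $\eta_j$ for $i\in Y^*$, $j\notin Y^*$, so on $G$ the empirical gap exceeds $\gamma-\Delta$. Condition \eqref{avd_cond}, with $C$ large, ensures $\Delta(n,k,\delta)\le\gamma/2$ at this $k$. This uses $np\gamma^2\gtrsim \log(1/p\gamma^2)+\log(|\mathcal{Y}|/\delta)$, together with the fact that $d_0$ is absorbed into $C$, since $\log n\lesssim\log(1/\adv(x))+\log\log$ terms. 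Hence all $j\notin Y^*$ are eliminated by step $k$.

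Since $k\le np/2\le An\le T$, this happens before the iteration cap. Every intermediate neighbourhood has radius at most $r_p(x)$, so the "never removed" argument covers the whole run. Remaining ties within $Y^*$ are harmless.

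\textbf{Main obstacle.} The delicate step is Step 1. We must justify that the $k$-NN balls (closed balls, with ties) are captured by the ball VC class. We need relative rather than additive deviations, so that the error scales as $1/\sqrt{k}$ instead of $1/\sqrt{n}$. We must also check that $\Delta$ from \eqref{delta_unif}, with its $d_0\log n$ term, still fits under \eqref{avd_cond}; the $\log n$ has to be absorbed via $n\ge C/\adv(x)$ and a suitable choice of the absolute constant. I would handle the relative deviations with the Bernstein-type uniform bound used in \cite{balsubramani2019adaptive}, which conditions on the ball's empirical count, and treat ties with an infinitesimal perturbation argument.
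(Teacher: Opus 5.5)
Your overall route is the paper's: one event of probability $1-\delta^2$ on which the count lower bound and the bag-frequency deviation bound (with the $d_0\log n$ term of \eqref{delta_unif}) hold for all balls at once, followed by the deterministic argument of Theorem~\ref{th1} run for each $x$ separately.

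One step is false as stated: the claim that no label of $Y^*$ is ever removed. Saliency compares a label $i\in Y^*$ only with labels $j\notin Y^*$. For $i'\in Y^*\setminus\{i\}$ it says nothing about $\eta_i(B)$ versus $\eta_{i'}(B)$ on balls around $x$: they agree at $x$, but either can be larger nearby. So when $2\le|Y^*|<|\mathcal{Y}|$, your inequality $\eta_y(B)-\eta_i(B)\le 0$ fails for $y=i'$, and $i'$ can legitimately eliminate $i$. The fix, which is what the paper does, is to prove only that $\hat s\cap Y^*$ never becomes empty. Use two facts:
(a) on $G$, for every $k$-NN ball of radius at most $r_p(x)$, no $j\notin Y^*$ can eliminate any $i\in Y^*$, since $\tau_j/k-\tau_i/k\le\eta_j(B)-\eta_i(B)+\Delta<\Delta$;
(b) in every round, the label attaining $m=\max_{y\in\hat s}\tau_y$ is not removed.
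Together these show that the maximizer of $\tau$ over $\hat s\cap Y^*$ survives every round up to your step $k\approx np/2\le T$, at which point all labels outside $Y^*$ are gone. After that, (b) alone keeps $\hat s$ a nonempty subset of $Y^*$, even once neighbourhoods exceed radius $r_p(x)$. The output is then Bayes-optimal by \eqref{partial_condition}.

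A smaller point concerns absorbing $d_0\log n$ into $C$. You justify it by $\log n\lesssim\log(1/\adv(x))$ plus $\log\log$ terms, but this is false for large $n$: \eqref{avd_cond} only bounds $n$ from below. Argue by monotonicity instead. With $a=\adv(x)$, the map $n\mapsto na-c\,d_0\log n$ is nondecreasing for $n\ge c\,d_0/a$. So it suffices to check $na\ge c\,(d_0\log n+\log(|\mathcal{Y}|/\delta))$ at the threshold $n_0=\frac{C}{a}\max\{\log\frac1a,\log\frac{|\mathcal{Y}|}{\delta}\}$. There $\log n_0\le\log C+2\max\{\log\frac1a,\log\frac{|\mathcal{Y}|}{\delta}\}$, so the check holds once $C$ is large relative to $c\,d_0$. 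Hence $C$ depends on $d_0$. The same is true in the paper through its choice of $c_1$, though the paper skips this verification entirely.
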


\begin{proof} See Appendix C.\end{proof}

Theorem  \ref{uniform} shows that the  PL A-$k$NN  algorithm  with high probability can correctly classify all the instances that satisfy \eqref{avd_cond}.

It is also possible to derive instance-specific rates  for  the elimination of each  label that is not the most probable. Specifically, one can follow arguments analogous to those developed in this section by defining the advantage of the most probable label relative to each other label individually.  Therefore, even for instances where the available sample size is insufficient to guarantee the  identification of the optimal label, a similar per label analysis can ensure that certain suboptimal labels are eliminated from the set of possible labels.

We  now provide  rates of convergence  to the Bayes risk  for the PL A-$k$NN  classifier  that depend  on the cumulative distribution of the advantage.

\begin{theorem} \label{rates}
  (Rates of convergence) Let $C$ be the constant from Theorem \ref{th1} and  $h$ be the classifier from  PL A-$k$NN Algorithm \ref{alg1} with  maximum number of iterations $T$   and  confidence parameter $\delta$ satisfying \eqref{conf_seq}.  If the underlying probability distribution  satisfies the label-aligned condition \eqref{partial_condition},  with probability at least $1-\delta$, $h$ satisfies  
\begin{align*} 
& R(h)-R^* \leq \delta + P(\adv(x) \leq a_n)\\ 
& \textit{where } \hspace{1mm} a_n=\frac{C}{n} \max \bigl\{2\log( n),\log(|\mathcal{Y}|/\delta) \bigl\} \end{align*}

\end{theorem}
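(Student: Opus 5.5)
The plan is to derive Theorem~\ref{rates} from a uniform version of Theorem~\ref{th1}, and then bound the excess risk pointwise. First I fix $a_n$ as in the statement. For every $x$ with $\adv(x) > a_n$, I want condition \eqref{avd_cond} to hold. For $a$ slightly above $a_n$ with $a \ge C\log(n)/n$, the function $a \mapsto \frac{C}{a}\max\{\log\frac{1}{a}, \log\frac{|\mathcal{Y}|}{\delta}\}$ is decreasing. At $a=a_n$ we have $\log(1/a_n) \le \log(n/(2C\log n)) \le \log n$ once $n$ is large enough that $2C\log n\ge 1$. Hence
\[
\frac{C}{a_n}\max\Big\{\log\tfrac{1}{a_n},\log\tfrac{|\mathcal{Y}|}{\delta}\Big\}
\le \frac{C\max\{\log n,\log(|\mathcal{Y}|/\delta)\}}{a_n}\le n .
\]
The factor $2$ in $2\log n$ gives the slack needed here. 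Consequently every $x$ with $\adv(x)>a_n$ satisfies \eqref{avd_cond}.

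Next I would decompose the excess risk. Using $\mathcal{R}(h)-\mathcal{R}^*=\E_x[P(h^*(x)|x)-P(h(x)|x)]\le \E_x[\mathbb{I}\{h(x)\neq h^*(x)\}]$ and the label-aligned condition, $h^*(x)$ coincides with the most frequent bag label. Splitting on whether $\adv(x)\le a_n$ gives
\[
\mathcal{R}(h)-\mathcal{R}^*\le P(\adv(x)\le a_n)+P\big(\{x:\adv(x)>a_n,\ h(x)\neq h^*(x)\}\big).
\]
The remaining task is to show that the second term is at most $\delta$ with probability at least $1-\delta$ over the sample.

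For the second term there are two routes. \emph{Route (a):} use Theorem~\ref{uniform} (the $d_0$-modified threshold). With probability $1-\delta^2\ge 1-\delta$, the second set is empty, so the bound holds even without the extra $\delta$. This route, however, requires the modified threshold. \emph{Route (b):} stay with Algorithm~\ref{alg1} as stated and use Theorem~\ref{th1} with Markov's inequality. Let $Z=P(\{x:\adv(x)>a_n, h(x)\neq h^*(x)\})$, a random variable depending on the sample. By Fubini and Theorem~\ref{th1}, which applies to each such $x$ by the first step,
\[
\E_{\text{sample}}[Z]=\E_x\big[\mathbb{I}\{\adv(x)>a_n\}P_{\text{sample}}(h(x)\neq h^*(x))\big]\le\delta^2 .
\]
Markov's inequality then gives $P(Z>\delta)\le\delta^2/\delta=\delta$. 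So with probability at least $1-\delta$ we have $Z\le\delta$, which yields exactly the stated bound $\delta+P(\adv(x)\le a_n)$. The $\delta$ versus $\delta^2$ bookkeeping confirms that route (b) is the intended one.

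The main obstacle is the first step. It requires a careful check that $a_n$ dominates the implicit threshold in \eqref{avd_cond}. The difficulty is that $\log(1/\adv(x))$ enters non-monotonically, so I must argue via monotonicity of $a\mapsto\frac{1}{a}\log\frac1a$ for $a\le e^{-1}$, handle the case where the maximum is attained by $\log(|\mathcal{Y}|/\delta)$, and absorb small-$n$ cases into the constant $C$. A secondary point is measurability of $\adv$ and of $h$ for the Fubini step, which I would take as given.
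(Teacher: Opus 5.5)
Your proposal is correct and follows essentially the paper's argument: the paper first proves Lemma~\ref{lemma_rates}, which combines Theorem~\ref{th1} with an exchange of expectations and Markov's inequality exactly as in your route (b). It then verifies that $a_n$ meets the sample-size condition by using $a_n\ge 2C\log(n)/n$, which gives $n>1/a_n$ and hence $\log(1/a_n)<\log n$, as in your first step. The monotonicity of $a\mapsto\frac{C}{a}\max\{\log\frac1a,\log\frac{|\mathcal{Y}|}{\delta}\}$, which the paper leaves implicit, holds on all of $(0,1]$ because $a\mapsto\frac1a\log\frac1a$ is decreasing on $(0,e)$, so you do not need to restrict to $a\le e^{-1}$.
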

\begin{proof} See Appendix C.\end{proof}

Theorem \ref{rates} bounds the  excess risk of the  PL A-$k$NN  classifier with the Bayes risk in terms of the cumulative distribution of the advantage at $a_n$. Namely, the excess risk of  PL A-$k$NN  is upper bounded  by the probability mass  of the   instances  with advantage smaller  than $a_n$. 

The cumulative distribution of the advantage   can be seen   as a suitable notion of the "global  ambiguity" in a specific scenario.  Steeper functions  for  this  cumulative distribution  mean that 
 fewer instances  have  small advantage,  thus providing faster rates of convergence. Therefore, \hbox{Theorem \ref{rates}}  provides rates of  convergence tailored to the PLL scenario, as the rates only  depend on  the  global ambiguity  of the underlying distribution. On the other hand, the rates of convergence provided for other methods \citep{cabannes2020structured,feng2020provably,lv2023robustness}  are given in terms of   the Rademacher complexity of the  family  of classifiers considered and the maximum ambiguity in the scenario, thereby not fully  adapting  the bounds to the underlying distribution. 
 
 In Appendix B we provide explicit finite-sample rates of convergence by assuming two  common smoothness conditions  for non-parametric estimators over $P$.

\subsection{\texorpdfstring{Robustness of PL A-$k$NN under relaxed label-aligned conditions}{Robustness of PL A-kNN under relaxed from label-aligned conditions}}

The  label-aligned  condition \eqref{partial_condition}   might  not hold 
in some  real-life scenarios. For example, the condition  might not be satisfied in  a   small subset of instances. In other cases, the  most frequent  labels in  the bags  might not be  the most probable labels but  have  conditional probabilities   close to the maximum. Therefore, we weaken the original condition \eqref{partial_condition} and  bound the behavior of the limiting risk of the PL A-$k$NN  classifier in these cases.

For any instance $x$ and threshold $\theta \in [0,1]$, we define  as $\mathcal{Y}^\theta(x)$ the set of labels whose conditional probability are within a margin $\theta$ of the most probable label. Specifically, 
\begin{align*}
    & \mathcal{Y}^\theta(x)= \bigl\{ i \in \mathcal{Y} :  \hspace{1mm} P(j|x)- P(i|x)\leq \theta \hspace{3mm} \forall j \in \mathcal{Y} \bigl\}.
\end{align*}
 We assume that  bags of labels  satisfy the following relaxed label-aligned condition for a subset $G\subseteq \mathcal{X}$ and $ \theta \in [0,1]$:
\begin{equation} \label{relax}
\begin{alignedat}{2}
    \arg\max_{y \in \mathcal{Y}} P(\mathcal{S}_y |x) 
        &= \arg\max_{y \in \mathcal{Y}} P(y | x) 
        && \hspace{4mm} \forall x \in \mathcal{X}\setminus G \\[6pt]
    \arg\max_{y \in \mathcal{Y}} P(\mathcal{S}_y | x) 
        &\subseteq \ \mathcal{Y}^\theta(x),
        && \hspace{4mm} \forall x \in G \vphantom{\arg\max_{y \in \mathcal{Y}} P(\mathcal{S}_y \mid x)}
\end{alignedat}
\end{equation}

\begin{figure*}[!t]
    \centering
    \begin{subfigure}[b]{0.48\linewidth}
        \centering
        \includegraphics[width=\linewidth]{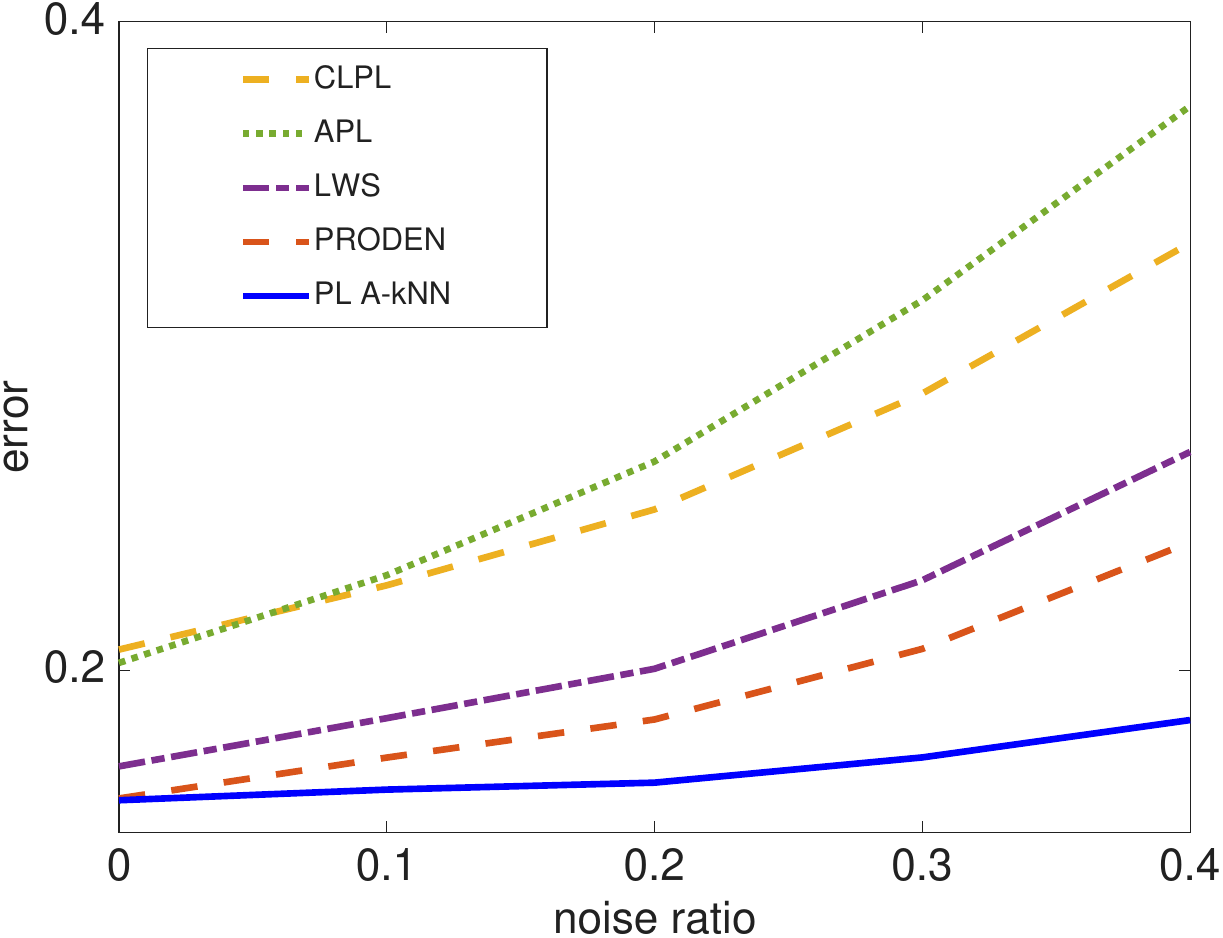}
        \caption{ Fashion-MNIST}
    \end{subfigure}
    \hspace{0.02\linewidth}
    \begin{subfigure}[b]{0.48\linewidth}
        \centering
        \includegraphics[width=\linewidth]{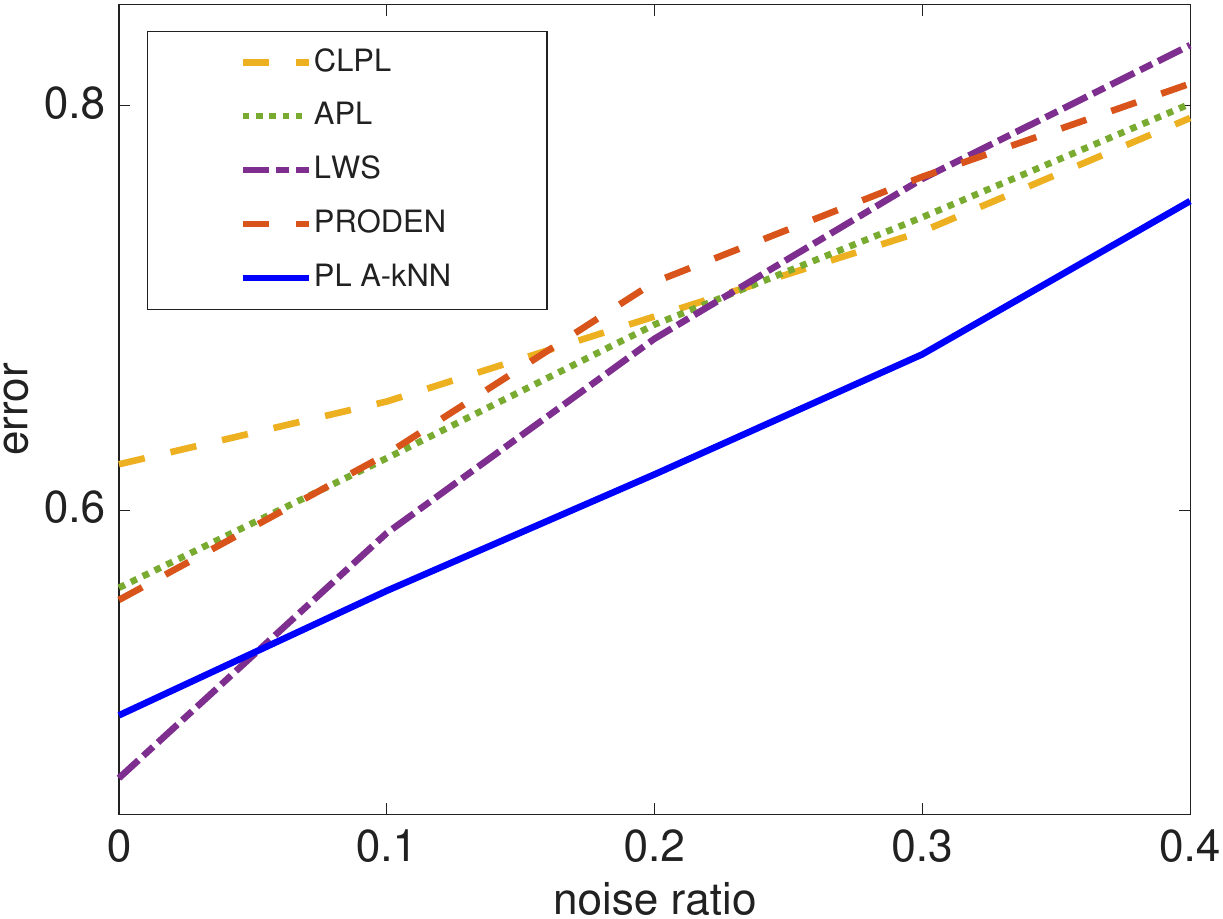}
        \caption{MSRCV2}
    \end{subfigure}
    \caption{Comparison of the error rates of PL A-$k$NN and state-of-the-art methods for  Fashion-MNIST and MSCRv2 under  an increasing noise rate. The results show that PL A-$k$NN consistently outperforms existing approaches across a wide range of noise levels. See Appendix D for the comparison in MNIST CIFAR10 and MirFlickr.}
    \label{results_sota}
\end{figure*}

\begin{figure*}[!t]
    \centering
    \begin{subfigure}[b]{0.48\textwidth}
        \centering
        \includegraphics[width=\linewidth]{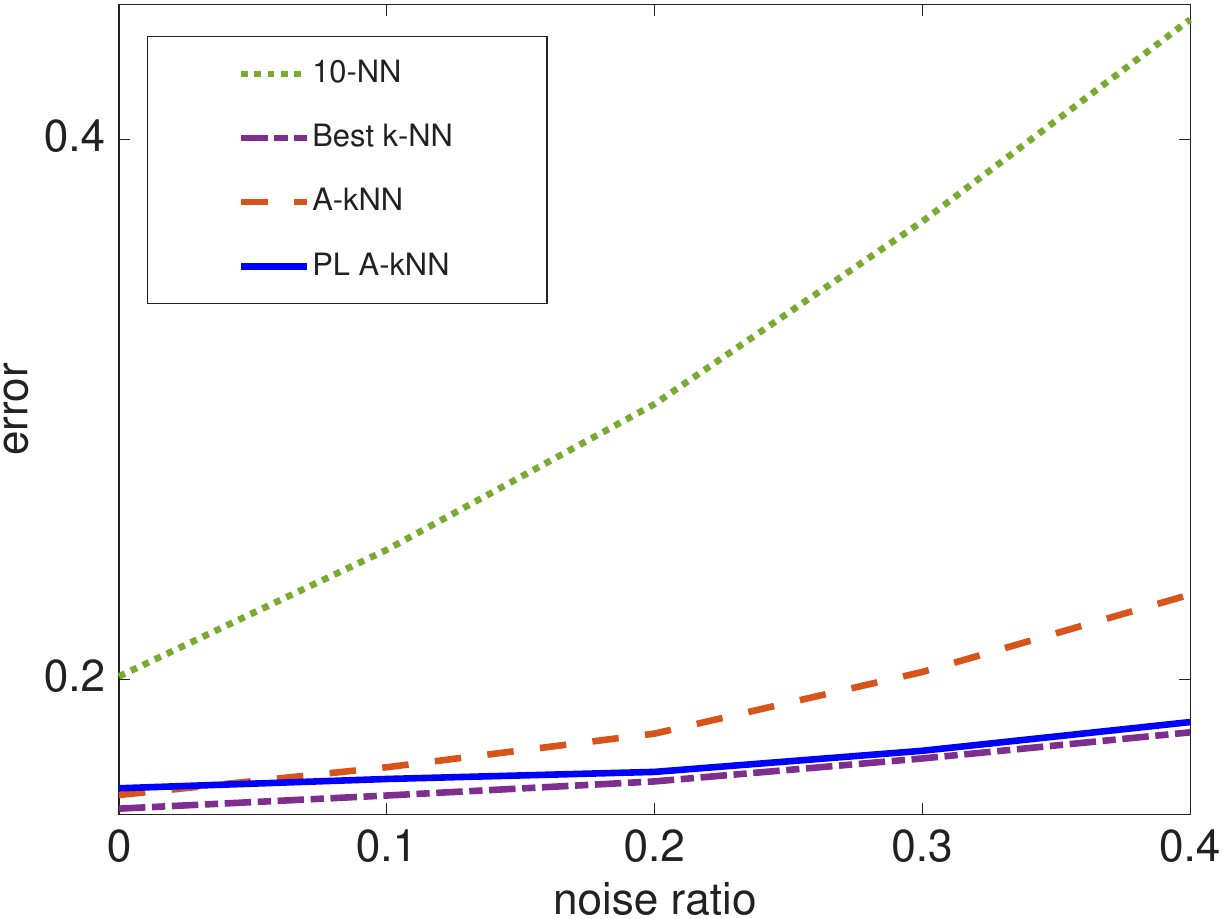}
        \caption{Fashion-MNIST}
    \end{subfigure}
    \hfill
    \begin{subfigure}[b]{0.48\textwidth}
        \centering
        \includegraphics[width=\linewidth]{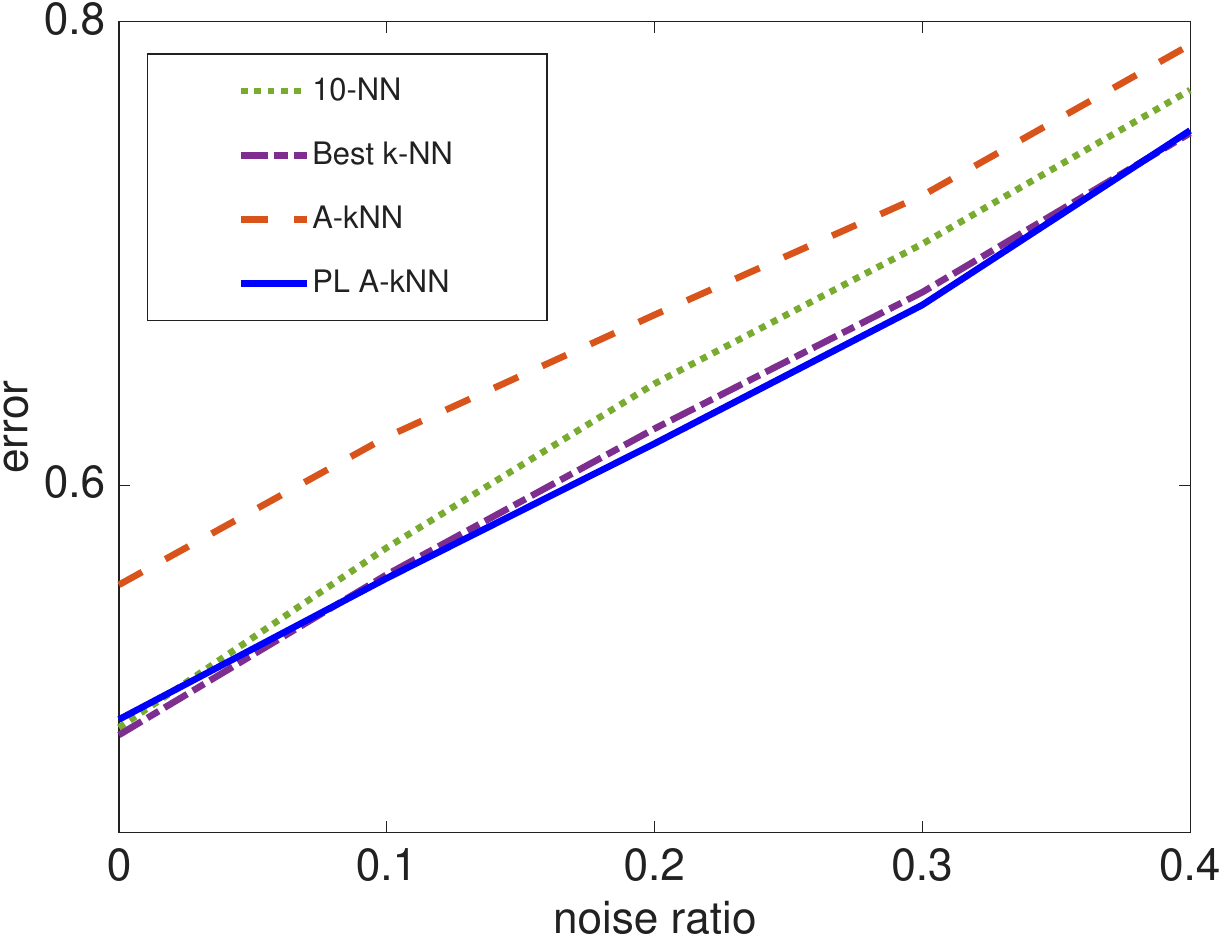}
        \caption{MSRCV2}
    \end{subfigure}
    \caption{Comparison of the error rates of PL A-$k$NN and state-of-the-art methods for  Fashion-MNIST and MSRCv2 under  an increasing noise rate. The results show that PL A-$k$NN  outperforms $10$-NN and A-$k$NN across a wide range of noise levels, while having comparable performance to the best $k$NN. See Appendix D for the comparison in MNIST CIFAR10 and MirFlickr.}
    \label{results_knn}
\end{figure*}

  The following result bounds the limiting risk of the PL A-$k$NN  algorithm when  the underlying distribution satisfies the label-aligned relaxation \eqref{relax}.

\begin{theorem} \label{limit_any_dist}
  (Limiting risk bounds) Let  $h_{n}$  be  a classifier from  PL A-$k$NN Algorithm \ref{alg1}  with parameters that satisfy   \eqref{conf_seq}. If the underlying probability distribution  satisfies the  relaxed label-aligned  condition \eqref{relax} for $G\subseteq \mathcal{X}$ and  $\theta$  , we  have  
  \begin{align*}
  \lim_{n\rightarrow \infty}\mathcal{R}(h_n)  \leq  \mathcal{R}^* + \theta P(G) 
  \end{align*}
  almost surely.
\end{theorem}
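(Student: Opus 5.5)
The plan is to reuse the pointwise convergence argument behind Theorem \ref{consistency}, which never really needs label-alignment. The idea is that, almost surely, for almost every $x$ the prediction $h_n(x)$ eventually lies in $\argmax_{y} P(\mathcal{S}_y|x)$. Label-alignment only enters when this set is identified with the Bayes labels. Under the relaxed condition \eqref{relax}, that identification still holds off $G$. On $G$, any label in $\argmax_y P(\mathcal{S}_y|x)\subseteq\mathcal{Y}^\theta(x)$ loses at most $\theta$ in conditional accuracy compared with the Bayes label.

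\textbf{Step 1 (pointwise bag-consistency).} Fix $x$ with $\adv(x)>0$. By the last claim of Theorem \ref{th1}, the set where $\adv(x)=0$ has measure zero, and $\adv(x)$ is defined purely through the bag probabilities $P(\mathcal{S}_y|B(x,r))$. The concentration argument of Theorem \ref{th1} therefore gives the following without using \eqref{partial_condition}. Once $n$ satisfies \eqref{avd_cond}, with probability at least $1-\delta_n^2$ no label of $\argmax_y P(\mathcal{S}_y|x)$ is eliminated, and all other labels are eliminated within $T_n$ steps. If several labels remain tied and $\adv(x)=1$, every surviving label is still in the argmax. The conditions \eqref{conf_seq} ensure that \eqref{avd_cond} eventually holds, since $\log(1/\delta_n)/n\to0$. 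They also give $\sum_n\delta_n^2<\infty$, so Borel--Cantelli yields, almost surely, $h_n(x)\in\argmax_y P(\mathcal{S}_y|x)$ for all large $n$.

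\textbf{Step 2 (from pointwise to risk).} Write $\mathcal{R}(h_n)-\mathcal{R}^*=\E_x[P(h^*(x)|x)-P(h_n(x)|x)]$. Fubini over (sample, $x$) combined with Step 1 shows that, almost surely, for $P$-a.e.\ $x$ we eventually have $h_n(x)\in\argmax_y P(\mathcal{S}_y|x)$. The integrand is bounded by $1$, so dominated convergence gives $\limsup_n(\mathcal{R}(h_n)-\mathcal{R}^*)\le \E_x[\limsup_n (P(h^*(x)|x)-P(h_n(x)|x))]$. For $x\notin G$, \eqref{relax} places $h_n(x)$ eventually among the Bayes labels, so the integrand tends to $0$. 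For $x\in G$, $h_n(x)\in\mathcal{Y}^\theta(x)$ eventually, so by the definition of $\mathcal{Y}^\theta(x)$ the integrand is at most $\theta$. Integrating gives the bound $\theta P(G)$.

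The \textbf{main obstacle} is making Step 1 rigorous independently of the label-aligned hypothesis. This requires checking that the saliency/advantage analysis (no true bag-argmax label is eliminated, and all others are removed) depends only on $P(s|x)$. It also requires that the almost-sure statement holds jointly for a.e.\ $x$ rather than for each fixed $x$. I would handle the latter with the Fubini argument on the product measure, as presumably done in the proof of Theorem \ref{consistency}. Care is also needed with the disambiguation step when $|\hat s|>1$ after $T_n$ iterations; Step 1 shows this event eventually does not occur for a.e.\ $x$.
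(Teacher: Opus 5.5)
Your proof is correct. Step 1 is exactly the paper's Lemma~\ref{relax_query_rates}: the argument of Theorem~\ref{th1} only ever shows $h(x)\in\argmax_y P(\mathcal{S}_y|x)$, and label-alignment (or its relaxation) enters only in the final identification.

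You differ from the paper in how you pass from points to the risk. The paper first proves a finite-sample statement, Lemma~\ref{relax_rates}. It uses Markov's inequality to turn the per-point failure probability $\delta^2$ into the bound $\delta+P(\adv(x)\le a_n)+\theta P(G)$, holding with probability $1-\delta$. It then applies Borel--Cantelli over $n$ using $\sum_n\delta_n<\infty$, and finishes with $P(\adv(x)\le a_n)\to 0$ from $a_n\to0$ and Lemma~\ref{lemma1}.

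You instead apply Borel--Cantelli pointwise in $x$, using $\sum_n\delta_n^2<\infty$. You then swap quantifiers by Tonelli on the product of sample space and $\mathcal{X}$, and conclude with $\limsup_n\E_x[f_n]\le\E_x[\limsup_n f_n]$ for $f_n\le 1$. That last step is reverse Fatou, not dominated convergence.

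Your route is shorter and only needs square-summable $\delta_n$. It also avoids conditioning on $\{\adv(x)>a_n\}$ and the two separate Markov steps on $G$ and $\mathcal{X}\setminus G$; the paper treats that union bound a bit loosely. The paper's route buys the non-asymptotic high-probability bound of Lemma~\ref{relax_rates}, which your purely asymptotic argument does not give.

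Two statements in your Step 1 and closing remark are overstated, though neither affects the conclusion.
\begin{itemize}
\item The argument of Theorem~\ref{th1} does not show that no label of $\argmax_y P(\mathcal{S}_y|x)$ is ever eliminated. Labels in that set can eliminate one another, because saliency only controls their gaps against labels outside the set, and the paper says so explicitly. What it shows is that at least one of them survives, and every label outside the set is eliminated by step $k=np/(1+c_0)\le T_n$.
\item When $\argmax_y P(\mathcal{S}_y|x)$ has between $2$ and $|\mathcal{Y}|-1$ elements, which can happen on a set of positive measure, the event $|\hat s|>1$ after $T_n$ iterations need not eventually fail. What makes your argument work is that $\hat s\subseteq\argmax_y P(\mathcal{S}_y|x)$ eventually, so any disambiguation rule returns a label in $\mathcal{Y}^\theta(x)$ on $G$ and a Bayes label off $G$.
\end{itemize}
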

\begin{proof} See Appendix C.\end{proof}

The previous result shows  that the  PL A-$k$NN classifier provides reliable predictions  when the underlying probability distribution satisfies a  slight weakening of  the label-aligned condition \eqref{partial_condition}. In particular, the risk of PL A-kNN is close to the Bayes risk when only a small subset of instances fails to satisfy \eqref{partial_condition}, or when the most frequent labels over the bags  have conditional probabilities  similar to that of the most probable label.

The results above show that the presented PL A-$k$NN method provides strong performance guarantees with minimal assumptions. The next section  shows that the  algorithm is effective in practice under general PLL scenarios.

\section{Experimental results} \label{sec:experiments}
In this section we evaluate the performance of \hbox{PL A-$k$NN}  over a wide set of  artificial  partial label scenarios obtained from  the MNIST, Fashion-MNIST  and CIFAR10 datasets, as well as in  two real-life partial labels datasets, Mirflickr \cite{HuiskesLew2008} and MSRCv2 \cite{liu2012conditional}, under different levels of noise. For the vision datasets, we extract feature representations using GoogLeNet \cite{szegedy2015going} and subsequently apply principal component analysis to retain the 50 most informative components.

We   compare  PL  A-$k$NN  with four  other state of the art  partial labels methods with theoretical backing. In particular, we compare with  the CLPL method from  \cite{cour2011learning},  the LWS method from \cite{wen2021leveraged},  the PRODEN  method from \cite{lv2020progressive} and the   APL  method from \cite{lv2023robustness}. We use the linear variant of all models  to provide a fair comparison and   tune the models from previous work according to the specifications in their papers. 

We compare  PL  A-$k$NN with three $k$NN benchmarks:  the  ideal $k$NN that uses the optimal number of neighbors;  $10$NN,  which is the usual choice of $k$ in previous works; and the multiclass extension of A-$k$NN from \cite{balsubramani2019adaptive}.

\subsection{Generation of the artificial  partially labeled scenarios}
We construct partially labeled datasets from supervised datasets  by generating   bag generation processes  that vary  across instances  and allow noise. 

The bag generation process are generated as follows. We partition the training set into five clusters. For cluster $j$ and label $i$  we draw  uniformly  $\alpha_{i,j}\in [0, 0.8]$, which describe the  probability of adding  each incorrect label to the bag of  instances in $j$ with ground truth label $i$. The ground-truth label is, by construction, always included in the candidate set. This procedure generates bags with high  ambiguity and ensures that the bag generation process varies across clusters. To add  noise  we adopt the confusion process proposed by \cite{lv2023robustness}. With probability $\nu$, the ground truth label is replaced by a uniformly sampled random label. This corrupted label is then treated as the ground truth for bag generation---meaning it is guaranteed to appear in the bag, and other labels are added according to the same cluster-specific process described above. 

For the real partially labeled datasets,  label noise is introduced by removing the true label from the bags.

\subsection{ \texorpdfstring{Specifications for PL  A-$k$NN}{Specifications for PL  A-kNN} }
 We fix the hyperparameters of PL A-$k$NN as $c_1=0.5$ and $\delta=0.1$ across all experiments, with $T=400$ for vision datasets and $T=50$ for the  substantially smaller real-life datasets. Appendix D provides a detailed description of the preprocessing used for all nearest-neighbor methods.

\subsection{Results}

Figures \ref{results_sota} and \ref{results_knn} include the performance of PL A-$k$NN  with respect to the  other models over an increasing noise ratio. Specifically, Figure \ref{results_sota} compares PL A-$k$NN with four state-of-the-art methods for CIFAR-10 and MSRCv2, while Figure \ref{results_knn} compares it with three $k$NN-based benchmarks under the same datasets. We only include results for one synthetic partially labeled vision dataset and one real dataset because  they are representative of the trends observed across the remaining datasets. Complete results for all experimental settings are provided in Appendix~D.  All experiments use an 80/20 train-test split, and results are averaged over 100 independent runs.

Figure \ref{results_sota}   shows that PL A-$k$NN outperforms  state-of-the-art methods  in general scenarios under a wide range of noise levels. Unlike competing methods, \hbox{PL A-$k$NN} consitently delivers strong performance, either matching or exceeding the alternatives in every setting. By contrast, the other methods tend to perform well only under specific conditions but degrade significantly in others. For example, the LWS method \cite{wen2021leveraged} and PRODEN \cite{lv2020progressive} achieve high accuracy when noise is low but suffers substantial drops under high noise. Conversely, the CLPL method \cite{cour2011learning} performs very  poorly in low-noise scenarios but  improves  in  the highly noisy settings of the real partial labels datasets.

 Figure \ref{results_knn} shows that PL A-$k$NN provides a clear improvement over the standard $10$-NN baseline commonly adopted in this setting \citep{hullermeier2006learning,zhang2015solving}.  Notably, PL A-$k$NN  also outperforms the extension for A-$k$NN from \cite{balsubramani2019adaptive},  achieving a  comparable performance  to the  $k$NN under the best choice of number of neighbors.

The experimental results show that PL A-$k$NN can provide accurate classification rules across a wide range of  general scenarios—from cases with a high ratio of noise to cases where the true label is always included in the bags.

\section{Conclusion}
In this paper we  mathematically characterize the settings in which PLL is feasible. In addition, we present PL A-$k$NN, an  adaptive nearest neighbors algorithm for PLL  that is  effective in general scenarios. We provide theoretical guarantees for PL A-$k$NN under weaker assumptions than previous work. In particular, we prove the Bayes consistency of PL A-$k$NN for any label-aligned scenario and show that no other algorithm can achieve consistency in strictly more general scenarios than  PL A-$k$NN. Moreover, we also provide rates of convergence to the Bayes risk. Experimental results show that PL A-$k$NN can  outperform state-of-the-art methods in general scenarios,   matching the performance of the  ideal nearest neighbors with the optimal number of neighbors.

\subsection*{Limitations}
The proposed PL A-$k$NN method effectiveness depends on the availability of a preprocessing pipeline that yields a metric structure compatible with nearest-neighbor classification. When the features do not adequately capture label similarity, performance may degrade. Therefore, careful feature design and preprocessing remain important for practical deployment.

\subsection*{Acknowledgments}
Funding for this work was provided by the Spanish Ministry of Science and Innovation under grants PID2022-137063NB-I00 and PID2022-137442NB-I00, funded by MCIN/AEI/10.13039/501100011033 and European Union “NextGenerationEU”/PRTR; the BCAM Severo Ochoa accreditation CEX2021-001142-S (MICIN/AEI/10.13039/501100011033); the Basque Government BERC 2022–2025 IT2109-26 and ELKARTEK programs; and the U.S. National Science Foundation under grant CCF-2217058.
\bibliographystyle{plainnat}  
\bibliography{refs.bib}

\clearpage
\onecolumn
\appendix
\section{On the generality of the label-aligned condition}

In this appendix, we introduce the partial labels model of \citet{liu2014learnability}, which enables the recovery of the underlying distribution via empirical risk minimization, and  six other partial label models from the literature \citep{cour2011learning,cabannes2020structured,feng2020provably,wen2021leveraged,lv2023robustness} for which previous works have proposed Bayes consistent learning methods. We prove that    six of these models  correspond to  specific cases of   label-aligned   bags \eqref{partial_condition}. For the last partial labels model, we show that  it considers  scenarios   that   do not lead to label-aligned bags,  and that  there are  label-aligned cases  that  do not satisfy the model's assumptions,  proving  that neither condition is more general than the other.

\subsection{\texorpdfstring{Partial labels model from \cite{liu2014learnability}}{Partial labels model from liu2014learnability }}

The partial labels model from \cite{liu2014learnability} assumes deterministic labels  (i.e., probability distributions that assign all the probability  mass of each instance to a single label), that the ground-truth label is always included in the bags,  and that  no  other label  can always  co-occur with the ground-truth label  in the bags of an instance (see the beginning of Section 3 and Subsection 3.1 in \cite{liu2014learnability}). More precisely, the model in \cite{liu2014learnability} considers scenarios that satisfy
\begin{align*}
& P(y | x) \in \{0,1\}, && \forall\, y \in \mathcal{Y},\, x \in \mathcal{X}, \\[4pt]
& y \notin s \;\Longrightarrow\; P(s | y, x) = 0, && \forall\, y \in \mathcal{Y},\, x \in \mathcal{X}, \\[4pt]
& \text{for each } x \in \mathcal{X}, \text{ if } P(y=i | x)=1, 
   \text{ then } P(\mathcal{S}_j | x,i) < 1, 
   && \forall\, j \in \mathcal{Y} \setminus \{i\}.
\end{align*}
We now  prove  that the  PLL scenarios that satisfy the previous conditions are label-aligned. Let $P$ be a distribution that satisfies the previous assumptions. For a given instance $x$, let  $i= \argmax_{y \in \mathcal{Y}} P(y|x) $. Then,  as labels are deterministic, we have that  $P(i|x)=1$, and since the ground-truth  label is always included in the bags, $P(\mathcal{S}_i|x)=1$. The co-occurrence condition   implies that  $P(\mathcal{S}_j|x)=P(\mathcal{S}_j|i,x)<1, \forall j \neq i$. Therefore $i= \argmax_{y \in \mathcal{Y}} P(\mathcal{S}_y|x)$, and $P$  has  label-aligned bags.

\subsection{\texorpdfstring{Partial labels model from \cite{cour2011learning}}{Partial labels model from cour2011learning}}

The partial labels model from \citet{cour2011learning}  assumes  the  label-aligned condition  (see Proposition  5 in  \cite{cour2011learning}). Moreover, it also assumes that the bag generation process satisfies the following domination condition: 
\begin{align*}
    & \forall x \in \mathcal{X}, \text{ let } 
      i \in \argmax_{y \in \mathcal{Y}} P(\mathcal{S}_y|x), \quad 
      j \notin \argmax_{y \in \mathcal{Y}} P(\mathcal{S}_y|x),\\
    & \text{and } s \in \mathcal{S} \text{ such that } i, j \notin s, \text{ then}\\[4pt]
    & \quad P(s \cup \{i\}|x) > P(s \cup \{j\}|x) \,.
\end{align*}

\subsection{\texorpdfstring{Partial labels model from \cite{cabannes2020structured}}{Partial labels model from cabannes2020structured }}

The partial labels model from \cite{cabannes2020structured} assumes that, for each instance,  the intersection of all the possible bags of  that instance is exactly   the most probable label (see Definition 1 and 2, Proposition 1 in \cite{cabannes2020structured}). More precisely,  the model in \cite{cabannes2020structured} considers scenarios that satisfy
\begin{align*}
  & \forall x \in \mathcal{X},\text{let}  \hspace{2mm}  i=\argmax_{y \in \mathcal{Y}} P(y|x),  \hspace{2mm}  \text{then} \\
   &
   \bigcap_{s\in \mathcal{S}:P(s|x)>0} s= \{i\}.
\end{align*}

We now  prove  that the  PLL scenarios that satisfy the previous conditions are label-aligned. Let $P$ be a distribution that satisfies the previous assumptions. For a given instance $x$, let  $i= \argmax_{y \in \mathcal{Y}} P(y|x) $. Then, we have that  $P(\mathcal{S}_i|x)=1$ and  $P(\mathcal{S}_j|x)<1, \forall j \neq i$, since only the most probable label is always included in the bags. Therefore, $i= \argmax_{y \in \mathcal{Y}} P(\mathcal{S}_y|x)$  and $P$  has  label-aligned bags. 

\subsection{\texorpdfstring{Partial labels model from \cite{feng2020provably}}{Partial labels model from feng2020provably}}

The partial labels model from \cite{feng2020provably} assumes that all bags that contain the  ground-truth label are equally probable (see Equation 5 in \cite{feng2020provably}). More precisely,  the model in \cite{feng2020provably} considers scenarios that satisfy
\begin{align*}
    P(s|y,x)=\begin{dcases}
        \frac{1}{2^{|\mathcal{Y}|-1}-1} & y\in s\\
        0 &y\notin s.
        \end{dcases}
\end{align*}

We now  prove  that all the  PLL scenarios that satisfy the previous condition have label-aligned bags. Let $P$ be a distribution that satisfies the previous assumption. 

Let  labels $i,j,k,k_2 \in \mathcal{Y}$ be such that  $i \neq k$ and $j \neq k_2$. Then, 
\begin{align*}
&P(\mathcal{S}_i|k,x)= \sum_{s\in \mathcal{S}_i: k\in s} P(s|k,x) =\sum_{s\in \mathcal{S}_i: k\in s}  \frac{1}{2^{|\mathcal{Y}|-1}-1}=|{s\in \mathcal{S}_i: k\in s}|  \frac{1}{2^{|\mathcal{Y}|-1}-1}=\\
&|{s\in \mathcal{S}_j: k_2\in s}|  \frac{1}{2^{|\mathcal{Y}|-1}-1}= \sum_{s\in \mathcal{S}_j: k_2\in s}  \frac{1}{2^{|\mathcal{Y}|-1}-1} =\sum_{s\in \mathcal{S}_j: k_2\in s} P(s|k_2,x) =P(\mathcal{S}_j|k_2,x).
\end{align*}
We can prove analogously that $P(\mathcal{S}_i|i,x)=P(\mathcal{S}_j|j,x)$.
 
 We  denote now   $\alpha(x)=P(\mathcal{S}_i|i,x) -P(\mathcal{S}_j|k,x)$, where  $j \neq k$.  
 Due to the previous equalities, the value of alpha 
 does not depend on the labels, and  is clear that $\alpha(x)>0$  $\forall x \in \mathcal{X}$, since $|\mathcal{S}_i|>|{s\in \mathcal{S}_j: k\in s}|$.

We then have that 
$P(\mathcal{S}_i|x)-P(\mathcal{S}_j|x)= \sum_{y \in \mathcal{Y}} (P(\mathcal{S}_i|y,x)-P(\mathcal{S}_j|y,x))P(y|x)=\alpha(x)(P(i|x)-P(j|x))$. Therefore,  for any given instance, if $P(i|x)>P(j|x)$, then $P(\mathcal{S}_i|x)>P(\mathcal{S}_j|x)$, so $P$ has label-aligned bags.

\subsection{\texorpdfstring{Partial labels model from \cite{wen2021leveraged}}{Partial labels model from wen2021leveraged}}

The partial labels model from  \cite{wen2021leveraged} assumes deterministic labels, that the ground-truth label $i$ is always included in the bags,  and  that the other labels $j$ in the bags are independently drawn given  probabilities  $q_{ij}<1$  (see Subsection 3.3.1 and Equation 11 in \cite{wen2021leveraged}). More precisely, the model in  \cite{wen2021leveraged} considers scenarios that satisfy 
\begin{align*}
    & P(y|x) \in \{0,1\}, 
      & \forall y \in \mathcal{Y}, x \in \mathcal{X},\\
    & P(s|i,x) = \prod_{k \in s} q_{i,k} \prod_{j \notin s} (1 - q_{i,j}), 
      & \forall s \in \mathcal{S}, \forall y \in \mathcal{Y}, x \in \mathcal{X}, 
      \text{ where}\\
    & q_{i,i} = 1, \quad 
       \text{and} \quad q_{i,j} < 1, \quad i \neq j.
\end{align*}

We prove now that all distributions that satisfy the previous conditions are label-aligned. Let $P$ be a distribution that satisfies the previous assumptions. For a given instance $x$, let  $i\in \mathcal{Y}$ be the label with $P(i|x)=1$. Then $P(\mathcal{S}_i|x)=P(\mathcal{S}_i|i,x)=1$, since  for any bag $s$ with $i\notin s$ we have that $P(s|i,x)=0$. Moreover, $P(\{i\}|x)=P(\{i\}|i,x)>0$, since $q_{i,j}<1 $ $ \forall i\neq j$, and therefore $P(\mathcal{S}_j|x)<1$ for any  $j \neq i$. We conclude then that $P$ is label-aligned.

\textbf{Note:} \cite{wen2021leveraged} states that their model is intended for stochastic labels, in which case the bags are not always label-aligned.
 However, if  labels are  stochastic, their assumptions on the bag generation process are incompatible with the  Bayes consistency of any algorithm. In  fact, the proofs of their theorems  assume implicitly deterministic labels.  The following provides a short counterexample for the stochastic case, where we show that no algorithm can be Bayes consistent under the assumptions of the model.

 Let us consider a binary label space and an instance space consisting of a single point. Let two partial label probability distributions be given by
\begin{align*}
& P(1) = 2/3 , \hspace{2mm} P(2) = 1/3 , \hspace{2mm} q_{1,1} = 1 , \hspace{2mm} q_{1,2} = 2/3 , \hspace{2mm} q_{2,2} = 1 , \hspace{2mm} q_{2,1} = 0 \\
& \hat{P}(1) = 1/3 , \hspace{2mm} \hat{P}(2) = 2/3 , \hspace{2mm} \hat{q}_{1,1} = 1 , \hspace{2mm} \hat{q}_{1,2} = 1/3 , \hspace{2mm} \hat{q}_{2,2} = 1 , \hspace{2mm} \hat{q}_{2,1} = 1/2 .
\end{align*}
Both scenarios abide the conditions from the partial labels model in \cite{wen2021leveraged} and  generate the same distribution of bags 
\begin{align*}
P(\{1\}) = 2/9 , \hspace{2mm} P(\{2\}) = 3/9 , \hspace{2mm} P(\{1,2\}) = 4/9 ,
\end{align*}
while having the two  different labels as  Bayes rules. Therefore, it is not   possible for an algorithm to be Bayes consistent for  all   scenarios  considered in  the model when the  labels are stochastic.

\subsection{\texorpdfstring{ Noiseless partial labels model from \cite{lv2023robustness}}{Noiseless partial labels model from v2023robustness}}

The  noiseless partial labels model from \cite{lv2023robustness} assumes  deterministic labels, that the ground truth label is always included in the bags,    that the  bag generation process   is the same for all instances, and that  the frequencies within the bags   of labels other than the ground-truth label are strictly less than one (see Section 4.2 and Theorem 1  in \cite{lv2023robustness}).  More precisely,  the noiseless model in \cite{lv2023robustness} considers scenarios that satisfy
\begin{align*}
& P(y|x) \in \{0,1\} & \forall y \in \mathcal{Y},\forall  x \in \mathcal{X} \\    
 & y \notin s \Longrightarrow P(s|y,x)=0 & \forall y \in \mathcal{Y}, \forall x \in \mathcal{X} \\
 & P(s|y,x_1)=P(s|y,x_2)    & \forall y \in \mathcal{Y}, \forall x_1,x_2 \in \mathcal{X}\\
 &P(\mathcal{S}_j|i,x)<1 & i\ne j.
\end{align*}

We now  prove that all PLL scenarios  that satisfy the previous assumptions have  label-aligned bags . Let $P$ be a distribution that satisfies the previous assumptions. For a given instance, let  $i\in \mathcal{Y}$ be the label with $P(i|x)=1$. We then  have that  $P(\mathcal{S}_i|x)=P(\mathcal{S}_i|i,x)=1$, since  for any bag $s$ with $i\notin s$ we have that $P(s|i,x)=0$. On the other hand  $P(\mathcal{S}_j|x)=P(\mathcal{S}_j|i,x)<1$ for any  $j \neq i$. We conclude then that $P$ has  label aligned bags.

\subsection{\texorpdfstring{ Noisy partial labels model from \cite{lv2023robustness}}{Noisy partial labels model from v2023robustness}}

The  noisy  partial labels model from \cite{lv2023robustness} assumes deterministic labels,  that  the bags generation process does not vary across instances, and that the most probable label is the one most likely to be selected when randomly choosing a label from the bags(see Section 4.2 and Theorem 5  in \cite{lv2023robustness}). More precisely,  the noisy model in \cite{lv2023robustness} considers scenarios that satisfy
\begin{align*}
    & P(y|x) \in \{0,1\}, 
      \quad \forall y \in \mathcal{Y}, \forall x \in \mathcal{X},\\
    & P(s|y,x_1) = P(s|y,x_2), 
      \quad \forall y \in \mathcal{Y}, \forall x_1,x_2 \in \mathcal{X},\\
    &   \sum_{s \in \mathcal{S}_i} \frac{1}{|s|} P(s|i,x) 
      > \sum_{s \in \mathcal{S}_j} \frac{1}{|s|} P(s|i,x) \quad  \forall i \in  \mathcal{Y}, \forall j \neq i, \forall x \in \mathcal{X} .
\end{align*}

In this case,  there are PLL scenarios  that  satisfy the model's assumptions but do not  have label-aligned bags, and  there are  label-aligned cases that  do not satisfy the model's assumptions.  For example,  in a  partial labels problem  with three labels and only one instance, let the distribution over the labels  of $P_1$ be
\begin{align*}
 P_1(1)=1    \hspace{2mm} P_1(2)=0 \hspace{2mm} P_1(3)=0 
\end{align*}
and the bag generation process when $1$ is the true label be 
\begin{align*}
    &P_1(\{1\}|1)=0.1 \hspace{5.5mm} P_1(\{2\}|1)=0\hspace{5.5mm}  P_1(\{3\}|1)=0.4 \\
    & P_1(\{1,2 \}|1)=0.5 \hspace{2mm} P_1(\{2,3\}|1)=0 \hspace{2mm}  P_1(\{1,3\}|1)=0\\
    &  P_1(\{1,2,3\}|1)=0. 
\end{align*}
 We then have that the distribution over the bags is  $P_1(s)=P_1(s|1)$, so $P_1(\mathcal{S}_1)=0.6 $, $P_1(\mathcal{S}_2)=0.5 $, and $P_1(\mathcal{S}_3)=0.4 $, and therefore  $P_1$ has  label-aligned bags . However,
\begin{align*}
 \sum_{s \in \mathcal{S}_1}\frac{1}{|s|}P_1(s|1,x)=0.35< 0.4=\sum_{s \in \mathcal{S}_3} \frac{1}{|s|}P_1(s|1,x)    
\end{align*}
and therefore  $P_1$ does not satisfy the previous assumptions.

On the other hand, let  $P_2$ be the underlying distribution of other partial labels  problem   with three labels and only one instance, with distribution over the labels  
\begin{align*}
 P_2(1)=0    \hspace{2mm} P_2(2)=0 \hspace{2mm} P_2(3)=1,
\end{align*}
 
and  bag generation process when $3$ is the ground-truth label 
\begin{align*}
    &P_2(\{1\}|3)=0.1 \hspace{5.5mm} P_2(\{2\}|3)=0\hspace{5.5mm}  P_2(\{3\}|3)=0.4 \\
    & P_2(\{1,2 \}|3)=0.5 \hspace{2mm} P_2(\{2,3\}|3)=0 \hspace{2mm}  P_2(\{1,3\}|3)=0\\
    &  P_2(\{1,2,3\}|3)=0. 
\end{align*}
 We then  have that the distribution over  the bags is  $P_2(s)=P_2(s|3)$, so $P_2(\mathcal{S}_1)=0.6 $, $P_2(\mathcal{S}_2)=0.5 $, and $P_2(\mathcal{S}_3)=0.4 $ and therefore  $P_2$ does not have  label-aligned bags. However,

\begin{align*}
 \sum_{s \in \mathcal{S}_1}\frac{1}{|s|}P_2(s|3,x)=0.35 \hspace{2mm}
 \sum_{s \in \mathcal{S}_1}\frac{1}{|s|}P_2(s|3,x)=0.25  \hspace{2mm}
 \sum_{s \in \mathcal{S}_3} \frac{1}{|s|}P_2(s|3,x)= 0.4 ,   
\end{align*}
 so $P_2$ satisfies 
the assumptions of the model.

\section{\texorpdfstring{Extended Theoretical Analysis of PL A-$k$NN}{Extended Theoretical Analysis of PL A-$k$NN}}

\subsection{Notation}
Let  $\{ (x_l, s_{l})\}_{l=1}^n $  be the set of  training examples. For any subset $G\subseteq \mathcal{X}$, the empirical count and mass are taken as:
\begin{align*}
   & \#_n(G)= |\{ l: x_l\in G\}| \\
   & P_n(G)= \frac{ \#_n(G)}{n}.
\end{align*}

In addition, for any subset  $G\subseteq \mathcal{X}$ with    non zero empirical mass, the empirical frequencies of the labels  in the bags of $G$  are  by definition:

\begin{align*}
 &P_n(\mathcal{S}_y|G)= \frac{\sum_{l=1}^n \mathbb{I}( y\in s_l)\mathbb{I}( x_l\in G)}{\#_n(G)}   & \forall y \in \mathcal{Y}.
\end{align*}
\subsection{\texorpdfstring{Fundamental differences between PL A-$k$NN and  A-$k$NN}{Fundamental differences between PL A-$k$NN and  A-$k$NN}}

PL A-$k$NN is inspired by the adaptive $k$-nearest neighbor  (A-$k$NN) method  for binary classification from  \cite{balsubramani2019adaptive}.  PLL requires to consider multiclass settings, and the proposed PL A-kNN differs fundamentally from the multiclass  supervised classification extension suggested in \cite{balsubramani2019adaptive}. Such an  extension increases the neighborhood until one label satisfies
\begin{align*} 
    P_n(y\,|\, B_k(x)) - \frac{1}{|\mathcal{Y}|} \geq \Delta(n,k,\delta),
\end{align*}
where $B_k(x)$ represents the ball with exactly the $k$ nearest neighbors of $x$, and then predicts that label. In other words,  the A-$k$NN extension increases the neighborhood until the empirically most frequent label  has higher frequency than  threshold $\Delta$, thereby  only considering  the frequency of that  label. This criterion naturally extends to the partial-labels case as follows:

\begin{align*} 
    P_n(\mathcal{S}_y\,|\, B_k(x)) - \frac{1}{|\mathcal{Y}|} \geq \Delta(n,k,\delta),
\end{align*}

In contrast, PL A-$k$NN progressively eliminates labels $i$ whose frequency deviates from the maximum frequency  by more than $\Delta$, that is,
\begin{align*}
    \max_{ y \in \hat{s}} P_n(\mathcal{S}_y\,|\, B_k(x)) - P_n(\mathcal{S}_i\,|\, B_k(x)) \geq \Delta(n,k,\delta),
\end{align*}

gradually narrowing the candidate set while enlarging the neighborhood. This margin-based elimination considers all  of the  label frequencies, not just the largest one.

To illustrate the  differences between the two methods, consider two multiclass classification problems (which can be viewed as trivial partial-label problems where each bag contains only the ground-truth label) with a single instance, three labels, and the following label distributions: in the first case, $Q_1 = (0.50, 0.49, 0.01)$; in the second case, $Q_2 = (0.40, 0.30, 0.30)$. These vectors represent both the label distributions and the frequencies of the labels within the bags.
 For A-$k$NN, which only checks frequency of  the  empyrically most frequent label, the second case may require more neighbors to satisfy the threshold, even though the first case is harder due to the nearly tied most probable labels. PL A-$k$NN, by considering frequency differences, naturally requires more neighbors in the first case 
and fewer in the second case. This shows that PL A-$k$NN increases the neighborhood size appropriately for harder cases, whereas A-$k$NN can use too many or too few neighbors, because it only considers the frequency  of  the most frequent label and disregards the rest of the information.

\subsection{Finite sample convergence rates }
To establish finite-sample convergence rates, the data must satisfy certain smoothness conditions.
 Therefore, we assume   two smoothness conditions over the underlying probability distribution $P\in \Delta(\mathcal{X}  \times \mathcal{Y} \times  \mathcal{S})$ for the next  result.

First, we  adapt the Tsybakov-margin  for binary distributions \citep{audibert2007fast,mammen1999smooth,tsybakov2004optimal}   to the frequencies of the labels over the  bags. For any $\beta> 0$, we say $P$ satisfies the $\beta$-margin condition if there exists a constant $C_2$ > 0 such that
\begin{align*}
     &P(\{ x\in \mathcal{X}: \forall i \in \argmax_{y\in \mathcal{Y}} P(\mathcal{S}_y|x) \hspace{2mm} \exists  j \notin \argmax_{y\in \mathcal{Y}}  P(\mathcal{S}_y|x) \hspace{2mm}   P(\mathcal{S}_i|x)- P(\mathcal{S}_j|x) \leq t\}) \leq C_2\cdot t^\beta
     &\forall t \geq 0.
     \end{align*}
 We essentially require the  probability  mass of the instances  with  a frequency  margin  smaller than $t$ to be gracefully  bounded by  a power  function of $t$. Larger  values of $\beta$ imply smoother distributions.
 
 For the second  smoothness condition we adapt  to PLL a generalization of the
usual  $\alpha$-Holder condition \citep{chaudhuri2014rates}, which is a common assumption over $P$ for nonparametric estimators. For $\alpha>0, L>0$ we say that  $P$ is $(\alpha,L)$-smooth in the finite-dimensional normed  space $(\mathcal{X},P)$ if for all $x\in \mathcal{X}$ 
\begin{align*}
   & |P(\mathcal{S}_y|B(x,r))-P(\mathcal{S}_y|x)| \leq L P(B(x,r))^\alpha & \forall y \in \mathcal{Y}, \forall r \geq 0.
\end{align*}
Thus, we require the drift of the frequencies of the labels over  the bags in   balls to be bounded  by $\alpha$-exponential  values  of  the mass of the ball. Larger  values of $\alpha$ indicate that  $P$ behaves smoother.

The following theorem provides rates of convergence   that depend explicitly on the number of samples.
\begin{theorem}  \label{explicit_rates}
  (Explicit rates of convergence)   Let $h$  be   the classifier from 
PL A-$k$NN Algorithm \ref{alg1}  with   maximum number of iterations $T$ and   confidence parameter $\delta $ satisfying \eqref{conf_seq}.
Let the underlying probability distribution satisfy the label-aligned condition \eqref{partial_condition}, be $(\alpha, L)$-smooth, and satisfy the $\beta$-margin condition for $\alpha > 0$ and $\beta > 0$. Then there exists a constant $C_3$ such that 
   \begin{align*}
      & R(h)-R^* \leq \delta + C_3 \left( \frac{1}{n}\max \Bigl\{\log(n),\log(|\mathcal{Y}|/\delta)\Bigl\}\right) ^{ \frac{\beta \alpha}{2\alpha+1}}
         \end{align*}
 holds with probability at least $1-\delta$.
\end{theorem}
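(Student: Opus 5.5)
The natural route is to derive the statement from Theorem \ref{rates}. With probability at least $1-\delta$ we already have $R(h)-R^*\le \delta + P(\adv(x)\le a_n)$, where $a_n=\frac{C}{n}\max\{2\log n,\log(|\mathcal{Y}|/\delta)\}$. So it suffices to show that, under $(\alpha,L)$-smoothness and the $\beta$-margin condition, the advantage obeys a lower bound of the form $\adv(x)\ge c\,\eta(x)^{(2\alpha+1)/\alpha}$. Here
\[
\eta(x)=\min_{i\in\argmax_y P(\mathcal{S}_y|x),\; j\notin \argmax_y P(\mathcal{S}_y|x)}\bigl(P(\mathcal{S}_i|x)-P(\mathcal{S}_j|x)\bigr)
\]
is the pointwise frequency margin. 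We set $\eta(x)=\infty$ (equivalently $\adv(x)=1$) when all labels are tied. Given that lower bound, $\{\adv(x)\le a_n\}\subseteq\{\eta(x)\le (a_n/c)^{\alpha/(2\alpha+1)}\}$, and the margin condition bounds the probability of this event by $C_2 (a_n/c)^{\alpha\beta/(2\alpha+1)}$. Absorbing the factor $2$ in front of $\log n$ and the constants $C$, $c$, $C_2$ into $C_3$ gives the claimed rate.

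The core step is the pointwise lower bound on the advantage, which I would prove by exhibiting an explicit salient pair. Fix $x$ with margin $\eta=\eta(x)$, and take $p=\min\{A,(\eta/(4L))^{1/\alpha}\}$. For any $r\le r_p(x)$ we have $P(B(x,r))\le p$ in the sense needed; more precisely, for $r<r_p(x)$ the mass is below $p$, and the closed-ball case at $r=r_p(x)$ follows by right-continuity or a limiting argument. Smoothness then gives $|P(\mathcal{S}_y|B(x,r))-P(\mathcal{S}_y|x)|\le L p^\alpha\le \eta/4$ for every $y$. Hence for every $i$ in the argmax and every $j$ outside it,
\[
P(\mathcal{S}_i|B(x,r))-P(\mathcal{S}_j|B(x,r))\ge \eta-\eta/2=\eta/2>\eta/4 .
\]
So $x$ is $(p,\eta/4)$-salient, and $\adv(x)\ge p\,\eta^2/16$. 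If $p=(\eta/(4L))^{1/\alpha}$, this equals $c\,\eta^{2+1/\alpha}$. If $p=A$, then since $\eta\le 1$ we get $A\eta^2/16\ge (A/16)\,\eta^{2+1/\alpha}$. Either way $\adv(x)\ge c\,\eta^{(2\alpha+1)/\alpha}$, with $c$ depending only on $L,\alpha,A$.

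The main obstacle is aligning the margin event used in the $\beta$-margin condition with $\eta(x)$. That condition is phrased as ``for all $i$ in the argmax there exists $j$ outside with gap $\le t$''. Since the frequencies of all labels in the argmax are equal, this is exactly the event $\eta(x)\le t$, so the inclusion above is valid. Some secondary care is also needed. Points where all labels are tied have advantage $1$ and are never counted once $a_n<1$. When $a_n\ge 1$, the bound holds trivially by enlarging $C_3$ so that the right-hand side exceeds $1$. Finally, the constants, including $A$ from \eqref{conf_seq}, must be collected into $C_3$ without depending on $n$ or $\delta$.
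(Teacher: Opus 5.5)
Your proposal follows essentially the same route as the paper. You lower-bound $\adv(x)$ via $(\alpha,L)$-smoothness with $p=\min\{A,(\eta/(4L))^{1/\alpha}\}$ and $\gamma$ proportional to the pointwise frequency margin, turn the $\beta$-margin condition into $P(\adv(x)\le a)\lesssim a^{\alpha\beta/(2\alpha+1)}$, and plug $a=a_n$ into Theorem~\ref{rates}; your handling of radii $r<r_p(x)$, of fully tied points and of the regime $a_n\ge 1$ is more careful than the paper's.

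One caveat applies equally to the paper's argument, which tacitly uses $P(B(x,r_p(x)))\le p$. The closed-ball case $r=r_p(x)$ does not follow from right-continuity, which only gives $P(B(x,r_p(x)))\ge p$. An atom on the sphere of radius $r_p(x)$ can make that mass far exceed $p$ and destroy salience at that radius. So the step needs $P(B(x,r_p(x)))=p$, for example that spheres centered at $x$ are $P$-null.
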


\textit{ \textbf{Proof}: See Appendix C.}

 As  expected, larger values of  $\alpha$ and  $\beta$ provide faster rates of convergence. 
\section{Theorems, Lemmas, and Proofs}

\subsection{Notation and definitions}
Let  $\{ (x_l, s_{l})\}_{l=1}^n $  be the set of  training examples. For any subset $G\subseteq \mathcal{X}$, the empirical count and mass are taken as:
\begin{align*}
   & \#_n(G)= |\{ l: x_l\in G \}| \\
   & P_n(G)= \frac{ \#_n(G)}{n}.
\end{align*}
In addition, for any subset  $G\subseteq \mathcal{X}$ with    non zero empirical mass, the empirical frequencies of the labels  in the bags of $G$  are  by definition:
\begin{align*}
 &P_n(\mathcal{S}_y|G)= \frac{\sum_{l=1}^n \mathbb{I}( y\in s_l)\mathbb{I}( x_l\in G)}{\#_n(G)}   & \forall y \in \mathcal{Y}.
\end{align*}

\begin{definition}
The \emph{support} of the distribution \(P\), denoted by \(\mathrm{supp}(P)\), is the set
\begin{align} \label{support}
\mathrm{supp}(P) = \{ x \in \mathcal{X} \mid \forall \, r > 0, \; P(B(x,r)) > 0 \},
\end{align}
where \(B(x,r)\) is the ball of radius \(r\) centered at \(x\). 
\end{definition}

We proceed with a smoothness condition that holds for finite dimensional normed spaces.
\begin{definition}(Lebesgue differentiation condition) \label{def_lebesgue}
Let $(\mathcal{X},d,P)$ be a metric measure space. We say that  $(\mathcal{X},d,P)$ satisfies the Lebesgue differentiation condition if for any bounded measurable $f: \mathcal{X} \longrightarrow \R$ and for almost all ($P$-a.e.) $x \in \mathcal{X}$, we have

\begin{align} \label{lebesque}
    \lim_{r\rightarrow 0} \frac{1}{P(B(x,r))}\int_{B(x,r)} f \hspace{1mm}dP = f(x).
\end{align}
\end{definition}

\subsection{Proof of Theorem \ref{bag_aligned}}
\begin{proof}
Given an instance $x \in \mathcal{X}$, let  $Q_1,Q_2 \in \Delta( \mathcal{Y})$ be  two distributions over the labels  such that  $\sum_{y\in \mathcal{Y}} P(s|y,x)Q_1(y)=\sum_{y\in \mathcal{Y}} P(s|y,x)Q_2(y) \hspace{2mm} \forall s \in \mathcal{S}$. Since $P(s|y,x)$ is label-aligned, we have that 
\begin{align*}
& \argmax_{y \in \mathcal{Y}} Q_1(y)= \argmax_{y \in \mathcal{Y}} \sum_{s\in \mathcal{S}_y}\sum_{i\in \mathcal{Y}} P(s|i,x)Q_1(i)=\\
&\argmax_{y \in \mathcal{Y}} \sum_{s\in \mathcal{S}_y}\sum_{i\in \mathcal{Y}} P(s|i,x)Q_2(i)= \argmax_{y \in \mathcal{Y}} Q_2(y)
\end{align*}
and therefore the proof is concluded.
\end{proof}

\subsection{Proof of Theorem \ref{th1}}
Before providing the proof of the theorem we state and prove some technical lemmas. We first state two results from previous work that are needed for our main result
\begin{lemma}(\cite{chaudhuri2010rates} Lemma 7) \label{lemma7}
There is a universal constant $c_0$ such that the following holds. Let $\mathcal{B}$ be any class of measurable subsets of $\mathcal{X}$ of VC dimension $d_0$. Pick any $0 < \delta < 1$. Then with probability at least $1 - \delta^2/2$ over the choice of $x_1,x_2 \ldots, x_n$, for all $B \in \mathcal{B}$ and for any integer $k$, we have
\begin{align*}
P(B) \ge \frac{k}{n} + \frac{c_0}{n} \max\left(k, d_0 \log\frac{n}{\delta}\right) \quad \Rightarrow \quad P_n(B) \ge \frac{k}{n}.
\end{align*}
\end{lemma}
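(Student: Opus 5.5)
This statement is the uniform relative-deviation bound of \cite{chaudhuri2010rates}. I would prove it in two parts: first reduce it to a classical Vapnik--Chervonenkis relative-deviation inequality for the class $\mathcal{B}$, then do a short algebraic argument that converts that inequality into the stated implication with a suitable universal $c_0$.

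\textbf{Step 1 (relative deviation bound).} I will invoke the relative uniform convergence inequality of Vapnik and Chervonenkis, in the form given by Bousquet, Boucheron and Lugosi. For a class of VC dimension $d_0$, with probability at least $1-\delta'$, simultaneously for all $B\in\mathcal{B}$,
\begin{align*}
P(B)-P_n(B)\le \beta_n\sqrt{P(B)}, \qquad \beta_n^2=\frac{4}{n}\Bigl(\log S_{\mathcal{B}}(2n)+\log\frac{4}{\delta'}\Bigr),
\end{align*}
where $S_{\mathcal{B}}$ is the growth function. Sauer's lemma gives $\log S_{\mathcal{B}}(2n)\le d_0\log(2en/d_0)$. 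I then set $\delta'=\delta^2/2$, so that $\log(4/\delta')=\log(8/\delta^2)\le 2\log(8/\delta)$. Since $d_0\ge 1$ and $\delta<1$, this gives
\begin{align*}
\beta_n^2\le \frac{C'}{n}\,d_0\log\frac{n}{\delta}
\end{align*}
for an absolute constant $C'$. The only care needed is in absorbing the constants $2e$ and $8$ into $C'$; for very small $n$ I would either handle the case separately or enlarge $C'$.

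\textbf{Step 2 (algebra).} Fix $B$ and an integer $k$; the case $k\le 0$ is trivial. Write $p=P(B)$, $D=d_0\log(n/\delta)$ and $m=\max(k,D)$, and assume $p\ge (k+c_0m)/n$.

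On the event of Step 1, $P_n(B)\ge p-\beta_n\sqrt{p}$. Because $\beta_n^2\le C'D/n\le C'm/n$ and $p\ge c_0 m/n$, we get
\begin{align*}
\beta_n\sqrt{p}\le \sqrt{C'mp/n}\le p\sqrt{C'/c_0}.
\end{align*}
Hence $P_n(B)\ge p\bigl(1-\sqrt{C'/c_0}\bigr)$. Using $p\ge (1+c_0)k/n$, which holds because $m\ge k$, it suffices that
\begin{align*}
(1+c_0)\Bigl(1-\sqrt{C'/c_0}\Bigr)\ge 1.
\end{align*}
This holds for all sufficiently large $c_0$, for example $c_0\ge 4C'+1$. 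Fixing such a $c_0$ yields $P_n(B)\ge k/n$ for every $B$ and every $k$ simultaneously, with probability at least $1-\delta^2/2$.

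\textbf{Main obstacle.} The algebra in Step 2 is routine once the inequality is in hand. The delicate part is Step 1: using the relative (square-root, multiplicative) deviation bound rather than the additive one, and checking that the growth-function and confidence terms are dominated by $d_0\log(n/\delta)$ uniformly, so that $c_0$ is genuinely universal. If the exact constants of the cited inequality become awkward, I would instead use symmetrization with a ghost sample and a Bernstein-type bound on the ratio $(P(B)-P_n(B))/\sqrt{P(B)}$ directly. Any inequality of the form $\beta_n^2\lesssim D/n$ suffices for Step 2.
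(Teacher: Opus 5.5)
The paper gives no proof of this lemma; it imports it from \cite{chaudhuri2010rates} in the form restated by \cite{balsubramani2019adaptive}. Your argument is correct and is essentially the standard derivation behind that result: the Vapnik--Chervonenkis relative deviation bound with Sauer's lemma, followed by the absorption $\beta_n\sqrt{p}\le p\sqrt{C'/c_0}$ and the choice $c_0\ge 4C'+1$.

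Your small-$n$ caveat needs one correction. Enlarging $C'$ cannot work at $n=1$ with $\delta\to 1$, because there $d_0\log(n/\delta)\to 0$ while $\beta_n^2\ge 4\log 8$. Handling the case separately does work. For $k\ge 1$ the hypothesis forces $P(B)\ge (1+c_0)/n>1$ whenever $n<1+c_0$, so the implication is vacuous there. You therefore only need $\beta_n^2\le C' d_0\log(n/\delta)/n$ for $n\ge 2$, where $\log(n/\delta)\ge\log 2$ absorbs the constants $2e$ and $8$.
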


\begin{theorem}(\cite{balsubramani2019adaptive} Theorem 8) \label{theorem 8}
Let $P$ be a probability distribution over $\mathcal{X}$, and let $\mathcal{A}, \mathcal{B}$ be two families of measurable subsets of $\mathcal{X}$ such that $\operatorname{VC}(\mathcal{A}), \operatorname{VC}(\mathcal{B}) \le d_0$. Let $n \in \mathbb{N}$, and let $x_1, \ldots, x_n$ be $n$ i.i.d. samples from $P$. Then, the following event occurs with probability at least $1 - \delta$:
\[
\forall A \in \mathcal{A},\ \forall B \in \mathcal{B} :\quad
\left| P(A |B) - P_n(A |B) \right| \le \sqrt{ \frac{k_0}{\#_n(B)} },
\]
where $k_0 = 1000 \left( d_0 \log(8n) + \log\left( \frac{4}{\delta} \right) \right)$.
\end{theorem}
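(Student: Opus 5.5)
The statement is the uniform conditional-frequency bound of \cite{balsubramani2019adaptive}. I would prove it by reducing it to a \emph{relative} (multiplicative) VC deviation inequality applied to two set systems at once: the balls $\mathcal{B}$ and the intersections $\mathcal{C}=\{A\cap B: A\in\mathcal{A}, B\in\mathcal{B}\}$. By Sauer's lemma, the growth function of $\mathcal{C}$ is at most the product of those of $\mathcal{A}$ and $\mathcal{B}$, so $\log S_{\mathcal{C}}(2n)\le 2d_0\log(2en/d_0)$. The same bound holds for $\mathcal{B}$. Both quantities are $O(d_0\log(8n))$.

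\textbf{Step 1 (relative deviation bound).} I would invoke the two-sided normalized VC inequality of Vapnik--Chervonenkis / Anthony--Shawe-Taylor. With probability at least $1-\delta$, simultaneously for every $C\in\mathcal{C}\cup\mathcal{B}$,
\begin{align*}
|P(C)-P_n(C)|\le \sqrt{\varepsilon\,\min\{P(C),P_n(C)\}}+\varepsilon,
\qquad \varepsilon=\frac{c}{n}\Bigl(d_0\log(8n)+\log\frac{4}{\delta}\Bigr),
\end{align*}
for an absolute constant $c$. The confidence budget is split between the two classes, which produces the $\log(4/\delta)$ term. Note that $n\varepsilon$ is, up to constants, exactly $k_0$.

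\textbf{Step 2 (case split on $\#_n(B)$).} Fix $A$ and $B$.

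If $\#_n(B)\le k_0$, the right-hand side is at least $1$. Since both conditional probabilities lie in $[0,1]$, the claim is trivial in this case.

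Otherwise $P_n(B)\ge k_0/n$, which is a large multiple of $\varepsilon$. Step 1 for $C=B$ then gives $P(B)\in[\tfrac12P_n(B),2P_n(B)]$.

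\textbf{Step 3 (decompose the conditional error).} I would write
\begin{align*}
P(A|B)-P_n(A|B)=\frac{P(A\cap B)-P_n(A\cap B)}{P(B)}+P_n(A\cap B)\Bigl(\frac{1}{P(B)}-\frac{1}{P_n(B)}\Bigr).
\end{align*}

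For the first term, Step 1 with $C=A\cap B$ and $P_n(A\cap B)\le P_n(B)$ give a numerator of at most $\sqrt{\varepsilon P_n(B)}+\varepsilon$. Dividing by $P(B)\ge\tfrac12 P_n(B)$ yields a bound of order $\sqrt{\varepsilon/P_n(B)}=\sqrt{n\varepsilon/\#_n(B)}$.

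For the second term, use $P_n(A\cap B)\le P_n(B)$ to bound it by $|P_n(B)-P(B)|/P(B)$. Step 1 for $C=B$ gives the same order.

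The residual $\varepsilon/P_n(B)$ terms are dominated by the square-root terms, since $P_n(B)\ge\varepsilon$. Choosing the constant $1000$ in $k_0$ absorbs all the absolute constants.

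\textbf{Main obstacle.} The key difficulty is Step 1. An additive (non-relative) uniform bound would only give an error of order $\sqrt{\varepsilon}/P(B)$, which is far too weak for small balls. The proof therefore hinges on using the normalized VC inequality so that the deviation of $A\cap B$ scales with $\sqrt{P_n(B)}$. The rest is constant-chasing, including verifying that the growth function of the intersection class yields the $d_0\log(8n)$ form.
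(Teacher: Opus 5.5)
The paper does not prove this statement. It is quoted from \cite{balsubramani2019adaptive} and used only as a black box inside the proof of Lemma~\ref{lemma2}, so there is no in-paper argument to compare yours with.

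Judged on its own, your proposal is a correct and complete route. You have also identified the crux correctly: you need a relative rather than additive deviation bound, so that the error on $A\cap B$ scales like $\sqrt{P_n(B)}$. The details you leave implicit all go through:
\begin{itemize}
\item The trace of $A\cap B$ on a sample is determined by the traces of $A$ and $B$, so $S_{\mathcal{C}}(2n)\le S_{\mathcal{A}}(2n)S_{\mathcal{B}}(2n)$.
\item The two-sided form with $\min\{P(C),P_n(C)\}$ needs both normalized inequalities: the one normalized by $\sqrt{P(C)}$ for the lower tail and the one normalized by $\sqrt{P_n(C)}$ for the upper tail. Each is then converted by solving a quadratic.
\item A union bound over two tails and two classes gives $\varepsilon\le\frac{8}{n}\bigl(d_0\log(8n)+\log\frac{4}{\delta}\bigr)$, using $\log\frac{16}{\delta}\le 2\log\frac{4}{\delta}$.
\end{itemize}

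Hence $\#_n(B)>k_0$ forces $P_n(B)\ge 125\,\varepsilon$. On the good event this also guarantees $P(B)>0$, so $P(A|B)$ can only be undefined in your trivial case. Your two terms then total at most about $4.4\sqrt{\varepsilon/P_n(B)}\le\sqrt{160\,(d_0\log(8n)+\log(4/\delta))/\#_n(B)}$, comfortably within the constant $1000$.

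Compared with the paper's bare citation, your argument gives a self-contained proof using only Sauer's lemma and the normalized VC inequality, with considerably smaller constants.
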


We now state and prove two technical lemmas that are needed for the main proof of the theorem.

\begin{lemma} \label{lemma2}
    Let $P$ be a probability distribution over $\mathcal{X}\times \mathcal{Y} \times \mathcal{S}$. There is a universal constant $c_1>0$ such that, for each $x \in \mathcal{X}$, the following events occur with  probability at least $1-\delta^2/2$:
\begin{align} \label{prelema}
        &|P(\mathcal{S}_y|B(x,r))-P_n(\mathcal{S}_y|B(x,r))| \leq  \frac{c_1}{2}  \sqrt{\frac{ \log(n) +\log(|\mathcal{Y}|/\delta)}{ \#_n(B(x,r))}} &\forall r>0,\forall y\in \mathcal{Y} . 
    \end{align}
Let $d_0$ be the VC dimension of   the set of balls in $\mathcal{X}$. Then, we have that the following events occur with probability at least $1-\delta^2/2$:
\begin{align}
        &|P(\mathcal{S}_y|B(x,r))-P_n(\mathcal{S}_y|B(x,r))| \leq \frac{c_1}{2} \sqrt{\frac{ d_0 \log(n) +\log(|\mathcal{Y}|/\delta)}{ \#_n(B(x,r))}} &\forall r>0 ,\forall y\in \mathcal{Y}  , \forall x  \in \mathcal{X}.
\end{align}
\end{lemma}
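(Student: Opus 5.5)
Both parts of Lemma \ref{lemma2} follow from Theorem \ref{theorem 8}, applied once per label and combined with a union bound over $y\in\mathcal{Y}$. The key device is to treat the label information as an augmented sample space. Consider the i.i.d. pairs $(x_l,s_l)$ in $\mathcal{X}\times\mathcal{S}$. For a fixed label $y$, take $\mathcal{A}_y=\{\mathcal{X}\times\mathcal{S}_y\}$, a single set with VC dimension $0$. Then $P(\mathcal{S}_y|B(x,r))=P(\mathcal{X}\times\mathcal{S}_y\,|\,B(x,r)\times\mathcal{S})$, and the empirical version coincides with $P_n(\mathcal{S}_y|B(x,r))$ as defined in the Notation subsection. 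Theorem \ref{theorem 8} applies verbatim to any probability space, so we may apply it on $\mathcal{X}\times\mathcal{S}$.

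For the first (pointwise) statement, fix $x$ and let $\mathcal{B}_x=\{B(x,r)\times\mathcal{S}: r>0\}$. This is a nested chain of sets, so its VC dimension is $1$. Invoke Theorem \ref{theorem 8} with $d_0=1$ and confidence $\delta'=\delta^2/(2|\mathcal{Y}|)$ for each $y$. A union bound over the $|\mathcal{Y}|$ labels gives total failure probability $\delta^2/2$. The resulting radius is $\sqrt{k_0/\#_n(B(x,r))}$ with
\[
k_0=1000\bigl(\log(8n)+\log(8|\mathcal{Y}|/\delta^2)\bigr).
\]
It remains to bound $k_0\le (c_1^2/4)\bigl(\log n+\log(|\mathcal{Y}|/\delta)\bigr)$ for a universal $c_1$. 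We use $\log(8n)\le \log 8+\log n$ and $\log(8|\mathcal{Y}|/\delta^2)\le \log 8+2\log(|\mathcal{Y}|/\delta)$, which holds since $|\mathcal{Y}|\ge1$. The additive constants $\log 8$ are absorbed because $\log(|\mathcal{Y}|/\delta)\ge\log(1/\delta)$, which is bounded below by a positive constant if $\delta\le 1/2$. Alternatively, for $n\ge2$ we have $\log n\ge\log 2$; I will use whichever lower bound is cleaner and, if needed, state the mild restriction $n\ge 2$. Radii $r$ with $\#_n(B(x,r))=0$ are vacuous by convention, since the empirical frequency is undefined there.

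For the uniform statement, replace $\mathcal{B}_x$ by $\mathcal{B}=\{B(x,r)\times\mathcal{S}: x\in\mathcal{X},r>0\}$. Its VC dimension equals $d_0$, the VC dimension of balls in $\mathcal{X}$, since taking the product with $\mathcal{S}$ does not change shattering. The same union bound gives
\[
k_0=1000\bigl(d_0\log(8n)+\log(8|\mathcal{Y}|/\delta^2)\bigr)\le (c_1^2/4)\bigl(d_0\log n+\log(|\mathcal{Y}|/\delta)\bigr),
\]
using $d_0\ge1$ to absorb $d_0\log 8$ into the constant in the same way.

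The only delicate points are bookkeeping. First, the constant absorption needs a lower bound on $\log n$ or on $\log(1/\delta)$ to swallow the additive $\log 8$ terms. Second, $\mathcal{A}$ is on the label coordinate while $\mathcal{B}$ is on the instance coordinate, so we must check that Theorem \ref{theorem 8} is legitimately applied on the product space; it is, since that theorem is stated for arbitrary measurable families. Everything else is a direct instantiation.
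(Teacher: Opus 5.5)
Your proposal is correct and follows essentially the same route as the paper: apply Theorem~\ref{theorem 8} on the product space $\mathcal{X}\times\mathcal{S}$, using the VC-dimension-1 chain of balls centred at $x$ (respectively all balls, VC dimension $d_0$) crossed with $\mathcal{S}$, at confidence $\delta^2/(2|\mathcal{Y}|)$ per label, and then take a union bound over $\mathcal{Y}$. Your departures are minor refinements: you take $\mathcal{A}$ to be the single set $\mathcal{X}\times\mathcal{S}_y$ rather than the paper's family $\{B\times\mathcal{S}_y\}$, and by absorbing the additive $\log 8$ terms via $n\ge 2$ you obtain a $c_1$ independent of $d_0$, whereas the paper's choice $c_1>2\sqrt{4000(d_0+1)\log 8}$ depends on $d_0$.
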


\begin{proof}
Let $x\in \mathcal{X}$, $y \in \mathcal{Y}$, and   $\mathcal{B}_x$ be the set of all balls that  are centered on $x$, which has VC dimension 1.
We define the two following sets of sets:
\begin{align*}
 & \mathbf{A}_x^y=\{B\times \mathcal{S}_y: B\in \mathcal{B}_x\}  \hspace{3mm} \\  
     &\mathbf{B}_x=\{ B\times 2^\mathcal{Y}: B \in \mathcal{B}_x \}.   
\end{align*}

 It is easy to see then than $VC(\mathbf{A}_x^y)=1$ and $VC(\mathbf{B}_x)=1$, as  both \( \mathbf{A}_x^y \) and \( \mathbf{B}_x \) are obtained by taking Cartesian products of sets in \( \mathcal{B}_x \) (which has VC dimension 1) with fixed sets.

Therefore, taking $c_1>2\sqrt{8000\log(8)}$ and by Theorem \ref{theorem 8} we have proven that with  probability at least $1-\delta^2/(2|\mathcal{Y}|)$.
\begin{align*}
        &|P(\mathcal{S}_y|B(x,r))-P_n(\mathcal{S}_y|B(x,r))| \leq  \frac{c_1}{2}  \sqrt{\frac{ \log(n) +\log(|\mathcal{Y}|/\delta)}{ \#_n(B(x,r))}} &\forall r>0.
    \end{align*}
Then, the first part of the lemma in \eqref{prelema} follows by applying the union bound  for each label.

Suppose now that $d_0$ is  the VC dimension of   the set of balls $\mathcal{B}$ in $\mathcal{X}$ and let $y \in \mathcal{Y}$. We define  the following sets of sets
\begin{align*}
 & \mathbf{A}^y =\{B\times \mathcal{S}_y: B\in \mathcal{B}\}  \hspace{3mm} \\  
     &\mathbf{B}=\{ B\times 2^\mathcal{Y}: B \in \mathcal{B} \}.   
\end{align*}

We  then have,  by following the same reasoning as before, that $VC( \mathbf{A})= d_0$ and $VC(\mathbf{B})=d_0$.

Therefore, taking $c_1>2\sqrt{4000(d_0+1)\log(8)}$ and proceeding analogously to the first part of this proof we   have proven  the second part of the lemma.
\end{proof}

\begin{lemma} The following set of points \label{lemma1}
\begin{align*}
    \bigl\{  x\in \mathcal{X}: \hspace{1mm}  \adv(x)=0   \bigl\}
\end{align*}
has zero $P$-measure.
\end{lemma}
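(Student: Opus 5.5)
Our plan is to show that for $P$-almost every $x$, there is some $(p,\gamma)$ with $p>0$, $\gamma>0$, $p\le A$, for which $x$ is $(p,\gamma)$-salient; then $\adv(x)\ge p\gamma^2>0$. Points with $\arg\max_y P(\mathcal{S}_y|x)=\mathcal{Y}$ already have $\adv(x)=1$, so only the remaining points need work.

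First, we apply the Lebesgue differentiation condition (Definition~\ref{def_lebesgue}), which holds in finite-dimensional normed spaces, to the finitely many bounded measurable functions $f_y(x)=P(\mathcal{S}_y|x)$, $y\in\mathcal{Y}$. We also restrict to $\mathrm{supp}(P)$, whose complement has measure zero by a standard separability argument. This gives a full-measure set $E$ of points in the support such that, for all $y$ simultaneously,
\begin{align*}
\lim_{r\to 0} P(\mathcal{S}_y|B(x,r)) = P(\mathcal{S}_y|x),
\end{align*}
where $P(\mathcal{S}_y|B(x,r))=\frac{1}{P(B(x,r))}\int_{B(x,r)} f_y\,dP$ is well defined because $P(B(x,r))>0$. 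Including $r=0$ (a closed ball, possibly an atom) is handled by the same limit, or directly if $P(\{x\})>0$.

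Next, fix $x\in E$ whose argmax set $I$ is a proper subset of $\mathcal{Y}$. Define the gap
\begin{align*}
g(x)=\min_{i\in I}P(\mathcal{S}_i|x)-\max_{j\notin I}P(\mathcal{S}_j|x)>0.
\end{align*}
By the limit above, there is $r_0>0$ such that every frequency $P(\mathcal{S}_y|B(x,r))$ lies within $g(x)/3$ of $P(\mathcal{S}_y|x)$ for all $r\in[0,r_0]$. Hence
\begin{align*}
P(\mathcal{S}_i|B(x,r))>P(\mathcal{S}_j|B(x,r))+g(x)/3 \quad \text{for all } r\le r_0.
\end{align*}
Now set $p=\min\{A,P(B(x,r_0))\}$, which is positive because $x$ lies in the support. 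From the definition of $r_p(x)$ as an infimum, $r_p(x)\le r_0$, since $P(B(x,r_0))\ge p$. So $x$ is $(p,g(x)/3)$-salient, and $\adv(x)\ge p\,g(x)^2/9>0$.

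The main obstacle is the measure-theoretic bookkeeping rather than the inequality itself. The delicate points are:
\begin{itemize}
\item justifying the Lebesgue differentiation property for an arbitrary Borel $P$ on a finite-dimensional normed space, via the Besicovitch covering theorem;
\item checking that $r_p(x)\le r_0$ despite possible atoms or jumps of $r\mapsto P(B(x,r))$;
\item taking a countable (here finite) intersection of the full-measure sets over labels.
\end{itemize}
Once these are settled, $\{x:\adv(x)=0\}\subseteq \mathcal{X}\setminus E$, which has zero measure.
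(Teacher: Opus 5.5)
Your proposal is correct and follows essentially the same argument as the paper. It applies the Lebesgue differentiation condition to the $|\mathcal{Y}|$ functions $P(\mathcal{S}_y|x)$ on a full-measure subset of $\mathrm{supp}(P)$, sets $\adv(x)=1$ when the argmax is all of $\mathcal{Y}$, and otherwise takes a radius from the limit with $p=\min\{A,P(B(x,r_0))\}>0$ and $\gamma$ a fixed fraction of the gap. Your choice of $\gamma=g(x)/3$ on the closed interval $[0,r_0]$ is slightly more careful than the paper's half-gap on $0<r<r_{i,j}$: it gives the strict inequality and covers the endpoint $r=r_p(x)$ that the saliency definition requires.
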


\begin{proof}

The instance space ($\mathcal{X},d)$ is a finite dimensional normed  space. Therefore, the Lebesgue differentiation condition \ref{def_lebesgue} holds \citep{heinonen2001lectures}. Let $ \mathcal{X}'$ be the subset of  the support  of distribution $P$ \eqref{support}  for which condition  \eqref{lebesque} is satisfied for the $|\mathcal{Y}|$ functions $P(\mathcal{S}_y| x)$. We have $P(\mathcal{X}')=1$,  since the Lebesgue differentiation condition holds and the support of $P$ has measure $1$. If we see that all the elements in  $ \mathcal{X}'$  have positive advantage,  then the proof is concluded.

Let $x$ be an instance in $\mathcal{X}'$. If  $\argmax_{y\in \mathcal{Y}} P(\mathcal{S}_y|x)=\mathcal{Y}$ then  by definition $\adv(x)=1$. For other cases, since the point $x$ satisfies \eqref{lebesque} for all  $P(\mathcal{S}_y|x) $ functions  we have that  $\forall i  \in \argmax_{y\in \mathcal{Y}} P(\mathcal{S}_y|x) $ and \hbox{$ \forall j \in \mathcal{Y} \setminus \argmax_{y\in \mathcal{Y}} P(\mathcal{S}_y|x)  $} exists $r_{i,j}>0$ such that
 \begin{align*}
        P(\mathcal{S}_i|B(x,r))- P(\mathcal{S}_j|B(x,r))\geq \frac{ P(\mathcal{S}_i|x)- P(\mathcal{S}_j|x)}{2} \hspace{3mm}   0 <r < r_{i,j}.
    \end{align*}
Therefore, the result is obtained, because $x$ is $(p,\gamma)$-salient for $p=\min\{A,P(B(x,\min r_{i,j}))\}>0$ and $\gamma=min_{j \notin \argmax_{y\in \mathcal{Y}}P(\mathcal{S}_y|x) }\frac{P(\mathcal{S}_i|x)- P(\mathcal{S}_j|x)}{2}$, having positive advantage in the region $B(x,r_A(x))$.     
\end{proof}

\begin{proof}[Proof of Theorem \ref{th1}]
    Let  $c_0$ and $c_1$ be  the constants of Lemma \ref{lemma7} and Lemma \ref{lemma2}. We define $c_2=\max(c_1,1/4)\sqrt{1+c_0}$ and take $C=16c^2_2$.

Let $x\in \mathcal{X}$  such that $\adv(x)>0$, and $\mathcal{B}$ the set of all balls that are centered on $x$. Following Lemma \ref{lemma7} and Lemma \ref{lemma2} we have  that with probability at least $1-\delta^2$ the following two properties hold for all $B\in \mathcal{B}$:

\begin{itemize}
    \item For any integer $k$ we have $\#_n(B) \geq k$ whenever $nP(B)\geq k+c_0 \max\{k,\log(n/\delta)\}$.
    \item $ |P_n(\mathcal{S}_y|B)-P(\mathcal{S}_y|B))| \leq \frac{1}{2} \Delta(n,\#_n(B),\delta) ,\hspace{3mm} \forall y\in \mathcal{Y}$, \hspace{3mm} and therefore  \\
    $|(P_n(\mathcal{S}_i|B)-P_n(\mathcal{S}_j|B))-(P(\mathcal{S}_i|B)-P(\mathcal{S}_j|B))| \leq \Delta(n,\#_n(B),\delta) , \forall i,j\in \mathcal{Y}$.

\end{itemize}

Assume henceforth that the above two conditions hold. If $\argmax_{y\in \mathcal{Y}} P(\mathcal{S}_y|x) =\mathcal{Y}$,  we  also have  that $\argmax_{y\in \mathcal{Y}}P(y|x) =\mathcal{Y}$, since   the bags are label-aligned, and therefore $h(x)=h^*(x)$.  Otherwise, let $y_0 \notin \argmax_{y\in \mathcal{Y}}  P(\mathcal{S}_y|x) $. If we show that $h(x) \neq y_0$, the proof is concluded, as we  show that $h(x)$ is not any of the suboptimal labels, which is equivalent to showing that  $h(x)$  is one of the most frequent labels in the bags of $x$.

By the definition of advantage, point $x$ is $(p_,\gamma)$-salient for  some $p\gamma>0$ with $\adv(x)= p\gamma^2$.

   The criterion   (\ref{avd_cond}) in the theorem statement implies that:
\begin{align}
       \gamma   \geq 2c_2\sqrt{\frac{\log(n)+\log(|\mathcal{Y}|/\delta)}{np}}.
       \label{th1.2}
   \end{align}
  Let $k=\frac{np}{1+c_0}$. Then,  $k\leq \frac{n \cdot A}{1+c_0}\leq T$, as $T$ satisfies \ref{conf_seq}. By (\ref{th1.2}) we have that $np \geq 4c_2^2 \log(n/\delta)$ and thus $k\geq \log(n/\delta)$. As a result  $np = (1+c_0)k \geq k+c_0 \max\{k,\log( n/\delta)\}$ , and by the first property, the ball $B=B(x,r_{p}(x))$ has $\#_n(B) \geq k$. Let $B_k(x)$ represent the ball with exactly the $k$ nearest neighbors of $x$

By the second property, we have that:
   \begin{align*}
       &P_n(\mathcal{S}_i|B_{k}(x))-  P_n(\mathcal{S}_{y_0}|B_{k}(x)) \geq   P(\mathcal{S}_i|B_{k}(x))-  P(\mathcal{S}_{y_0}|B_{k}(x)) -\Delta(n,k,\delta)  \geq   \gamma -\Delta(n,k,\delta) \geq  \\
        &2c_2 \sqrt{\frac{\log(|\mathcal{Y}|n/\delta)}{np}}- c_1 \sqrt{\frac{\log(|\mathcal{Y}|n/\delta)}{k}} \geq  2c_1 \sqrt{\frac{\log(|\mathcal{Y}|n/\delta)}{k}}-  c_1 \sqrt{\frac{\log(|\mathcal{Y}|n/\delta)}{k}} \geq\\
       & c_1 \sqrt{\frac{\log(|\mathcal{Y}|n/\delta)}{k}} =
        \Delta(n, k,\delta) \hspace{10mm} \forall i \in  \argmax_{y\in \mathcal{Y}}P(\mathcal{S}_y|x).
        \end{align*}

Therefore, we  have that the difference between the empirical frequencies of the elements in $\argmax_{y\in \mathcal{Y}}P(\mathcal{S}_y|x)$ and $y_0$  in the bags of $B_{k}(x)$ is at least as big as the threshold $\Delta$. If the algorithm has not eliminated all the labels  in $\argmax_{y\in \mathcal{Y}} P(\mathcal{S}_y|x)$  before the $k$th iteration, it will eliminate $y_0$ from the set of possible labels $\hat{s}$, so that the result is obtained. 

At the same time, for any ball $B'=B_{k'}(x)$ with $k'<k$ we have
\begin{align*}
    & P_n(\mathcal{S}_j|B')-P_n(\mathcal{S}_i|B') \leq  P(\mathcal{S}_j|B')-P(\mathcal{S}_i|B')+\Delta(n,\#_n(B'),\delta) <\Delta(n,\#_n(B'),\delta)\\ & \forall i \in \argmax_{y\in \mathcal{Y}}  P(\mathcal{S}_y|x), \forall j \in \mathcal{Y} \setminus \argmax_{y\in \mathcal{Y}}  P(\mathcal{S}_y|x).
    \end{align*}

  Therefore, although labels from $\argmax_{y\in \mathcal{Y}}  P(\mathcal{S}_y|x)$ can eliminate other labels of the same set before the $k$th iteration, we can always ensure that one of them will remain in the set of possible labels, since elements  in $\mathcal{Y} \setminus \argmax_{y\in \mathcal{Y}}  P(\mathcal{S}_y|x)$ cannot eliminate then. 
  
  The second part of the theorem is obtained using Lemma \ref{lemma1}.
\end{proof}

\subsection{Proof of Theorem \ref{uniform}}
\begin{proof}[Proof of Theorem \ref{uniform}]    
  This proof is strictly  analogous  to the proof of Theorem \ref{th1}. The only  difference is  that the following two properties

\begin{itemize}
    \item For any integer $k$ we have $\#_n(B) \geq k$ whenever $nP(B)\geq k+c_0 \max\{k,\log(n/\delta)\}$.
    \item $ |P_n(\mathcal{S}_y|B)-P(\mathcal{S}_y|B))| \leq \frac{1}{2} \Delta(n,\#_n(B),\delta) ,\hspace{3mm} \forall y\in \mathcal{Y}$, \hspace{3mm} and therefore  \\
    $|(P_n(\mathcal{S}_i|B)-P_n(\mathcal{S}_j|B))-(P(\mathcal{S}_i|B)-P(\mathcal{S}_j|B))| \leq \Delta(n,\#_n(B),\delta) , \forall i,j\in \mathcal{Y}$.

\end{itemize}

hold with probability $1-\delta^2$ for all the balls in $\mathcal{X}$
\end{proof}

\subsection{Proof of Theorem \ref{rates}}
Before providing the proof of the theorem we state and prove  a technical lemma.
\begin{lemma} \label{lemma_rates} Let $C$ be the constant from Theorem \ref{th1}.  Let $h$ be the classifier from  PL A-$k$NN algorithm \ref{alg1}  with  maximum  number of iterations  $T$   and  confidence parameter $\delta$ satisfying \ref{conf_seq}.  If the underlying  distribution satisfies the  label-aligned  condition \eqref{partial_condition} and $a>0$ satisfies 
\begin{align*} 
n \geq \frac{C}{a }\max\{\log(1/a),\log(|\mathcal{Y}|/\delta)\},
    \
\end{align*}
 we have 
   \begin{align*}
      & R(h)-R^* \leq \delta + P(\adv(x) \leq a)
         \end{align*}
with probability at least $1-\delta$.
\end{lemma}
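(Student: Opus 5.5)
The plan is to reduce Lemma \ref{lemma_rates} to Theorem \ref{th1} with a Fubini/Markov argument over the query point. Let $\mathcal{G}_a=\{x\in\mathcal{X}:\adv(x)>a\}$.

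For every $x\in\mathcal{G}_a$ we have $\adv(x)>a$. The function $t\mapsto \frac{C}{t}\max\{\log(1/t),\log(|\mathcal{Y}|/\delta)\}$ is nonincreasing in $t$ on $(0,1]$, so the hypothesis on $n$ implies criterion \eqref{avd_cond} at $x$. Hence, by Theorem \ref{th1}, for each fixed $x\in\mathcal{G}_a$,
\begin{align*}
P_{\text{sample}}\bigl(h(x)\neq h^*(x)\bigr)\leq \delta^2 .
\end{align*}

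Next I decompose the excess risk pointwise. Since $\mathcal{R}(h)-\mathcal{R}^*=\E_x\bigl[P(h^*(x)|x)-P(h(x)|x)\bigr]$ and each integrand lies in $[0,1]$ and vanishes when $h(x)=h^*(x)$, we get
\begin{align*}
\mathcal{R}(h)-\mathcal{R}^*\leq P(\adv(x)\leq a)+ Z,\qquad Z=P_x\bigl(x\in\mathcal{G}_a,\ h(x)\neq h^*(x)\bigr).
\end{align*}
Here $Z$ is a random variable depending on the training sample. By Fubini, $\E_{\text{sample}}[Z]=\int_{\mathcal{G}_a}P_{\text{sample}}(h(x)\neq h^*(x))\,dP(x)\leq\delta^2$. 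Markov's inequality then gives $P_{\text{sample}}(Z>\delta)\leq\delta^2/\delta=\delta$. So with probability at least $1-\delta$ we have $Z\leq\delta$, which yields $\mathcal{R}(h)-\mathcal{R}^*\leq\delta+P(\adv(x)\leq a)$.

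The step I expect to be the main obstacle is the monotonicity/measurability bookkeeping. First, I need to check that $\adv(x)>a$ indeed yields \eqref{avd_cond}. Since $t\log(1/t)$ is increasing only for small $t$, I would verify that $\frac{1}{t}\log\frac1t$ is decreasing on $(0,1]$, which holds because its derivative is $-(\log(1/t)+1)/t^2<0$. Second, I need to check that the joint event $\{h(x)\neq h^*(x)\}$ is measurable in $(x,\text{sample})$ so that Fubini applies. This holds because the algorithm depends on the data only through finitely many distance comparisons and counts. Ties in $h^*$ are handled as in Theorem \ref{th1}: predicting any element of the Bayes argmax incurs zero excess risk.
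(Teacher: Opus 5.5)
Your proposal is correct and follows the paper's own route. You apply Theorem \ref{th1} pointwise on $\{\adv(x)>a\}$, average over $x$ via Fubini to get an expected error mass of at most $\delta^2$, and use Markov's inequality to conclude with probability $1-\delta$; working with the joint probability $Z$ rather than the paper's conditional probability, and explicitly checking that $t\mapsto \frac{C}{t}\max\{\log(1/t),\log(|\mathcal{Y}|/\delta)\}$ is nonincreasing on $(0,1]$ (a step the paper leaves implicit), are only minor refinements of the same argument.
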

\begin{proof} This proof is a simple application of Markov's inequality.

From   Theorem  \ref{th1} we have that  for each $x\in \mathcal{X}$ such that $\adv(x)> a$,  $Pr_n(h(x)\neq h^*(x)) \leq \delta^2 $,
where $Pr_n$ denotes probability over the choice of training points.  Thus,  for $\mathcal{X} \sim P$
$$\E_n \E_{\mathcal{X}}\mathbb{I}\{h(x) \neq h^*(x)| \adv(x) > a\} \leq \delta^2,$$
and by Markov’s inequality:
$$Pr_n [P(h(x)\neq h^*(x)| \adv(x) > a)\geq \delta] \leq \delta. $$Thus, with probability $1-\delta$ over the training examples:
$$P(h(x) \neq h^*(x)| \adv(x) >a)\leq \delta,$$

and we can conclude then that, with probability $1-\delta$ over the training examples,
\begin{align*}
   & R(h)-R^* \leq  P( \adv(x) \leq a)   +P(h(x) \neq h^*(x)| \adv(x) > a)  
    \leq \delta +  P( \adv(x) \leq a).
\end{align*}
\end{proof}

\begin{proof}[Proof of Theorem \ref{rates}]
    In the following, we show
\begin{align*} 
& n\geq \frac{C}{a_n }\max\{\log(1/a_n),\log(|\mathcal{Y}|/\delta)\},
\end{align*}
and such an inequality leads to the result by using Lemma \ref{lemma_rates}.

We have  that $a_n \geq C \frac{2 \log(n)}{n}$ from its definition. If we divide both sides by $\log(1/a_n)$, and since $a_n\geq C \frac{2 \log(n)}{n}$ implies that $n>\frac{1}{a_n}$, we have that  $\frac{a_n}{\log(1/a_n)} \geq \frac{C}{n}$. Therefore,  $n\geq C\frac{\log(1/a_n)}{a_n} $.

We also have that $a_n\geq C\frac{\log(|\mathcal{Y}|/\delta)}{n}$  from its definition, thus we have  that $n\geq C\frac{\log(|\mathcal{Y}|/\delta)}{a_n} $.

Computing the maximum of the previous two inequalities we obtain 
\begin{align*} 
n\geq \frac{C}{a_n }\max\{\log(1/a_n),\log(|\mathcal{Y}|/\delta)\},
\end{align*}
and the proof is concluded.
\end{proof}

\subsection{Proof of Theorem \ref{explicit_rates}}

\begin{proof}
Let $x$  be an instance from $ \mathcal{X}$. Choose   $t_0>0$ such that:
\begin{align*}
    &  P(\mathcal{S}_i|x)-P(\mathcal{S}_j|x) > t_0 & \forall  i \in \argmax_{y\in \mathcal{Y}} P(\mathcal{S}_y|x),  \forall j \notin \argmax_{y\in \mathcal{Y}} P(\mathcal{S}_y|x).
\end{align*}
Since   $P$ is $(\alpha,L)$-smooth,  choosing  $p_0= (\frac{t_0}{4L})^{1/\alpha}$ we have  :
\begin{align*}
   & P(\mathcal{S}_y|(B(x,r_p(x))) \geq  P(\mathcal{S}_y|x) -  L p^\alpha  &  \forall p\leq p_0, \forall y \in \mathcal{Y} \\
   & P(\mathcal{S}_y|(B(x,r_p(x)))  \leq  P(\mathcal{S}_y|x) +  L p^\alpha &  \forall p\leq p_0, \forall y \in \mathcal{Y}
\end{align*}
and therefore :
\begin{align*}
  & P(\mathcal{S}_i|(B(x,r_p(x)))-P(\mathcal{S}_j|(B(x,r_p(x))) \geq P(\mathcal{S}_i|x)-P(\mathcal{S}_j|x) -2 L p^\alpha \geq t_0 /2\\
  & \forall p\leq p_0,  i \in \argmax_{y\in \mathcal{Y}} P(\mathcal{S}_y|x), j \notin \argmax_{y\in \mathcal{Y}} P(\mathcal{S}_y|x).
\end{align*}
It is straightforward then that $\adv(x) >\min\{A, p_0\}\cdot t_0^2/4= \min\{At_0^2/4,\frac{t_0^{2+1/\alpha}}{4(4L)^{1/\alpha}}\}$.

 Since $P$ satisfies the $\beta$-margin condition, 
we have that 
\begin{align*}
     &P(\{ x\in \mathcal{X}: \forall i \in \argmax_{y\in Y} P(\mathcal{S}_y|x) \hspace{2mm} \exists  j \notin \argmax_{y\in Y}  P(\mathcal{S}_y|x) \hspace{2mm} st: P(\mathcal{S}_i|x)- P(\mathcal{S}_j|x) \leq t\}) \leq C_2\cdot t^\beta
     \end{align*}
     and consequently 
\begin{align*}
     &P(\{ x\in \mathcal{X}: \forall i \in \argmax_{y\in Y} P(\mathcal{S}_y|x) \hspace{2mm} \forall  j \notin \argmax_{y\in Y}  P(\mathcal{S}_y|x) \hspace{2mm}  st:P(\mathcal{S}_i|x)- P(\mathcal{S}_j|x) > t\}) \geq 1- C_2 t^\beta.
     \end{align*}
Therefore, using the implication about the advantage derived at the beginning of the proof we have that $P(\adv(x)>\min\{\frac{At_0^2}{4},\frac{t_0^{2+1/\alpha}}{4(4L)^{1/\alpha}}\}) \geq  1- C_2t^\beta  $, so $P(\adv(x)\leq \min\{\frac{At_0^2}{4},\frac{t_0^{2+1/\alpha}}{4(4L)^{1/\alpha}}\}) \leq   C_2 t^\beta  $ for any $t>0$, which  we can rewrite as  $P(\adv(x)\leq a) \leq   C_2'\cdot (a^{\frac{\alpha}{2\alpha+1}})^\beta  $ for any $a \in [0,1]$, where $C_2'=C_2\ ((\frac{4}{A})^{\alpha}4L)^{\frac{1}{2\alpha+1}})^\beta$.

Therefore, we have from Theorem \ref{rates} that  with probability $1-\delta$:
\begin{align*}
      & R(h)-R^* \leq \delta +P(\adv(x)\leq a_n) \leq \delta +  C_3 \left( \frac{1}{n}\max \Bigl\{\log(n),\log(|\mathcal{Y}|/\delta)\Bigl\}\right) ^{ \beta(\frac{\alpha}{2\alpha+1})}
\end{align*}
 where $C_3=(2C)^{\beta(\frac{\alpha}{2\alpha+1})}\cdot C_2'$, which concludes the proof.
\end{proof}

\subsection{Proof of Theorem \ref{consistency}}
\begin{proof}[Proof of Theorem \ref{consistency}]
  Given the sequence of confidence parameters $\{ \delta_n\}_{n=1}^\infty$, we define a sequence of advantage values as in Theorem \ref{rates}:
\begin{align*}
     a_n=\frac{C}{n} \max \bigl\{2\log(n),\log(|\mathcal{Y}|/\delta_n) \bigl\}.
\end{align*}
The conditions established over the sequence $\{ \delta_n\}_{n=1}^\infty$   imply that $a_n\rightarrow0$.

If $\epsilon>0$, by the conditions  established for $\{ \delta_n\}_{n=1}^\infty$ we can choose an $N$ such that $\sum_{n=N}^\infty \delta_n < \epsilon$. Let  $\{ (x_n, s_{n})\}_{n=1}^\infty $ denote a random sequence of training examples. We have by Theorem \ref{rates} that:
\begin{align*}
    &P( \exists n \geq N:  R(h_n)-R^* \geq \delta + P(\adv(x) \leq a_n) ) \\
    &\leq \sum_{n=N}^\infty P(  R(h_n)-R^* \geq \delta + P(\adv(x) \leq a_n) ) \\
    &\leq \sum_{n=N}^\infty \delta_n < \epsilon.
\end{align*}
Thus, with probability at least $1-\epsilon $ over the training sequence  $\{ (x_n, s_{n})\}_{n=1}^\infty $ we have that $ \forall n \geq N$
 $$R(h_n)-R^* \leq \delta_n + P(\adv(x) \leq a_n) $$
Therefore, since  $a_n\rightarrow0$, and   $\lim_{a_n\rightarrow 0}P(\adv(x) \leq a_n)=0$  (see  Lemma \ref{lemma1}) we have  that  $R(h_n) \rightarrow R^*$ almost surely.

 We  now proof the second part of the  theorem, which states that no other   algorithm can achieve  Bayes consistency under  more general scenarios  of  PLL than PL A-$k$NN. We   proof that  statement showing that if an algorithm is  Bayes consistent for a  case where PL A-$k$NN fails, then that  algorithm fails in a case with label-aligned bags, where PL A-$k$NN is Bayes consistent.

Let $P_1$ be a distribution in $\mathcal{X}\times\mathcal{Y}\times\mathcal{S}$ for which PL A-$k$NN is not Bayes consistent. Then $P_1$ that does  not  have label-aligned bags  on a subset of instances 
\[
B = \{x \in \mathcal{X} : \argmax_{y\in\mathcal{Y}} P_1(\mathcal{S}_y|x) \neq \argmax_{y\in\mathcal{Y}} P_1(y|x)\},
\]
with $P_1(B) > 0$, since if $P_1(B)=0$, Theorem \eqref{limit_any_dist} already guarantees that PLA-kNN is Bayes consistent.
 We construct a second distribution $P_2$  with label-aligned bags  that has the  same distribution  over bags than $P_1$ but their associated Bayes  classifiers have different values for each  instance in $B$.

We keep the same instance marginal for both distributions:
\[
P_{2}(x) = P_{1}(x), \quad \forall x \in \mathcal{X}.
\]
For each $x \notin B$, $P_1$ and $P_2$ have the same distributions over labels $P_2(y|x) = P_1(y|x)$ and  bag generation process $P_2(s|y,x) = P_1(s|y,x)$. 
For each $x \in B$, let
\[
y_1 = \argmax_{y \in \mathcal{Y}} P_1(y|x), \qquad 
y_2 = \argmax_{y \in \mathcal{Y}} P_1(\mathcal{S}_y|x),
\]
where it is clear that $y_1 \neq y_2$. 
We   then \emph{flip} the label distribution and bag-generation process in $P_2$  with respect to $P_1$ for  $y_1$ and $y_2$,while keeping all other components equal in both distributions. Specifically,
\begin{align*}   
& P_2(y_1|x) = P_1(y_2|x), \quad P_2(y_2|x) = P_1(y_1|x)\\
&P_2(s|y_1,x) = P_1(s|y_2,x), \quad P_2(s|y_2,x) = P_1(s|y_1,x)\\
& P_2(y|x) = P_1(y|x), \hspace{7mm} 
P_2(s|y,x) = P_1(s|y,x) & \forall s \in \mathcal{S},  \forall y \in \mathcal{Y} \setminus \{y_1,y_2\}.
\end{align*}

For any $x \in \mathcal{X}$ and $s \in \mathcal{S}$,
\[
P_2(s|x) = \sum_{y \in \mathcal{Y}} P_2(y|x) P_2(s|y,x)
         = \sum_{y \in \mathcal{Y}} P_1(y|x) P_1(s|y,x)
         = P_1(s|x),
\]
 as the sum has the same $|\mathcal{Y}|$ components, but in a different order  when $x \in B$; so  it follows that $P_2(x,s) = P_1(x,s)$.  Therefore, $P_1$ and $P_2$ induce \emph{exactly the same distribution} over the observable pairs $(x,s)$; any algorithm that learns only from $(x,s)$-samples cannot distinguish between them. However, by construction, the Bayes-optimal classifiers differs on $B$:
\[
h_{P_1}^*(x) = \argmax_y P_1(y|x) = y_1, \qquad 
h_{P_2}^*(x) = \argmax_y P_2(y|x) = y_2.
\]Moreover,  the distribution $P_2$ has label-aligned bags, since  for any $x \in B$ where the condition did not hold for $P_1$
\[
y_2 = \argmax_{y \in \mathcal{Y}} P_2(y|x), \hspace{2mm} \text{and} \hspace{2mm} 
y_2 = \argmax_{y \in \mathcal{Y}} P_1(\mathcal{S}_y|x)= \argmax_{y \in \mathcal{Y}} P_2(\mathcal{S}_y|x).
\]

 The  Bayes rules corresponding to $P_1$ and $P_2$ are distinct on a set of nonzero probability, as  $P_{1}(B) > 0$ and $h_{P_1}^*(x)\neq h_{P_2}^*(x)$ for any instance in $B$. Since  $P_1(x,s) = P_2(x,s)$, any consistent algorithm would converge to the same rule under both $P_1$ and $P_2$. 
Yet, as Bayes rules for $P_1$ and $P_2$ differ in a set of non-zero measure,  no algorithm can achieve the Bayes risk for both cases.  Therefore, if an algorithm is Bayes consistent for $P_1$ it  can't be consistent for $P_2$, a scenario with label-aligned bags. Hence no algorithm can be Bayes consistent under  strictly more general  scenarios than those of PL A-$k$NN.\end{proof}

\subsection{Proof of Theorem \ref{limit_any_dist}}
Before providing the proof of the theorem we state and prove some technical lemmas.
\begin{lemma} \label{relax_query_rates}
(Query-dependent convergence) There is an absolute  constant $C>0$ for which the following holds. Let $h$ be the classifier from  PL A-$k$NN algorithm \ref{alg1}  with  maximum number of iterations $T$ and  confidence parameter $\delta$  satisfying \eqref{conf_seq}.  If the underlying distribution   satisfies the relaxed label-aligned condition \eqref{relax} for $G\subseteq \mathcal{X}$ and  $\theta$,  the classifier $h$ satisfies the following : for each $x \in G$ such that 
\begin{align*} 
    n \geq \frac{C}{\adv(x) }\max\{ \log(1/\adv(x)),\log(|\mathcal{Y}|/\delta)\},
    \
\end{align*}
we have  that   $h(x)\in\mathcal{Y}^\theta(x)$  with probability at least $1- \delta^2$.

In addition, for each  $x \in  \mathcal{X}\setminus G$ such that 
\begin{align*} 
    n \geq \frac{C}{\adv(x) }\max\{ \log(1/\adv(x)),\log(|\mathcal{Y}|/\delta)\},
    \
\end{align*}
we have that  $h(x)= h^*(x)$  with probability at least $1- \delta^2$.
\end{lemma}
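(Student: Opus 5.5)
The plan is to reuse the proof of Theorem \ref{th1} almost verbatim. That argument never used the label-aligned condition to show that PL A-$k$NN outputs a label in $\argmax_{y\in\mathcal{Y}} P(\mathcal{S}_y|x)$. It used that condition only at the very last step, to identify this set with $\argmax_{y\in\mathcal{Y}} P(y|x)$. So I would split the proof into a purely ``bag-level'' statement, valid for any distribution, followed by a translation step that uses \eqref{relax} separately on $G$ and on $\mathcal{X}\setminus G$.

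\textbf{Step 1 (bag-level claim).} Fix $x$ with $\adv(x)>0$ satisfying the sample-size condition, and let $\mathcal{M}(x)=\argmax_{y\in\mathcal{Y}} P(\mathcal{S}_y|x)$. By Lemma \ref{lemma7} and Lemma \ref{lemma2}, with probability at least $1-\delta^2$ two properties hold simultaneously for all balls centered at $x$: the count property $\#_n(B)\ge k$ whenever $nP(B)\ge k+c_0\max\{k,\log(n/\delta)\}$, and the uniform deviation bound $|P_n(\mathcal{S}_y|B)-P(\mathcal{S}_y|B)|\le \tfrac12\Delta(n,\#_n(B),\delta)$. On this event I would repeat the computation from the proof of Theorem \ref{th1}.

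\begin{itemize}
\item Take $(p,\gamma)$-saliency with $p\gamma^2=\adv(x)$ and set $k=np/(1+c_0)\le T$. At step $k$, every $y_0\notin\mathcal{M}(x)$ is eliminated, because its empirical gap to each $i\in\mathcal{M}(x)$ is at least $\gamma-\Delta\ge\Delta$.
\item For every $k'<k$, saliency gives $P(\mathcal{S}_j|B')<P(\mathcal{S}_i|B')$ for $j\notin\mathcal{M}(x)$ and $i\in\mathcal{M}(x)$. Hence no label outside $\mathcal{M}(x)$ can eliminate a label in $\mathcal{M}(x)$, and at least one element of $\mathcal{M}(x)$ survives.
\end{itemize}

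Together these give $h(x)\in\mathcal{M}(x)$. The degenerate case $\mathcal{M}(x)=\mathcal{Y}$ is trivial, since then $h(x)\in\mathcal{M}(x)$ automatically.

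\textbf{Step 2 (translation).} For $x\in\mathcal{X}\setminus G$, \eqref{relax} gives $\mathcal{M}(x)=\argmax_{y\in\mathcal{Y}} P(y|x)$, so $h(x)=h^*(x)$, exactly as in Theorem \ref{th1}. For $x\in G$, \eqref{relax} gives $\mathcal{M}(x)\subseteq\mathcal{Y}^\theta(x)$, so $h(x)\in\mathcal{Y}^\theta(x)$. Both hold on the same probability-$(1-\delta^2)$ event, with the same constant $C=16c_2^2$.

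\textbf{Main obstacle.} The only delicate point is checking that no step of the Theorem \ref{th1} argument implicitly used label alignment, for instance to handle ties or the survival of a maximizer. I expect it did not: the elimination and survival arguments refer only to $P(\mathcal{S}_y|\cdot)$ and to saliency, which is defined entirely in terms of bag frequencies. A second point to watch is how the algorithm outputs a single label when $|\hat s|>1$ after $T$ steps. For this I would note that, on the good event, the disambiguation rule in Appendix D selects among the surviving labels, all of which lie in $\mathcal{M}(x)$. This is fine because $\mathcal{M}(x)\subseteq\mathcal{Y}^\theta(x)$, respectively $\mathcal{M}(x)=\argmax_{y\in\mathcal{Y}} P(y|x)$, so any survivor has the required property.
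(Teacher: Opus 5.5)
Your proposal is correct and follows the paper's approach. The paper's proof is the one-line remark that the argument of Theorem \ref{th1} carries over unchanged, since that argument only shows $h(x)\in\argmax_{y\in\mathcal{Y}}P(\mathcal{S}_y|x)$, and this set lies in $\mathcal{Y}^\theta(x)$ on $G$ and equals $\argmax_{y\in\mathcal{Y}}P(y|x)$ off $G$. Your split into a bag-level claim and a translation step, including the degenerate case $\mathcal{M}(x)=\mathcal{Y}$ and the selection among surviving labels, spells out what the paper leaves implicit.
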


\begin{proof}
    The proof is strictly analogous to the proof of Theorem \ref{th1}, since for the elements  $x\in G$  we have that $\argmax_{y \in \mathcal{Y}}P(\mathcal{S}_y|x) \subseteq  \mathcal{Y}^\theta(x)$.
\end{proof}

\begin{lemma} \label{relax_rates}
  (Rates of convergence) Let $C$ be the constant from Theorem \ref{th1}, and  $h$ be the classifier from  PL A-$k$NN algorithm \ref{alg1} with  maximum number of iterations $T$ and  confidence parameter $\delta$ satisfying \eqref{conf_seq}.  If the underlying distribution  satisfies the   relaxed  label-aligned condition  \eqref{relax} for $G\subseteq \mathcal{X}$ and  $\theta$ ,  we have 
   \begin{align*}
      & R(h)-R^* \leq \delta + P(\adv(x) \leq a_n) + \theta \cdot P(G)\\
      & \text{where } \hspace{1mm} a_n=\frac{C}{n} \max \bigl\{2\log(n),\log(|\mathcal{Y}|/\delta) \bigl\}.
\end{align*}
with probability at least $1-\delta$.
\end{lemma}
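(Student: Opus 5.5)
The plan is to mirror the proof of Lemma~\ref{lemma_rates} and Theorem~\ref{rates}, with Lemma~\ref{relax_query_rates} in place of Theorem~\ref{th1}.

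The first step is to check that $a_n$ satisfies
\[
n \geq \frac{C}{a_n}\max\{\log(1/a_n),\log(|\mathcal{Y}|/\delta)\}.
\]
This is exactly the computation in the proof of Theorem~\ref{rates}, which uses only the definition of $a_n$. As a consequence, every $x$ with $\adv(x)>a_n$ satisfies the sample-size condition of Lemma~\ref{relax_query_rates}.

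Next, I define the per-instance event $E(x)$. For $x\notin G$ it is $\{h(x)\neq h^*(x)\}$, and for $x\in G$ it is $\{h(x)\notin\mathcal{Y}^\theta(x)\}$. Lemma~\ref{relax_query_rates} gives $Pr_n(E(x))\le\delta^2$ for every $x$ with $\adv(x)>a_n$. Averaging over $x\sim P$ conditioned on $\adv(x)>a_n$ and swapping expectations bounds $\E_n\, P(E(x)\mid \adv(x)>a_n)$ by $\delta^2$. Markov's inequality then shows that, with probability at least $1-\delta$ over the sample, $P(E(x)\mid\adv(x)>a_n)\le\delta$.

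Finally, I decompose the excess risk pointwise:
\[
R(h)-R^* = \E_x\bigl[P(h^*(x)|x)-P(h(x)|x)\bigr].
\]
The integrand is at most $1$ everywhere and vanishes whenever $h(x)=h^*(x)$. When $x\in G$ and $h(x)\in\mathcal{Y}^\theta(x)$, the definition of $\mathcal{Y}^\theta(x)$ gives $P(h^*(x)|x)-P(h(x)|x)\le\theta$. Splitting $\mathcal{X}$ into three pieces handles the rest. The set $\{\adv(x)\le a_n\}$ contributes at most $P(\adv(x)\le a_n)$. The set $\{\adv(x)>a_n\}\cap E(x)$ contributes at most $\delta$. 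The remaining good set contributes $0$ outside $G$ and at most $\theta$ on $G$, hence at most $\theta P(G)$ in total. Summing the three contributions gives the claimed bound with probability at least $1-\delta$.

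The main obstacle is the step inherited from Lemma~\ref{relax_query_rates}: on $G$, the argmax of $P(\mathcal{S}_y|x)$ may contain several labels. One has to make sure that whichever surviving label the algorithm outputs, including through the disambiguation rule after $T$ steps, lies in $\arg\max_y P(\mathcal{S}_y|x)\subseteq\mathcal{Y}^\theta(x)$. This is exactly the ``some optimal label always survives and all non-optimal labels are eliminated'' argument of Theorem~\ref{th1}, which I would take as given. With it, the final output is forced into the argmax set once all suboptimal labels are removed.
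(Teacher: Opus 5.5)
Your proposal is correct and follows the paper's route: the $a_n$ computation from the proof of Theorem~\ref{rates}, the per-instance guarantee of Lemma~\ref{relax_query_rates}, Markov's inequality over the sample, and a split of the excess risk by $\{\adv(x)\le a_n\}$ and by $G$. The only difference is bookkeeping, and it favors your version. The paper applies Markov separately on $G$ and on $\mathcal{X}\setminus G$ and then uses both probability-$(1-\delta)$ events together without a union bound, whereas your single event $E(x)$ needs one application and so actually delivers the stated confidence $1-\delta$.
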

\begin{proof}

    Let $P_G$ denote the probability distribution restricted to the set $G$ and  $R_G(h)$ denote de risk of a classification rule over $P_G$. Then , we have that:
    
   $$ R(h)=  R_G(h)P(G)+ R_{\mathcal{X}\setminus G}(h)(1-P(G)).$$

From the proof of  Theorem \ref{rates},  and Lemma \ref{relax_query_rates}  we have that for  each  $x\in \mathcal{X}\setminus G$ such that  $\adv(x)> a_n$ we have  $Pr_n(h(x)\neq h^*(x)) \leq \delta^2 $,
where $Pr_n$ denotes probability over the choice of training points. Thus, for  $\mathcal{X}\setminus G \sim P_{\mathcal{X}\setminus G}$

$$\E_n \E_{\mathcal{X}\setminus G}\mathbb{I}\{h(x)\neq h^*(x)| \adv(x) > a_n\} \leq \delta^2$$

and by Markov’s inequality:
$$Pr_n [P_{\mathcal{X}\setminus G}(h(x)\neq h^*(x)| \adv(x) > a_n)\geq \delta] \leq \delta. $$

Thus, with probability $1-\delta$ over the training examples:
$$P_{\mathcal{X}\setminus G}(h(x)\neq h^*(x))| \adv(x) >a_n\leq \delta.$$
Therefore
\begin{align*}
      &R_{\mathcal{X}\setminus G}(h)-R_{\mathcal{X}\setminus G}^* \leq  P_{\mathcal{X}\setminus G}( h(x) \neq h^*(x) | \adv(x) > a_n) +P_{\mathcal{X}\setminus G}(\adv(x) \leq a_n)  \\
& \leq \delta +P_{\mathcal{X}\setminus G}(\adv(x) \leq a_n).
\end{align*}

 From the proof of  Theorem \ref{rates} and Lemma \ref{relax_query_rates}  we have that for each $x\in G$ such that $\adv(x)> a_n$ then  $Pr_n(h(x)\notin \mathcal{Y}^\theta(x)) \leq \delta^2 $,
where $Pr_n$ denotes probability over the choice of training points. Thus, for $G \sim P_G$
$$\E_n \E_G\mathbb{I}\{h(x) \notin\mathcal{Y}^\theta(x)| \adv(x)> a_n\} \leq \delta^2$$
and by Markov’s inequality:
$$Pr_n [P_G(h(x)\notin \mathcal{Y}^\theta(x)| \adv(x) > a_n\geq \delta] \leq \delta. $$
Thus with probability $1-\delta$ over the training examples:
$$P_G(h(x)\notin \mathcal{Y}^\theta(x))| \adv(x) >a_n)\leq \delta.$$
Therefore,
\begin{align*}
   & R_G(h)-R_G^* \leq  P_G( \adv(x) \leq a_n) + \theta P_G(h(x)\in \mathcal{Y}^\theta(x)| \adv(x) > a_n) \\ 
   &  +P_G(h(x)\notin \mathcal{Y}^\theta(x)| \adv(x) > a_n)  
    \leq \delta + \theta +  P_G( \adv(x) \leq a_n).
\end{align*}

We then conclude the proof since 
\begin{align*}
   & R(h) -R^*=  (R_G(h)-R_G^*)P(G)+ (R_{\mathcal{X}\setminus G}(h)-R^*_{\mathcal{X}\setminus G})(1-P(G))  \\
    &\leq \delta + \theta P(G) +P_G( \adv(x) \leq a_n) P(G)+P_{\mathcal{X}\setminus G}( \adv(x) \leq a_n)(1-P(G))\\
   &  \leq \delta + P(\adv(x) \leq a_n) +\theta P(G).
\end{align*}
\end{proof}
\begin{proof}[Proof of Theorem \ref{limit_any_dist}]
The proof is  analogous to the proof of Theorem \ref{consistency}, but using the rates from Lemma \ref{relax_rates} instead of the rates from Theorem \ref{rates}.
\end{proof}

\newpage
\section{Additional experimental and implementation details}
\subsection{Experimental results}

\begin{figure*}[!h]
    \centering
    \begin{subfigure}[b]{0.32\textwidth}
        \centering
        \includegraphics[width=\linewidth]{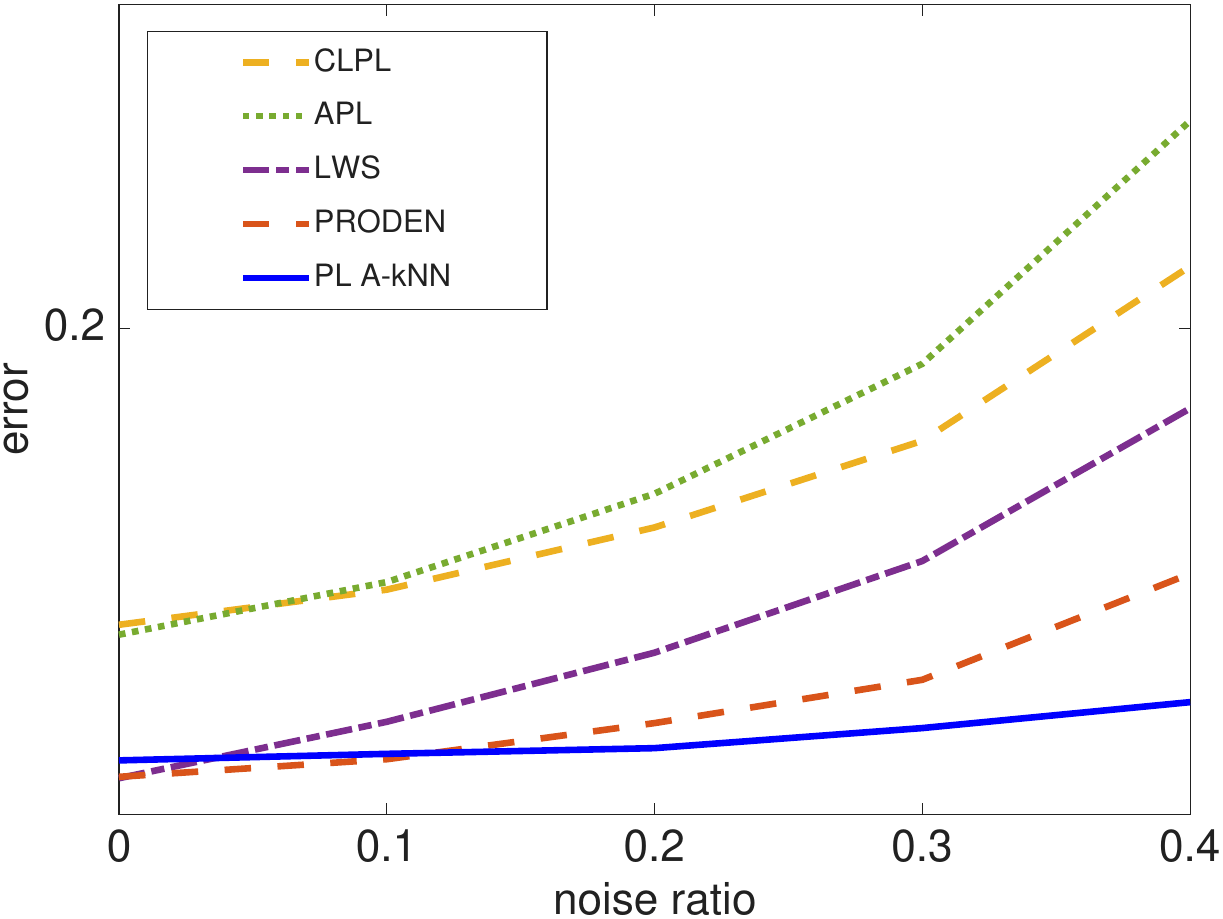}
        \caption{MNIST}
    \end{subfigure}
    \hfill
    \begin{subfigure}[b]{0.32\textwidth}
        \centering
        \includegraphics[width=\linewidth]{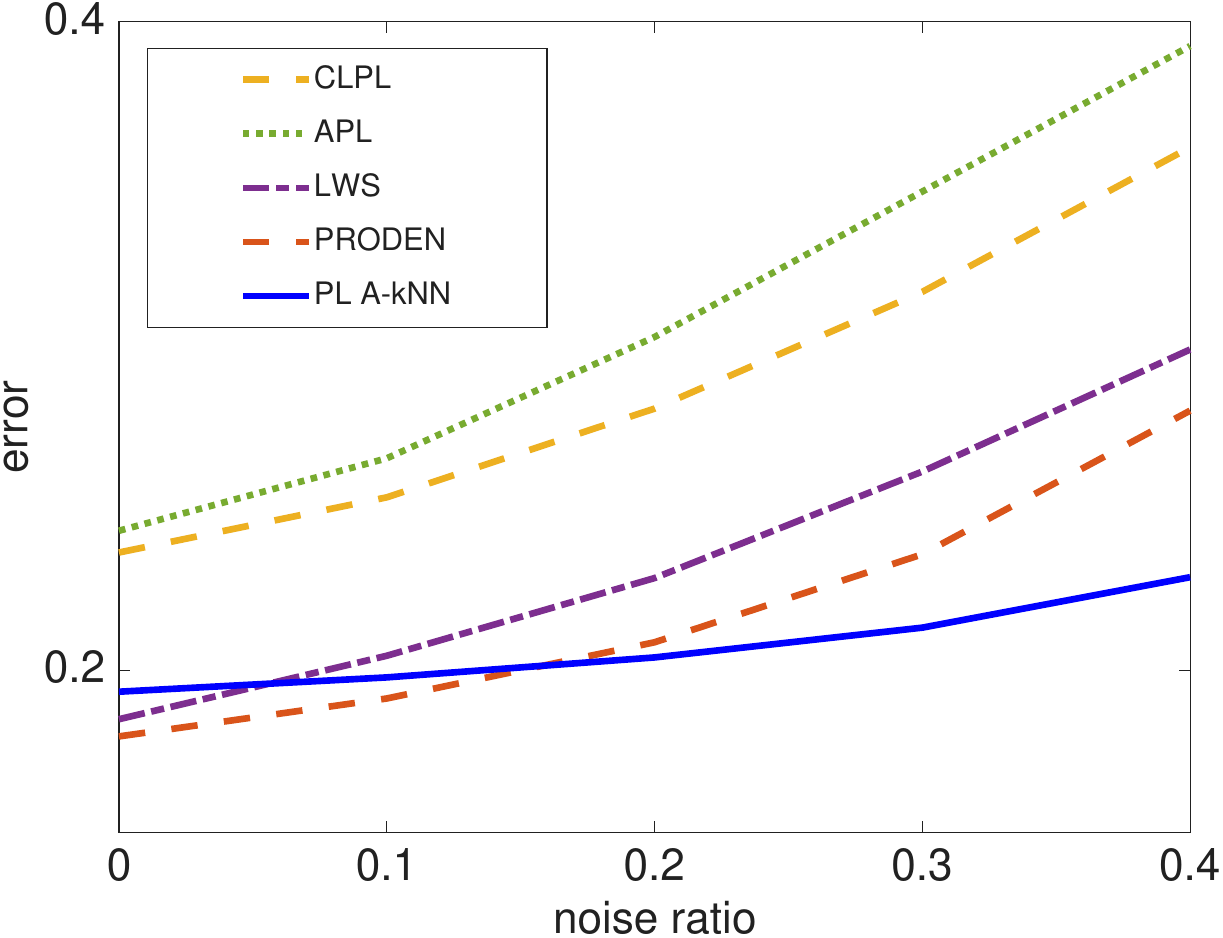}
        \caption{CIFAR-10}
    \end{subfigure}
    \hfill
    \begin{subfigure}[b]{0.32\textwidth}
        \centering
        \includegraphics[width=\linewidth]{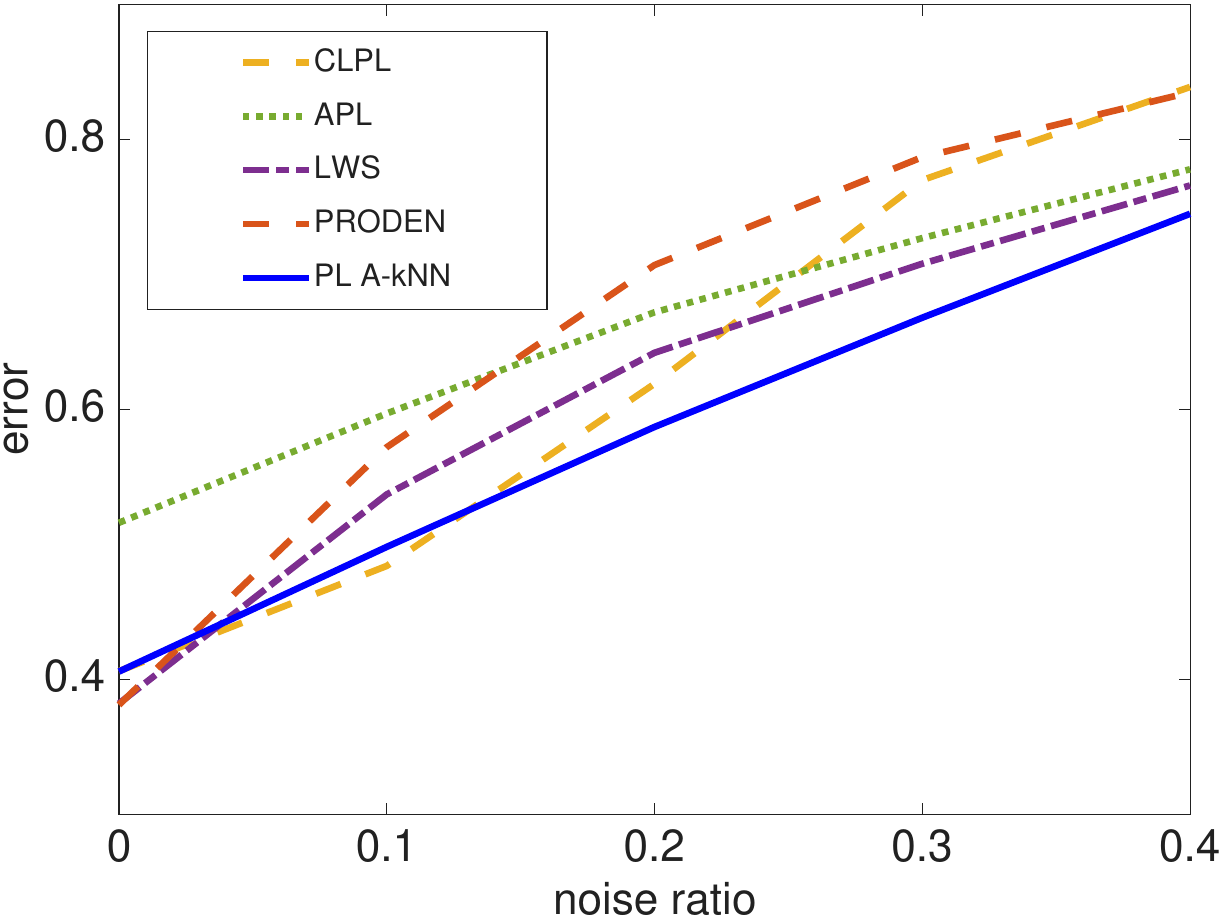}
        \caption{MirFlickr}
    \end{subfigure}
    \caption{Comparison of the error rates of PL A-$k$NN and state-of-the-art methods  for MNIST, CIFAR-10, and MirFlickr under an increasing noise rate. The results show that PL A-$k$NN  outperforms existing approaches across a wide range of noise levels. }
    
\end{figure*}

\begin{figure*}[!h]
    \centering
    \begin{subfigure}[b]{0.32\textwidth}
        \centering
        \includegraphics[width=\linewidth]{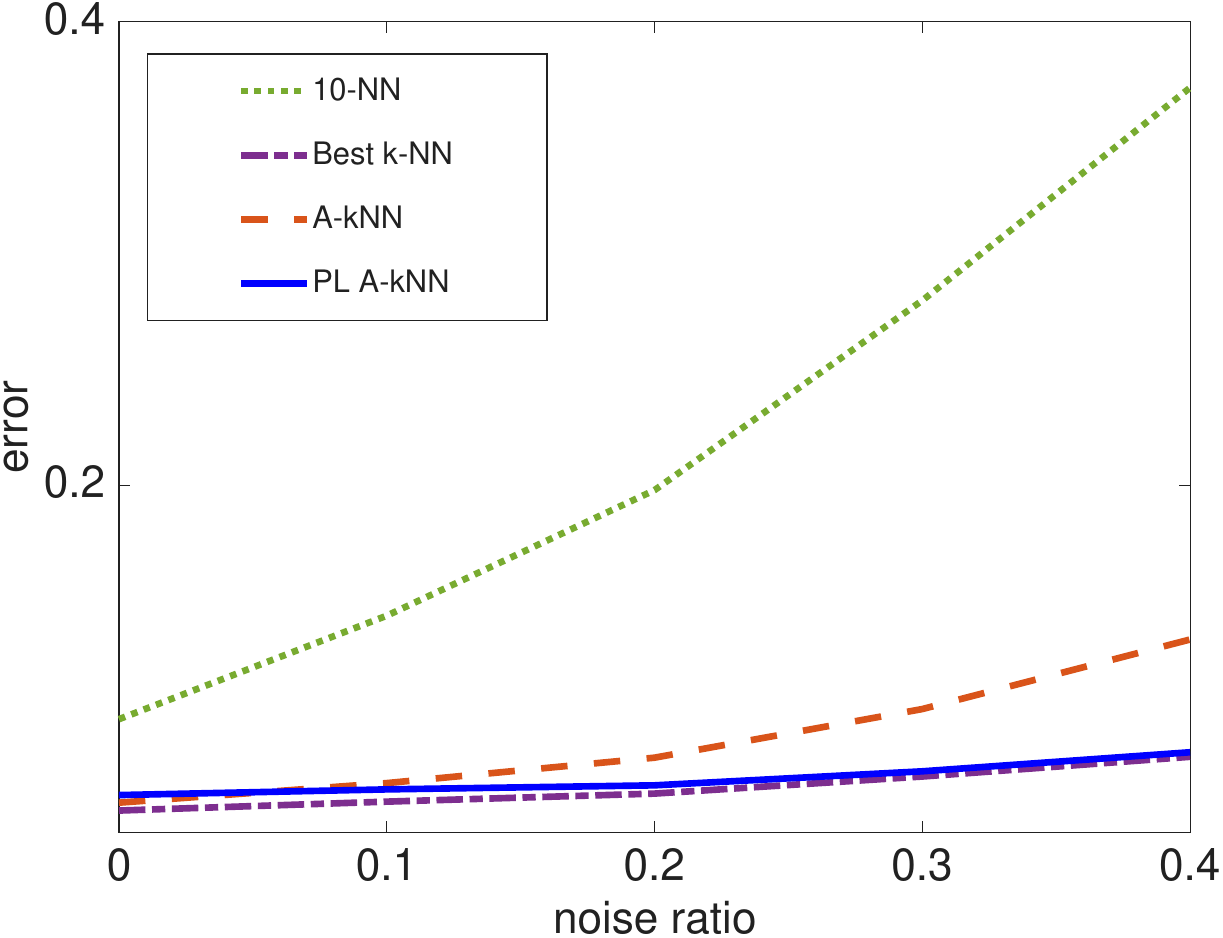}
        \caption{MNIST}
    \end{subfigure}
    \hfill
    \begin{subfigure}[b]{0.32\textwidth}
        \centering
        \includegraphics[width=\linewidth]{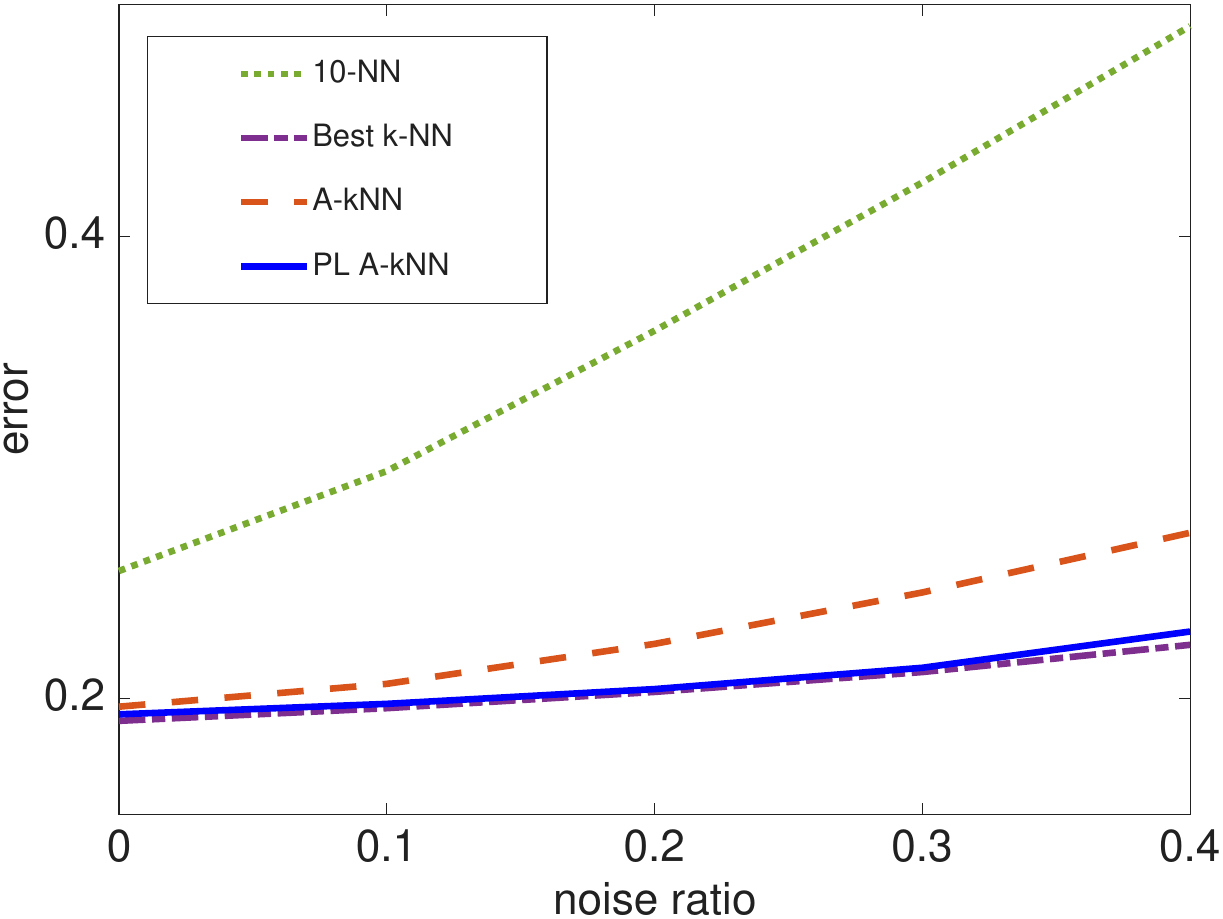}
        \caption{CIFAR-10}
    \end{subfigure}
    \hfill
    \begin{subfigure}[b]{0.32\textwidth}
        \centering
        \includegraphics[width=\linewidth]{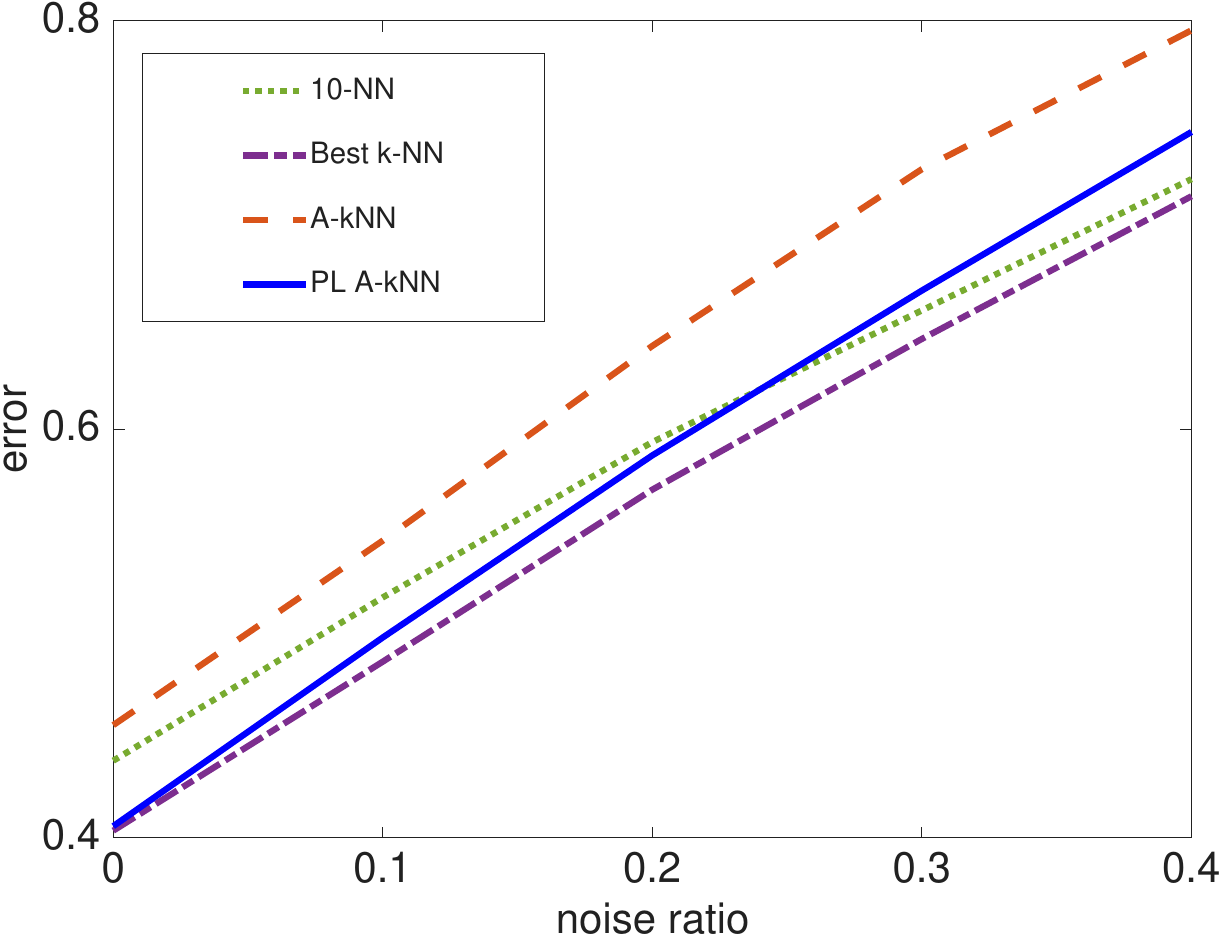}
        \caption{MirFlickr}
    \end{subfigure}
    \caption{Comparison of the error rates of PL A-$k$NN and $k$NN benchmarks for MNIST, CIFAR-10, and MirFlickr under an increasing noise rate. The results show that PL A-$k$NN outperforms $10$-NN and A-$k$NN across a wide range of noise levels, while having comparable performance to the best $k$-NN.}

\end{figure*}

\subsection{\texorpdfstring{Feature Preprocessing for PL A-$k$NN}{Feature Preprocessing for PL-kNN}}
We apply two preprocessing pipelines to obtain compact and
stable representations suitable for nearest-neighbor retrieval: one
designed for the vision benchmarks (CIFAR-10, MNIST, and Fashion-MNIST),
and one tailored for the real-world partially labeled datasets (MirFlickr and
MRSCv2). Both pipelines share a common structure---centering,
$\ell_2$-normalization, Gaussian-weighted KNN smoothing, and local
density point transform---and differ only in that the real-world pipeline
utilizes a signed cube-root transform in place of centering.
The preprocessing implementations are provided in  \href{https://github.com/MachineLearningBCAM/Adaptive-nearest-neighbours-for-partial-labels}{Adaptive Nearest Neighbours Repository}.

\paragraph{Vision datasets (CIFAR-10, MNIST, Fashion-MNIST).}
For the standard vision benchmarks we apply the following pipeline.
\begin{enumerate}
\item \textbf{Centering.}
Features are mean-centered using the training-set mean, which is then
applied without modification to the test set.

\item \textbf{$\ell_2$-normalization.}
The centered features are $\ell_2$-normalized, so that
similarity computations depend only on the direction of each vector.

\item \textbf{Gaussian-weighted KNN smoothing.}
Each feature vector is replaced by a convex combination of itself and
the Gaussian-weighted mean of its $10$ nearest neighbors,
\begin{equation}
    \tilde{\mathbf{x}}_i
    = (1-\alpha)\,\mathbf{x}_i
    + \alpha \sum_{j \in \mathcal{N}(i)} w_{ij}\,\mathbf{x}_j,
    \qquad \alpha = 0.25,
\end{equation}
where the weights $w_{ij} \propto \exp\!\bigl(-d_{ij}^{2}/(2\sigma_i^{2})\bigr)$
use a \emph{local} bandwidth $\sigma_i$ set to the median pairwise distance
from point $i$ to its $10$ neighbors.
The smoothed vectors are re-normalized to unit $\ell_2$-norm.
Test features are smoothed using neighbors found in the
\emph{already-smoothed} training set, and are likewise re-normalized.

\item \textbf{Local density point transform.}
To mitigate hubness in high-dimensional spaces, each feature vector is
rescaled by its local neighbourhood density. Specifically, each point
$\mathbf{x}_i$ is divided by $r_k(i)$, the mean distance to its $k=50$
nearest neighbors,
\begin{equation}
    \tilde{\mathbf{x}}_i = \frac{\mathbf{x}_i}{r_k(i)},
    \qquad
    r_k(i) = \frac{1}{k}\sum_{j \in \mathcal{N}(i)} d_{ij}.
\end{equation}
For test points, $r_k(q)$ is estimated using neighbors found in the
smoothed training set. No $\ell_2$-renormalization is applied after
this step. The transformed vectors
are used directly for Euclidean nearest-neighbor retrieval.
\end{enumerate}

\paragraph{Real-world datasets (MirFlickr, MRSCv2).}
For the two real-world partially labeled benchmarks, which provide
pre-extracted, high-dimensional descriptors, we use a slightly adapted
pipeline. Steps 2--4 are identical to the vision pipeline above, the only difference is
that centering is replaced by a nonlinear transform applied at the very
beginning.
\begin{enumerate}
\item \textbf{Signed cube-root transform.}
A signed cube-root transformation,
\begin{equation}
    x \;\leftarrow\; \operatorname{sign}(x)\,|x|^{1/3},
\end{equation}
is applied element-wise to compress large activations and equalize
feature variance, enhancing the similarity structure of the
representation~\citep{perronnin2010improving}.

\item \textbf{$\ell_2$-normalization, KNN smoothing ($\alpha=0.1$,
$k=10$), and local density point transform ($k=100$)} follow
identically as in steps 2--4 of the vision pipeline.
\end{enumerate}

In both pipelines, all statistics used for centering and bandwidth
estimation are computed exclusively on the training set and applied
without modification to the test set, preventing any leakage of test
information into the preprocessing stage.

\subsection{\texorpdfstring{PL A-$k$NN: Handling multiple partial labels after $T$ iterations}{ PL A-kNN: Handling multiple partial labels After T Iterations}}
In the cases where the set of possible labels $\hat{s}$ of an instance contains more than one label after $T$ iterations, we need a criterion to select a single label from $\hat{s}$. To this end, we adopt a heuristic that selects the label in $\hat{s}$ which was \emph{closest to eliminating} all other possible labels during the iterative process. Specifically, we select
\begin{align*}
   i = \argmin_{y \in \hat{s},\, k \leq T} 
   \sqrt{k} \left( 
   \Delta(n,k,\delta)
   - (P_n(\mathcal{S}_y | B_k(x)) - m_2(k))
   \right),
\end{align*}
where $B_k(x)$ denotes the ball containing the $k$ nearest neighbors of instance $x$, and $m_2(k)$ represents the frequency of the second most frequent label within the bags of the $k$ nearest neighbors. 

\pagebreak
The criterion  presented above measures how close the difference between the frequencies of each of  the labels in $\hat{s}$ and the second most frequent label is to surpassing the decision threshold across an increasing neighborhood size. Note that, if any of the  labels  in $\hat{s}$ actually surpassed the threshold, it would eliminate all other labels from $\hat{s}$. Since the threshold $\Delta$ decreases as $k$ increases (with order $\sqrt{k}$), we scale the difference by $\sqrt{k}$ to have a more fair  comparison across different neighborhood sizes. The detailed pseudocode for this selection criterion is provided in Algorithm~\ref{alg2}. The algorithm implementation can be found in \href{https://github.com/MachineLearningBCAM/Adaptive-nearest-neighbours-for-partial-labels}{Adaptive Nearest Neighbours Repository}.

\begin{algorithm}[!ht]
\caption{\textbf{PL A-$k$NN algorithm with disambiguation criterion}}\label{alg2}
 \textbf{Inputs:}  Instance  $x$   \\ 
 Training examples $\{ (x_l, s_{l})\}_{l=1}^n $ \\
   Maximum iterations $T$  \\
   Confidence parameter $\delta$ \\
\textbf{Output:}  Label  $h(x) $

\begin{algorithmic}[1]
\STATE Set $A=c_1 \sqrt{\log(n) +\log(|\mathcal{Y}|/\delta})$  
\STATE Initialize $\hat{s}=\mathcal{Y}$ and  $k=0$
\STATE Initialize  $\tau_1,\tau_2, \cdots, \tau_{|\mathcal{Y}|} =0$ and $M = \mathbf{0} \in \R^{T \times |\mathcal{Y}|}$ 
\WHILE{$|\hat{s}|>1$ and $k< T $} 
\STATE $k=k+1$
\STATE Find the $k$ nearest neighbor of $x$ and take $l_k$ as its index in $l=1,2,...n$
\STATE $\Delta=\frac{A}{\sqrt{k}}$
\STATE $\tau_y=\tau_y + \mathbb{I}\{ y \in s_{l_k} \}$ \hspace{3mm} $ \forall y\in \mathcal{Y}$

\STATE $\{m_1, m_2\} = \underset{y \in \hat{s}}{\text{max2}}\, \tau_y $\COMMENT{max2  returns the two largest values}
\FOR{$ y \in  \hat{s}$}
\STATE $M(k,y)=\sqrt{k}(\Delta-\frac{\tau_y-m_2}{k})$
\IF{$ \frac{m_1-\tau_y}{k}\geq \Delta $}
\STATE $\hat{s}=\hat{s} \setminus \{y\}$
\ENDIF

\ENDFOR

\ENDWHILE
\STATE Find  $\hat{y} = \underset{y \in \hat{s},\, k \leq T}{\argmin} M(k, y)$
\STATE $h(x)=\hat{y}$

\end{algorithmic}
\end{algorithm}

\end{document}